\DeclareMathOperator*{\argmax}{arg\,max}
\newcommand{\tdgg}{\textsuperscript{\textdagger}}
\newcommand{\ttdgg}{\textsuperscript{\textdaggerdbl}}
\let\oldtimes\times
\def\times{{\mkern1mu\oldtimes\mkern1mu}}
\newlength{\fsz}
    \pgfmathsetlength\pgfutil@tempdima{\pgfkeysvalueof{/pgf/parallelepiped offset x}}
    \pgfmathsetlength\pgfutil@tempdimb{\pgfkeysvalueof{/pgf/parallelepiped offset y}}
    \def\ppd@offset{\pgfpoint{\pgfutil@tempdima}{\pgfutil@tempdimb}}
\pgfplotsset{
  cycle list/Dark2,
}
\colorlet{highlight}{BurntOrange!15}
\setlist{nosep, labelindent=0pt, leftmargin=*}
\def\NAT@spacechar{~}
\newcommand{\bftab}{\fontseries{b}\selectfont}
\newcommand{\bfs}{\bftab}
\DeclareMathOperator{\spos}{{(\mathrm{pos})}}
\DeclareMathOperator{\scol}{{(\mathrm{col})}}
\DeclareMathOperator{\sgrad}{{(\mathrm{grad})}}
\newcommand{\cmark}{\ding{51}}%
\newcommand{\xmark}{\ding{55}}%
\newcommand\nomarkfootnote[1]{%
  \begingroup
  \renewcommand\thefootnote{}\footnote{#1}%
  \addtocounter{footnote}{-1}%
  \endgroup
}
\pgfplotsset{compat=1.18}
\newif\ifsingle
\newcommand{\appref}[1]{%
  \ifsingle
    \ref{#1}%
  \else
    \ref{app-#1}%
  \fi
}
    \newcommand*{\addFileDependency}[1]{
    \typeout{(#1)}
    \@addtofilelist{#1}
    \IfFileExists{#1}{}{\typeout{No file #1.}}
    }\makeatother
\begin{document}

\title{A Spitting Image: Modular Superpixel Tokenization in Vision Transformers} 
\titlerunning{Modular Superpixel Tokenization for ViTs}

\author{
Marius Aasan\inst{1,2}\orcidlink{0000-0003-2353-9984} \and
Odd Kolbjørnsen\inst{1,2,3}\orcidlink{0000-0001-8159-352X} \and
Anne Schistad Solberg\inst{1,2}\orcidlink{0000-0002-6149-971X} \and
Adín Ramirez Rivera\inst{1,2}\orcidlink{0000-0002-4321-9075}
}
\authorrunning{M.~Aasan et al.}
\institute{
University of Oslo, Box 1072 Blindern, 0316 Oslo, Norway
\email{\{mariuaas,anne,oddkol,adinr\}@uio.no}\\
\and
SFI Visual Intelligence, Box 6050 Langnes, 9037 Tromsø, Norway\\ \and
Aker BP ASA, Box 65, 1324 Lysaker, Norway
\email{odd.kolbjornsen@akerbp.com}\\
}

\maketitle

\begin{abstract}
    Vision Transformer (ViT) architectures traditionally employ a grid-based approach to tokenization independent of the semantic content of an image.
    We propose a modular superpixel tokenization strategy which decouples tokenization and feature extraction; a shift from contemporary approaches where these are treated as an undifferentiated whole.
    Using on-line content-aware tokenization and scale- and shape-invariant positional embeddings, we perform experiments and ablations that contrast our approach with patch-based tokenization and randomized partitions as baselines.
    We show that our method significantly improves the faithfulness of attributions, gives pixel-level granularity on zero-shot unsupervised dense prediction tasks, while maintaining predictive performance in classification tasks.
    Our approach provides a modular tokenization framework commensurable with standard architectures, extending the space of ViTs to a larger class of semantically-rich models.
    \keywords{ViT \and Tokenization \and Superpixels \and XAI \and Saliency}
    \protect\nomarkfootnote{Code available at: \url{https://github.com/dsb-ifi/SPiT}}
    \ifsingle
        \protect\nomarkfootnote{To appear in ECCV (MELEX) 2024 Workshop Proceedings.}
    \fi
\end{abstract}

\section{Introduction}
\label{sec:introduction}
Vision Transformers~\citep{origvit} (ViTs) have become the cynosure of vision tasks in the wake of convolutional architectures. 
In the original transformer for language~\citep{origtransformer,bert}, \textit{tokenization} serves as a crucial preprocessing step, with the aim of optimally partitioning data based on a predetermined entropic measure~\citep{bpetokenizer,wordpiece}. 
As models were adapted to vision, tokenization was simplified to partitioning images into square patches. 
This approach proved effective~\citep{swin,cait,deit1,deit3,deepervit,detr}, and soon became canonical; an integral part of the architecture. 

Despite apparent successes, we argue that patch-based tokenization has inherent limitations.
Firstly, the scale of the tokens are rigidly linked to the model architecture by a fixed patch size, ignoring any redundancy in the original images.
These limitations result in a significant increase in computation for larger resolutions, as complexity and memory scales quadratically with the number of tokens.
Moreover, regular partitioning assumes an inherent uniformity of the distribution of semantic content while effectively reducing spatial resolution.

Several works have since leveraged attention maps to visualize class token attributions for interpretability~\citep{dino,dinov2}, which has been exploited in dense prediction tasks~\citep{stego}. 
However, attention maps with square partitions incur a loss of resolution in the patch representation, and subsequently do not inherently capture the resolution of the original images.
For dense predictions with pixel level granularity, a separate decoder for upscaling is required~\citep{segformer,tokencut,segmentanything}. 

\setlength{\fsz}{0.19\linewidth}
\begin{figure}[tb]
\centering
\footnotesize
\begin{tblr}{
  colspec={Q[c,m]Q[c,m]Q[c,m]Q[c,m]Q[c,m]},
  rowsep=2pt,
  colsep=2pt,
  row{1}={font=\scriptsize},
  column{1}={font=\scriptsize},
}
& { Token.\ Image} & \textsc{Att.\ Flow} & \textsc{Proto.\ PCA} &  LIME (SLIC) \\
ViT &
\adjustbox{valign=m}{\includegraphics[width=\fsz]{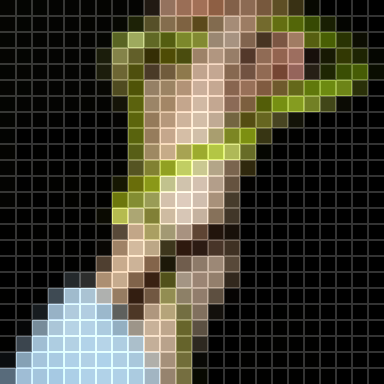}} &
\adjustbox{valign=m}{\includegraphics[width=\fsz]{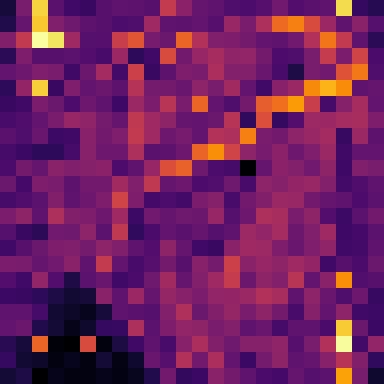}} &
\adjustbox{valign=m}{\includegraphics[width=\fsz]{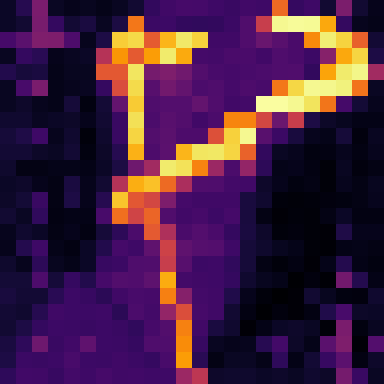}} &
\adjustbox{valign=m}{\includegraphics[width=\fsz]{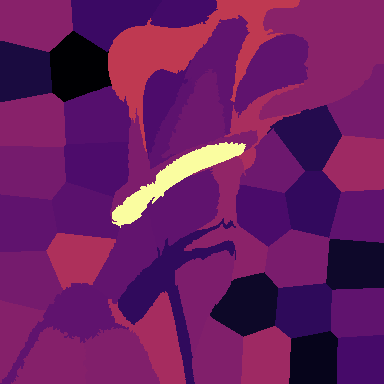}} \\
RViT &
\adjustbox{valign=m}{\includegraphics[width=\fsz]{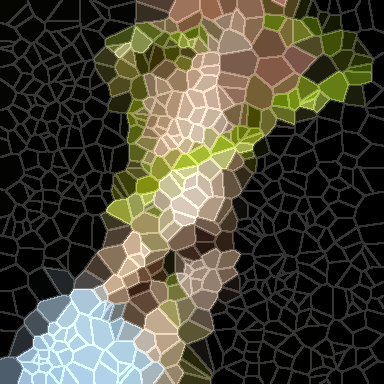}} &
\adjustbox{valign=m}{\includegraphics[width=\fsz]{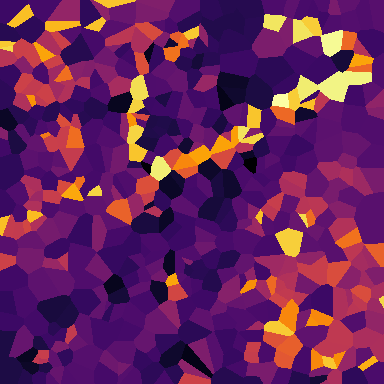}} &
\adjustbox{valign=m}{\includegraphics[width=\fsz]{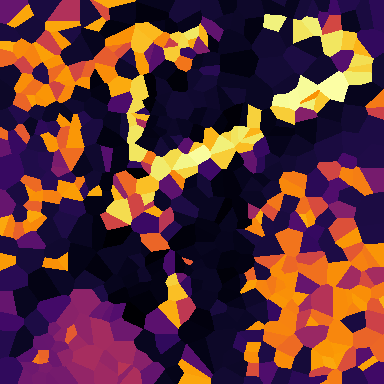}} &
\adjustbox{valign=m}{\includegraphics[width=\fsz]{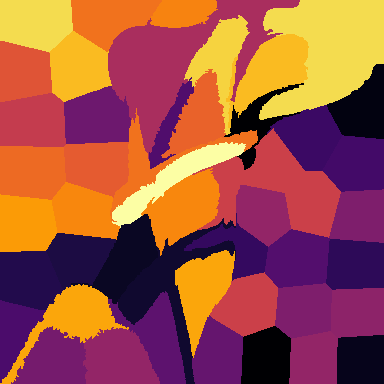}} \\
SPiT &
\adjustbox{valign=m}{\includegraphics[width=\fsz]{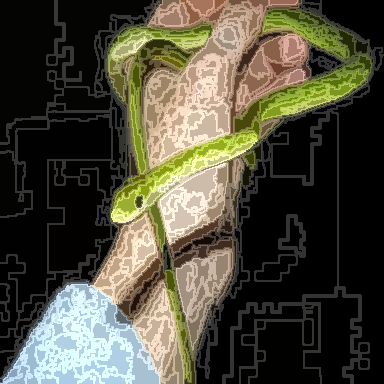}} &
\adjustbox{valign=m}{\includegraphics[width=\fsz]{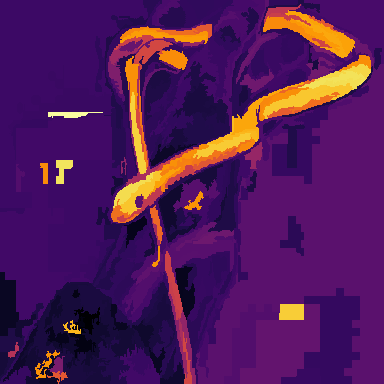}} &
\adjustbox{valign=m}{\includegraphics[width=\fsz]{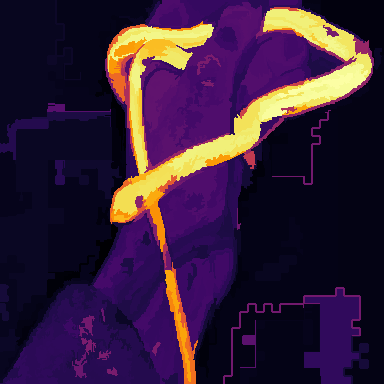}} &
\adjustbox{valign=m}{\includegraphics[width=\fsz]{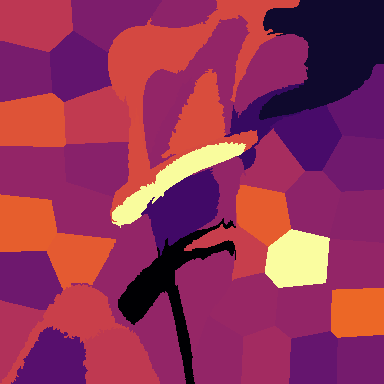}} \\
\end{tblr}
\caption{
Tokenized image and attributions for prediction ``grass snake'' with different tokenizers: square patches (ViT), Voronoi tesselation (RViT) and superpixels (SPiT).  
We show more results in Appendix~\appref{sec:attmaps}.
}
\label{fig:teaser}
\end{figure}

\subsection{Motivation}
\label{subsec:motivation}
We take a step back from the original ViT architecture to re-evaluate the role of patch-based tokenization.
By focusing on a somewhat overlooked component in the architecture, we look to establish image partitioning as the role of an \emph{adaptive modular tokenizer}; an untapped potential in ViTs.

In contrast to square partitions, \textit{superpixels} offer an opportunity to mitigate the shortcomings of patch-based tokenization by allowing for adaptability in scale and shape while leveraging inherent redundancies in visual data.
Superpixels have been shown to align better with semantic structures within images~\citep{superpixeleval}, providing a rationale for their potential utility in vision transformer architectures. 
We compare the canonical square tokenization in standard ViTs with our proposed superpixel tokenized model (SPiT) as well as a control using random Voronoi tokenization (RViT), selected for being well defined mathematical objects for tessellating a plane. 
The three tokenization schemes are illustrated in Fig.~\ref{fig:teaser}, and their innate segmentation capabilities in Fig.~\ref{fig:salient_segmentation}.

\subsection{Contributions}
\label{subsec:contributions}

Our research induces three specific inquiries: \textbf{(a)} \emph{Is a rigid adherence to square patches necessary?}, \textbf{(b)} \emph{What effect does irregular partitioning have on tokenized representations?}, and \textbf{(c)} \emph{Can tokenization schemes be designed as a modular component in vision models?} 
In this work we establish the following;
\begin{itemize}[nosep, left=0pt]
    \renewcommand\labelitemi{$\bullet$}
    \item \textbf{Generalized Framework:} Superpixel tokenization generalize ViTs in a modular scheme, providing a richer space of transformers for vision tasks where \textit{the transformer backbone is independent of tokenization framework}.
    \item \textbf{Efficient Tokenization:} We propose an efficient on-line tokenization approach which provides \textit{competitive training and inference times} as well as \textit{strong performance in classification tasks}.
    \item \textbf{Refined Spatial Resolution:} Superpixel tokenization provides semantically aligned tokens with pixel-level granularity. We demonstrate that our method yields \textit{significantly more faithful attributions compared to established explainability methods}, as well as \textit{strong results in unsupervised segmentation}.
    \item \textbf{Visual Tokenization:} The main contribution of our work is \emph{the introduction of a novel way of thinking about tokenization in ViTs}, an overlooked but central component of the modeling process---\cf discussion in Section~\ref{sec:discussion}.
\end{itemize}
\vspace{5pt}
Our primary objective is to evaluate tokenization schemes for ViTs, underscoring the intrinsic properties of alternative tokenization.
In the interest of a fair comparative analysis, \textit{we perform our study using vanilla ViT architectures and established training protocols}~\citep{howtrainvit}.
Hence, we design experiments to establish a fair comparison against well-known baselines \emph{without architectural optimizations}.
This controlled comparison is crucial for attributing observed disparities specifically to the tokenization strategy, and eliminates confounding factors from specialized architectures or training regimes.

\vspace{4.0pt}
\textbf{Notation:}
We let $H \times W = \big\{(y, x) : 1 \leq y \leq h, 1 \leq x \leq w\big\}$ denote the coordinates of an image of spatial dimension $(h, w)$, and let $\mathcal I$ be an index set for the mapping $i \mapsto (y, x)$.
We consider a $C$-channel image as a signal ${\xi\colon \mathcal I \to \mathbb R^C}$.
We use the vectorization operator $\mathrm{vec}\colon \mathbb{R}^{d_1 \times \dots \times d_n} \to \mathbb{R}^{d_1 \dots d_n}$, and denote function composition by $f(g(x)) = (f \circ g)(x)$.

\section{Methodology}
\label{sec:methodology}

To evaluate and contrast different tokenization strategies, we require methods for partitioning images and extracting meaningful features from these partitions. 
While these tasks can be performed using a variety of deep architectures, such approaches add a layer of complexity to the final model, which would invalidate any direct comparison between tokenization strategies.
Furthermore, this would also complicate any meaningful transfer learning between architectures.
In line with this reasoning, we construct an effective heuristic superpixel tokenizer, and propose an uninvasive feature extraction method which aligns with the canonical ViT architecture, and facilitates direct comparison.

\subsection{Framework}
\label{subsec:framework}

We generalize the canonical ViT architecture by allowing for a modular tokenizer and different methods of feature extraction.
Note that a canonical ViT is generally presented as a three-component system with a tokenizer-embedder~$g$, a backbone~$f$ consisting of a sequence of attention blocks, and a subsequent prediction head~$h$. 
Contrarily, language transformers explicitly decouples $g$ from the backbone $f$.
Following this lead, we note that we can essentially rewrite a patch embedding module as a three component modular system, featuring a tokenizer $\tau$, a feature extractor $\phi$, and an embedder $\gamma$ such that $g = \gamma \circ \phi \circ \tau$, emphasizing that these are inherent components in the original architecture obscured by a simplified tokenization strategy---\cf Fig~\ref{fig:zoomed-pipeline}.
This provides a more complete assessment of the model as a five component feedforward system
\begin{subequations}
    \label{eqn:pipeline}
    \begin{align}
        \Phi(\xi;\theta) &= (h \circ f \circ g)(\xi; \theta), \\
        &= (h \circ f \circ \gamma \circ \phi \circ \tau)(\xi; \theta),
    \end{align}
\end{subequations}
where $\theta$ denotes the set of learnable parameters of the model.
In a standard ViT model, the tokenizer $\tau$ acts by partitioning the image into fixed-size square partitions.
This directly provides vectorized features since patches are of uniform dimensionality and ordering, hence $\phi = \mathrm{vec}$ in standard ViT architectures.
The embedding $\gamma$ is typically a learnable linear layer, mapping features to the embedding dimension of the specific architecture.
Alternatively, $g$ can be taken as a convolution with kernel size and stride equal to the desired patch size $\rho$.

\tikzstyle{vertex}=[circle, draw=black, fill=white, line width=0.85mm, minimum size=25pt, inner sep=0pt]
\pgfplotstableread[col sep=comma]{exampleimggraph.csv}\coltable
\pgfplotstableread[col sep=comma]{edgelistgraph.csv}\edgetable
\pgfplotstableread[col sep=comma]{exampleimgoutlines.csv}\outtable

\begin{figure}[tb]
    \centering
    \begin{minipage}[c]{0.3\columnwidth}
    \resizebox{\columnwidth}{!}{
    \begin{tikzpicture}[
        node distance=0.5cm,
        every node/.append style={font=\footnotesize},
        proc/.style={
            draw,
            rectangle,
            rounded corners,
            minimum width=2em,
            minimum height=2em,
        },
        arrow/.style={
            ->,
            shorten >= 1pt,
            shorten <= 1pt,
        },
    ]
    \node[proc, fill=Salmon!50] (g) {\(g\)};
    \node[proc, fill=SeaGreen!50, right=of g] (backbone) {\(f\)};
    \node[proc, fill=SkyBlue!50, right=of backbone] (head) {\(h\)};
    \draw[arrow] (g) -- (backbone);
    \draw[arrow] (backbone) -- (head);
    \node[proc, fill=Dandelion!50, below=1cm of g, xshift=0cm] (tokenizer) {\(\tau\)};
    \node[proc, fill=Goldenrod!50, right=of tokenizer] (phi) {\(\phi\)};
    \node[proc, fill=GreenYellow!50, right=of phi] (gamma) {\(\gamma\)};
    \draw[arrow] (tokenizer) -- (phi);
    \draw[arrow] (phi) -- (gamma);
    \begin{scope}[on background layer]
    \node[draw, fill=Salmon!50, dashed, rounded corners, fit=(tokenizer) (phi) (gamma), inner sep=5pt] (zoom) {};
    \draw[dashed, rounded corners] (g) -- ++(0, -0.6) -| (zoom.north);
    \end{scope}
    \node[anchor=north, above=0.2cm of g] {ViT Pipeline};
    \node[anchor=north, above=0.2cm of tokenizer] {Modular $g$};
    \end{tikzpicture}
    }
    \caption{Illustration of modular tokenization in ViT architecture.}
    \label{fig:zoomed-pipeline}
    \end{minipage}
    \hfill
    \begin{minipage}[c]{0.55\columnwidth}
    \resizebox{\columnwidth}{!}{
    \begin{tikzpicture}
      \foreach \x in {1,...,5} {
        \foreach \y in {1,...,5} {
         \pgfplotstablegetelem{\y}{R\x}\of\coltable
         \let\redch\pgfplotsretval
         \pgfplotstablegetelem{\y}{G\x}\of\coltable
         \let\greench\pgfplotsretval
         \pgfplotstablegetelem{\y}{B\x}\of\coltable
         \let\bluech\pgfplotsretval
         \pgfmathtruncatemacro{\th}{1.8/3*\redch+1.8/3*\greench+1.8/3*\bluech}
         \pgfplotstablegetelem{\y}{C\x}\of\outtable
         \let\outcol\pgfplotsretval         
         \ifnum\th = 0
            \definecolor{curtxtcol}{rgb}{1,1,1}
         \else
            \definecolor{curtxtcol}{rgb}{0,0,0}
         \fi
         \definecolor{curcol}{rgb}{\redch,\greench,\bluech}
         \node[vertex, color=\outcol, text=curtxtcol, fill=curcol] (n\x-\y) at (1.7*\x, -1.7*\y) {$v^{(1)}_{\y\x}$};
        }
      }
      \foreach \idxcnt in {0,...,39} {
          \pgfplotstablegetelem{\idxcnt}{ux}\of\edgetable
          \let\edgeux\pgfplotsretval
          \pgfplotstablegetelem{\idxcnt}{uy}\of\edgetable
          \let\edgeuy\pgfplotsretval
          \pgfplotstablegetelem{\idxcnt}{vx}\of\edgetable
          \let\edgevx\pgfplotsretval
          \pgfplotstablegetelem{\idxcnt}{vy}\of\edgetable
          \let\edgevy\pgfplotsretval
          \pgfplotstablegetelem{\idxcnt}{d}\of\edgetable
          \let\edgedir\pgfplotsretval
          \pgfplotstablegetelem{\idxcnt}{c}\of\edgetable
          \let\cossimval\pgfplotsretval
          \pgfplotstablegetelem{\idxcnt}{amu}\of\edgetable
          \let\amaxu\pgfplotsretval
          \pgfplotstablegetelem{\idxcnt}{amv}\of\edgetable
          \let\amaxv\pgfplotsretval
          \pgfmathtruncatemacro{\amaxcomb}{\amaxv + \amaxu}
          \pgfplotstablegetelem{\edgeuy}{C\edgeux}\of\outtable
          \let\outcol\pgfplotsretval
          \ifnum\edgedir=0
            \ifnum\amaxcomb > 0
              \draw[color=\outcol,line width=0.85mm] (n\edgeux-\edgeuy) -- (n\edgevx-\edgevy) node [midway, yshift=3mm, rotate=0] {\textcolor{black}{\tiny$\cossimval$}};
            \else
              \draw[dashed] (n\edgeux-\edgeuy) -- (n\edgevx-\edgevy) node [midway, yshift=3mm, rotate=0] {\tiny$\cossimval$};
            \fi
          \else
            \ifnum\amaxcomb > 0
              \draw[color=\outcol,line width=0.85mm] (n\edgeux-\edgeuy) -- (n\edgevx-\edgevy) node [midway, xshift=3mm, rotate=90] {\textcolor{black}{\tiny$\cossimval$}};
            \else
              \draw[dashed] (n\edgeux-\edgeuy) -- (n\edgevx-\edgevy) node [midway, xshift=3mm, rotate=90] {\tiny$\cossimval$};
            \fi
          \fi
      }
    \end{tikzpicture}
    \hspace{1cm}
    \begin{tikzpicture}
      \foreach \x in {1,...,5} {
        \foreach \y in {1,...,5} {
         \pgfplotstablegetelem{\y}{R\x}\of\coltable
         \let\redch\pgfplotsretval
         \pgfplotstablegetelem{\y}{G\x}\of\coltable
         \let\greench\pgfplotsretval
         \pgfplotstablegetelem{\y}{B\x}\of\coltable
         \let\bluech\pgfplotsretval
         \pgfmathtruncatemacro{\th}{1.8/3*\redch+1.8/3*\greench+1.8/3*\bluech}
         \pgfplotstablegetelem{\y}{C\x}\of\outtable
         \let\outcol\pgfplotsretval         
         \ifnum\th = 0
            \definecolor{curtxtcol}{rgb}{1,1,1}
         \else
            \definecolor{curtxtcol}{rgb}{0,0,0}
         \fi         
         \definecolor{curcol}{rgb}{\redch,\greench,\bluech}
         \ifnum \y = 1
            \ifnum \x = 3
                 \node[vertex, color=\outcol, text=curtxtcol, fill=curcol] (n\x-\y) at (1.7*\x-0.4, -1.7*\y-0.8) {$v^{(2)}_{1}$};
             \fi
         \fi
         \ifnum \y = 2
            \ifnum \x = 4
                 \node[vertex, color=\outcol, text=curtxtcol, fill=curcol] (n\x-\y) at (1.7*\x, -1.7*\y-0.8) {$v^{(2)}_{3}$};
             \fi
         \fi
         \ifnum \y = 3
            \ifnum \x = 3
                 \node[vertex, color=\outcol, text=curtxtcol, fill=curcol] (n\x-\y) at (1.7*\x-0.8, -1.7*\y) {$v^{(2)}_{2}$};
             \fi
         \fi
         \ifnum \y = 4
            \ifnum \x = 2
                 \node[vertex, color=\outcol, text=curtxtcol, fill=curcol] (n\x-\y) at (1.7*\x, -1.7*\y-0.8) {$v^{(2)}_{4}$};
             \fi
            \ifnum \x = 5
                 \node[vertex, color=\outcol, text=curtxtcol, fill=curcol] (n\x-\y) at (1.7*\x-0.8, -1.7*\y) {$v^{(2)}_{6}$};
             \fi
         \fi
         \ifnum \y = 5
            \ifnum \x = 4
                 \node[vertex, color=\outcol, text=curtxtcol, fill=curcol] (n\x-\y) at (1.7*\x, -1.7*\y) {$v^{(2)}_{5}$};
             \fi
         \fi
        }
      }
      \draw[dashed] (n3-1) -- (n4-2);
      \draw[dashed] (n3-1) -- (n3-3);
      \draw[dashed] (n4-2) -- (n3-3);
      \draw[dashed] (n4-2) -- (n5-4);
      \draw[dashed] (n3-3) -- (n2-4);
      \draw[dashed] (n2-4) -- (n4-5);
      \draw[dashed] (n5-4) -- (n4-5);
    \end{tikzpicture}
    }
    \caption{Visualization of superpixel aggregation.}
    \label{fig:spaggregate}
    \end{minipage}
\end{figure}

\subsection{Partitioning and Tokenization}
\label{subsec:partitioning}
Tokenization in language tasks involves partitioning text into optimally informative tokens, analogous to how superpixels~\citep{superpixeleval} partition spatial data into discrete connected regions.
Hierarchical superpixels~\citep{superpixelhierarchy,crtrees} are highly parallelizable graph-based approaches suitable for on-line tokenization.
We introduce a novel method that leverages fully parallel aggregation over batches of image graphs at each step $t$, in addition to regularization for size and compactness---\cf Appendix~\appref{sec:preproc-sp}.
Our method yields a variable number of superpixels at each step, adapting dynamically to the complexity of an image.

\subsubsection{Superpixel Graphs:}
Let $E^{(0)} \subset \mathcal I \times \mathcal I$ denote the four-way adjacency edges under $H \times W$.
We consider a superpixel as a set $S \subset \mathcal I$, and we say that $S$ is connected if for any two pixels $p, q \in S$, there exists a sequence of edges in $\big((i_j, i_{j+1}) \in E^{(0)}\big)_{j=1}^{k-1}$ such that $i_1 = p$ and $i_k = q$.
A set of superpixels form a partition $\pi$ of an image if for any two distinct superpixels $S, S' \in \pi$, their intersection $S \cap S' = \emptyset$, and the union of all superpixels is equal to the set of all pixel positions in the image, i.e., $\bigcup_{S \in \pi^{(t)}} S = \mathcal I$.

Let $\Pi(\mathcal I) \subset 2^{2^{\mathcal I}}$ denote the space of all partitions of an image, and consider a sequence of partitions $(\pi^{(t)})_{t=0}^T$.
We say that a partition $\pi^{(t)}$ is a refinement of another partition $\pi^{(t+1)}$ if for all superpixels $S \in \pi^{(t)}$ there exists a superpixel $S' \in \pi^{(t+1)}$ such that $S \subseteq S'$, and we write $\pi^{(t)} \sqsubseteq \pi^{(t+1)}$. Our goal is to construct a $T$-level hierarchical partitioning of the pixel indices ${\mathcal H = \big( \pi^{(t)} \in \Pi(\mathcal I) : \pi^{(t)} \sqsubseteq \pi^{(t+1)} \big)_{t=0}^T}$ such that each superpixel is connected.

To construct $\mathcal H$, the idea is to successively join vertices by parallel edge contraction to update the partition ${\pi^{(t)} \mapsto \pi^{(t+1)}}$.
We do this by considering each level of the hierarchy as a graph $G^{(t)}$ where each vertex $v \in V^{(t)}$ is the index of a superpixel in the partition $\pi^{(t)}$, and each edge $(u, v) \in E^{(t)}$ represent adjacent superpixels for levels $t = 0, \dots, T$.
The initial image can thus be represented as a grid graph ${G^{(0)} = (V^{(0)}, E^{(0)})}$ corresponding to the singleton partition ${\pi^{(0)} = \big\{\{i\} : i \in \mathcal I \big\}}$.

\subsubsection{Weight function:}
To apply the edge contraction, we define an edge weight functional $w_\xi^{(t)}\colon E^{(t)} \to \mathbb R$.
We retain self-loops in the graph to constrain regions by weighting loop edges by relative size.
This acts as a regularizer by constraining the variance of region sizes.
For non-loop edges, we use averaged features $\mu_\xi^{(t)}(v) = \sum_{i \in \pi^{(t)}_v} \xi(i) / \lvert \pi^{(t)}_v \rvert$ and apply a similarity function $\mathrm{sim}\colon E^{(t)} \to \mathbb{R}$.
Loops are weighted using the empirical mean $\mu^{(t)}_{\lvert \pi \rvert}$ and standard deviation $\sigma^{(t)}_{\lvert \pi \rvert}$ of region sizes at level $t$. 
This gives us weights on the form
\begin{align}
    w_\xi(u, v) = \begin{cases}
    \mathrm{sim}\Big(\mu_\xi^{(t)}(u), \mu_\xi^{(t)}(v)\Big), & \text{for $u \neq v$;}  \vspace{0.15cm}\\
    \Big(\lvert \pi^{(t)}_u \rvert - \mu_{\lvert \pi \rvert}^{(t)}\Big) / \sigma_{\lvert \pi \rvert}^{(t)}, & \text{otherwise.}
    \end{cases}
\end{align}
Compactness can optionally be regulated by computing the infinity norm density
\begin{equation}
\delta_\infty(u, v) = \frac{4 (\lvert \pi_u \rvert^{(t)} + \lvert \pi_v \rvert^{(t)})}{\mathrm{per}_\infty(u,v)^2},
\end{equation}
where $\mathrm{per}_\infty$ is the perimeter of the bounding box that encapsulates superpixels $u$ and $v$.
This emphasizes how tightly two neighbouring superpixels $u$ and $v$ are packed in their bounding box, resulting in a regularized weight functional
\begin{equation}
w_\xi^{(t)}(u,v;\lambda) = \lambda \delta_\infty(u,v) + (1 - \lambda)w_\xi^{(t)}(u, v)
\end{equation}
where $\lambda \in [0,1]$ serves as a hyperparameter for compactness.

\subsubsection{Update rule:}
We use a greedy parallel update rule for the edge contraction, such that each superpixel joins with a neighboring superpixel with the highest edge weights, including self-loops for all $G^{(t)}$ for $t \geq 1$.
Let $\mathfrak{N}^{(t)}(v)$ denote the neighborhood of adjacent vertices of the superpixel with index $v$ at level $t$.
We construct an intermediate set of edges, given by
\begin{align}
    \hat E^{(t)} = \bigg(v, \argmax_{u \in \mathfrak{N}^{(t)}(v)} w_\xi(u, v; \lambda) : v \in V^{(t)}\bigg).
\end{align}
Then the transitive closure $\hat E_+^{(t)}$, \ie the connected components of $\hat E^{(t)}$, explicitly yields a mapping ${V^{(t)} \mapsto V^{(t+1)}}$ such that 
\begin{align}
\pi^{(t+1)}_v = \bigcup_{u \in \hat{\mathfrak{N}}_+^{(t)}(v)} \pi^{(t)}_u,    
\end{align}
where $\hat{\mathfrak{N}}_+^{(t)}(v)$ denotes the connected component of vertex $v$ in $\hat E_+^{(t)}$.
This update rule for the partitions ensures that each partition at level $(t+1)$ is a connected region, as it is formed by merging adjacent superpixels with the highest edge weights.
We illustrate the aggregation step in Fig.~\ref{fig:spaggregate}.

\subsubsection{Iterative refinement:}
We repeat the steps of computing aggregation maps, regularized edge weights, and edge contraction until the desired number of hierarchical levels $T$ is reached.
At each level, the partitions become more coarse, representing larger homogeneous regions in the image.
The hierarchical structure provides a multiscale representation of the image, capturing both local and global structures.
At level $T$ we have obtained a sequence of partitions $(\pi^{(t)})_{t=0}^T$, where each partition at level $t$ is a connected region with ${\pi^{(t)} \sqsubseteq \pi^{(t+1)}}$ for all $t$.

We conduct experiments to empirically verify the relationship between the number of tokens produced by varying the steps $T$ and patch size $\rho$ in canonical ViT tokenizers.
Let $N_\mathrm{SPiT}, N_\mathrm{ViT}$ denote the number of tokens for the SPiT tokenizer and ViT tokenizer respectively.
Remarkably, we are able to show with a high degree of confidence that the relationship is $\mathbb{E}(T \mid N_\mathrm{SPiT} = N_\mathrm{ViT}) = \log_2 \rho$, \textit{regardless of image size}.
Details can be found in Appendix~\appref{sec:preproc-sp}.

\subsection{Feature Extraction with Irregular Patches}
While we conjecture the choice of square patches in the ViT architecture to be motivated by simplicity, it is naturally also a result of the challenge posed by the alternative.
Irregular patches are unaligned, exhibit different shapes and dimensionality, and are generally non-convex.
These factors make the embedding of irregular patches to a common inner product space nontrivial.
In addition to consistency and uniform dimensionality, we propose a minimal set of properties any such features would need to capture; \textit{color, texture, shape, scale},  and \textit{position}.

\subsubsection{Positional Encoding:}
ViTs generally use a learnable positional embedding for each patch in the image grid.
Noting that this corresponds to a histogram over positions over a downsampled image (\cf Prop.~\ref{prop:embedding_equiv}) we can extend learnable positional embeddings to handle more complex shapes, scales, and positions by using a kernelized approach.
We propose applying a joint histogram over the coordinates of a superpixel $S_n$ for each of the $n=1,\dots,N$ partitions.
First, we normalize the positions such that $(y', x') \in [-1, 1]^2$ for all $(y', x') \in S_n$.
We decide on a fixed number of bins $\beta$, denoting the dimensionality of our features in each spatial direction using a Gaussian kernel $K_\sigma$ such that
\begin{equation}
    \hat\xi^{\spos}_{n,y,x} = \mathrm{vec}\Bigg(\sum_{(y_j, x_j) \in S_n} K_\sigma (y - y_j, x - x_j) \Bigg),
\end{equation}
typically with low bandwith $\sigma \in [0.01, 0.05]$.
This, in effect, encodes the position of the patch within the image, as well as its shape and scale.

\subsubsection{Color Features:}
To encode the light intensity information from the raw pixel data into our features, we interpolate the bounding boxes of each patch to a fixed resolution of $\beta \times \beta$ using a bilinear interpolation operator, while masking out the pixel information in other surrounding patches.
These features essentially capture the raw pixel information of the original patches, but resampled and scaled to uniform dimensionality.
We refer to the feature extractor $\phi$ as an \textit{interpolating feature extractor}.
Similar to positional and texture features, the RGB features are normalized to $[-1, 1]$ and vectorized such that $\hat\xi^{\scol} \in \mathbb{R}^{3\beta^2}$.

\subsubsection{Texture Features:}
Gradient operators provides a simple robust method of extracting texture information~\citep{textons,hogfeat}.
We use the gradient operator proposed by \citet{scharr} due to improved rotational symmetry and discretization errors. 
We normalize the operator such that $\nabla \xi \in [-1, 1]^{H\times W\times 2}$, where the last dimensions correspond to gradient directions $\nabla y, \nabla x$.
Mirroring the procedure for the positional features, we then construct a joint histogram with a Gaussian kernel over the gradients within each superpixel $S_n$ such that $\hat\xi^{\sgrad}_n \in \mathbb{R}^{\beta^2}$.

The feature modalities are concatenated as $\hat\xi_n = [\hat\xi^{\scol}_n, \hat\xi^{\spos}_n, \hat\xi^{\sgrad}_n] \in \mathbb{R}^{5\beta^2}$.
While our proposed gradient features are commensurable with the canonical ViT architecture, they represent an additional dimension of information.
We therefore ablate the effect of including or omitting gradient features.
For models where these features are omitted, \ie $\hat\xi_n \setminus \hat\xi^{\sgrad}_n = [\hat\xi^{\scol}_n, \hat\xi^{\spos}_n] \in \mathbb{R}^{4\beta^2}$, we say that the extractor $\phi$ is \textit{gradient excluding}.

\subsection{Generalization of Canonical ViT}
\label{subsec:generalization}
By design, our framework acts as a generalization of the canonical ViT tokenization, and is equivalent to applying an canonical patch embedder using a fixed patch size $\rho$ with interpolated gradient excluding feature extraction.

\begin{restatable}[Embedding Equivalence]{proposition}{embedequiv}
    \label{prop:embedding_equiv}
    Let $\tau^*$ denote an canonical ViT tokenizer with a fixed patch size $\rho$, let $\phi$ denote a gradient excluding interpolated feature extractor, and let $\gamma^*, \gamma$ denote embedding layers with equivalent linear projections $L^*_\theta = L_\theta$.
Let $\hat\xi^{\spos} \in \mathbb{R}^{N \times \beta^2}$ denote a matrix of joint histogram positional embeddings under the partitioning induced by $\tau^*$.
Then for dimensions $H = W = \beta^2 = \rho^2$, the embeddings given by $\gamma \circ \phi \circ \tau^*$ are equivalent to the canonical ViT embeddings given by $\gamma^* \circ \phi^* \circ \tau^*$ up to proportionality.
\end{restatable}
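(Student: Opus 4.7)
The plan is to unpack the two compositions $\gamma^{*} \circ \phi^{*} \circ \tau^{*}$ and $\gamma \circ \phi \circ \tau^{*}$ side-by-side on an arbitrary patch $P_{n}$ produced by $\tau^{*}$, and show that each feature modality emitted by $\phi$ reduces, under the dimensional assumption $H = W = \beta^{2} = \rho^{2}$, to either the canonical vectorized patch or to a one-hot position indicator. A block decomposition of the shared linear map $L_{\theta}$ then reassembles these into the canonical embedding plus a learnable positional embedding, up to a global scalar coming from the normalization of the raw pixel range and the kernel mass.

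First I would handle the color stream. Since $\tau^{*}$ yields patches of pixel size $\rho \times \rho$ and $\phi$ interpolates their bounding boxes bilinearly to resolution $\beta \times \beta$, the assumption $\beta = \rho$ makes this interpolation the identity on sample locations; the only difference from $\phi^{*} = \mathrm{vec}$ is the affine rescaling to $[-1,1]^{3\beta^{2}}$. Hence $\hat\xi^{\scol}_{n} = a\,\mathrm{vec}(P_{n}) + b\,\mathbf{1}$ for fixed constants $a,b$, which is exactly the canonical color representation up to proportionality.

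Next I would handle the positional stream. With the image coordinates normalized into $[-1,1]^{2}$ and $\beta = \rho$ bins placed over this square, each canonical patch $P_{n}$ occupies precisely one bin of area $(2/\beta)^{2}$, because the image contains $\rho^{2} = \beta^{2}$ patches laid out on a $\beta \times \beta$ grid. In the low bandwidth regime $\sigma \ll 1/\beta$, the Gaussian kernel $K_{\sigma}$ summed over the $\rho^{2}$ pixel positions inside $P_{n}$ concentrates its mass in the bin indexed by the patch location, so $\hat\xi^{\spos}_{n}$ is proportional to the one-hot vector $e_{n} \in \mathbb{R}^{\beta^{2}}$ (with an error controlled by tail mass, which is what the statement absorbs into ``up to proportionality''). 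Applying a linear map $L_{\theta}^{\spos}$ to $e_{n}$ then returns the $n$-th column of $L_{\theta}^{\spos}$, i.e.\ a learnable vector attached to patch index $n$, which is precisely the canonical learnable positional embedding.

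Finally, since $\phi$ is gradient excluding, the stacked features are $\hat\xi_{n} = [\hat\xi^{\scol}_{n}, \hat\xi^{\spos}_{n}] \in \mathbb{R}^{4\beta^{2}}$, and the assumption $L_{\theta}^{*} = L_{\theta}$ means we can write
\begin{equation*}
  \gamma(\hat\xi_{n}) = L_{\theta}^{\scol}\,\hat\xi^{\scol}_{n} + L_{\theta}^{\spos}\,\hat\xi^{\spos}_{n},
\end{equation*}
where the two blocks act independently. The first summand reproduces $\gamma^{*} \circ \phi^{*}(\tau^{*}(\xi))_{n}$ up to a constant of proportionality (from the color rescaling), and the second summand reproduces the canonical positional embedding, again up to the Gaussian mass constant. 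Together they exhibit the claimed equivalence. The main obstacle is the positional step: one must be careful that ``the bins are aligned with the patch grid'' and that the Gaussian-kernel histogram genuinely behaves as a one-hot up to the proportionality constant; I would make this precise by choosing the bin centers at the patch centers and bounding the cross-bin tail mass of $K_{\sigma}$ uniformly in $n$ so that the residual can be folded into the linear map's bias or the stated proportionality factor.
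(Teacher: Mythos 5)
Your proof is correct and follows essentially the same route as the paper's: show the color stream reduces to the canonical vectorization when $\beta=\rho$ (the paper factors this out as a short Feature Equivalence lemma), show the Gaussian-histogram positional features collapse to a scaled standard basis so that $\hat\xi^{\spos}$ acts as an identity matrix, and then observe that the block of $L_\theta$ acting on $\hat\xi^{\spos}$ plays the role of the canonical learnable positional embedding $Q$ in $\gamma^*(z)=L_\theta z+Q$. Your explicit treatment of the Gaussian tail mass and the color affine rescaling corresponds to what the paper defers to a short remark after its proof; the substance is the same.
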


We provide necessary definitions and proofs for Prop.~\ref{prop:embedding_equiv} in Appendix~\appref{sec:equivalence}, demonstrating that our proposed framework includes the canonical ViT architecture as a special case; an essential property for modularity.

\section{Experiments and Results}
\label{sec:experiments}

We train ViTs with different tokenization strategies (ViT, RViT, SPiT) using base (B) and small (S) capacities on a general purpose classification task on ImageNet~\citep{imagenet} (\textsc{IN1k}). 
We design our experiments with the goal of evaluating the quality of the resulting tokenized representations of the images. 
See details about the training setup in Appendix~\appref{sec:training-details}.

\begin{table*}[tb]
  \sisetup{detect-all,
    uncertainty-separator=\pm,
    table-format=1.3,
  }
  \caption{Accuracy (Top 1) for Base (B) capacity models on classification.} 
  \label{tab:results-main}
  \scriptsize
  \centering
  \begin{tabular}
  {l@{\hspace{.5em}}c@{\hspace{.25em}}c@{\hspace{.25em}}SSSSSSSS}
    \toprule
    \multicolumn{3}{c}{Model} &
    \multicolumn{2}{c}{\textsc{INReaL}} &
    \multicolumn{2}{c}{\textsc{IN1k}} &
    \multicolumn{2}{c}{\textsc{Caltech256}} &
    \multicolumn{2}{c}{\textsc{Cifar100}} \\
    \cmidrule(r){1-3} 
    \cmidrule(r){4-5} 
    \cmidrule(r){6-7} 
    \cmidrule(r){8-9} 
    \cmidrule(r){10-11} 
    Name   & Grad. & {Im./s.\ttdgg} & {Lin.} & {kNN} & {Lin.} & {kNN} & {Lin.} & {kNN} & {Lin.} & {kNN} \\
    \midrule
    ViT-B16      &\xmark&793.04&    .853&    .849&    .802&    .737&    .879&    .879&    .892&    .897\\
    ViT-B16      &\cmark&721.12&    .854&    .844&\bfs.805&    .748&\bfs.889&    .885&\bfs.899&\bfs.899\\
    RViT-B16\tdgg&\xmark&619.86&    .843&    .832&    .788&    .718&    .873&    .882&    .894&    .838\\
    RViT-B16\tdgg&\cmark&585.64&    .841&    .836&    .789&    .725&    .864&    .861&    .888&    .762\\
    SPiT-B16     &\xmark&690.72&    .793&    .818&    .760&    .569&    .833&    .829&    .813&    .634\\
    SPiT-B16     &\cmark&640.59&\bfs.858&\bfs.853&    .804&\bfs.752&    .888&\bfs.891&    .884&    .845\\
    \bottomrule
    \multicolumn{11}{l}{\tiny \tdgg Uncertainty measures for RViT are detailed in Appendix Table~\appref{tab:voronoi_uncertainty}.} \\
    \multicolumn{11}{l}{\tiny \ttdgg Median throughput over full training with $4\times$ MI250X GPUs using \texttt{float32} precision.} \\
  \end{tabular}
\end{table*}

\subsection{Classification}
\label{subsec:classification}

We evaluate the models by fine-tuning on \textsc{Cifar100}~\citep{cifar} and \textsc{Caltech256}~\citep{caltech256}, in addition to validation using the \textsc{INReaL} labels~\citep{imagenetreal}, ablating the effect of gradient features.
We also evaluate our models by replacing the linear classifier head with a k-nearest neighbours (kNN) classifier over the representation space of different models, focusing solely on the clustering quality of the class tokens in the embedded space~\cite{dino,dinov2}.
Table~\ref{tab:results-main} gives an overview of the results.
We include results for the Small (S) capacity models in Table~\appref{tab:results-small}.

Our results show that ViTs with superpixel tokenization can be effectively trained for classification tasks.
For models with gradient texture features, superpixel tokenization performs comparably to square partitioning, noting that superpixel tokenization with gradient excluding feature extraction underperforms.
We conjecture that this is likely due to high irregularity in regions, and confirms our conjecture that gradient features can compensate for loss of information from interpolation.
Our findings in Section~\ref{subsec:generalization} also supports this.

When comparing validation results, we note that SPiT performs better than the ViT over \textsc{INReaL}. 
This indicates that the model is more robust to label noise or localized-multiclass tasks, and likely generalizes better in real-world scenarios.
This is further evident by the fact that SPiT performs better with kNN classification for higher resolution images in \textsc{IN1k} and \textsc{CalTech256} than the ViT model.
We note that square tokens perform better on \textsc{Cifar100}. \emph{This is to be expected} as quantization artifacts from low resolution images persist under upscaling, favoring square patches.

Overall, our results indicates that SPiT with gradient features outperforms the vanilla ViT in classification tasks. 
However, when including our proposed gradient features in the standard ViT, \emph{the results are not significant enough to claim a clear benefit on general purpose classification tasks}.
We emphasize that \emph{comparable performance is a positive result}, since our focus is on demonstrating the feasibility of modular superpixel tokenization as a new research direction for vision transformers.
For more details, see Appendix~\appref{sec:ext-discussion}.

\subsection{Evaluating Tokenized Representations}
\label{sec:dense_evaluation}
To evaluate the cohesive quality of the tokenized representations, we look to quantify the \emph{faithfulness of attributions}, and the model's performance on \emph{zero-shot unsupervised segmentation}.
These were selected to give insight into the embedded context of the tokenized representation of the image.

\subsubsection{Faithfulness of Attributions:}
One of the attractive properties of ViTs is the inherent interpretability provided by their attention mechanisms.
Techniques such as attention rollout~\citep{origvit,dino}, attention flow~\citep{attrolloutflow}, and PCA projections~\citep{dinov2} have been leveraged to visualize the reasoning behind the model's decisions.
Unlike gradient-based attributions, which often lack clear causal links to model predictions~\citep{sanitysaliency}, attention based attributions are intrinsically connected to the flow of information in the model, and provide direct insight into the decision-making process in an interpretable manner. 
They are, however, constrained by the granularity and semantic alignment of the original tokenization scheme.
Classical methods such as LIME~\citep{lime} provides a well-established counterfactual framework for post-hoc explainability with superpixel partitions using Quickshift~\citep{quickshift} or SLIC~\citep{slic} with local linear surrogate models.

\begin{table*}[tb]
  \sisetup{detect-all,
    uncertainty-separator=\pm,
    table-format=1.3(3),
    uncertainty-mode=separate,
  }
  \caption{Faithfulness of Attributions, w. CI (95\%).}
  \label{tab:interpret_quant}
  \scriptsize
  \centering
  \begin{tabular}{lSSSSSS}
    \toprule
    \multicolumn{1}{c}{} & \multicolumn{2}{c}{{ViT-B16 (\textsc{IN1k})}} & \multicolumn{2}{c}{{RViT-B16 (\textsc{IN1k})}} & \multicolumn{2}{c}{{SPiT-B16 (\textsc{IN1k})}} \\
    \cmidrule(r){2-3} \cmidrule(r){4-5} \cmidrule(r){6-7}
    {} &
    $\textsc{Comp} \uparrow$ & $\textsc{Suff} \downarrow$ &
    $\textsc{Comp} \uparrow$ & $\textsc{Suff} \downarrow$ &
    $\textsc{Comp} \uparrow$ & $\textsc{Suff} \downarrow$ \\
    \midrule
    \textsc{LIME/SLIC} &\bfs\cellcolor{Dark2-C!15} .244(.004) &\bfs\cellcolor{Dark2-C!15}.543(.006) &  
                        \bfs\cellcolor{Dark2-C!15}.236(.004) &\bfs\cellcolor{Dark2-C!15}.591(.007) &  
                         \cellcolor{Dark2-C!15}.244(.005) &\bfs\cellcolor{Dark2-C!15}.520(.006) \\ 
    \textsc{Att.Flow}  & \cellcolor{Dark2-B!15}.160(.004) & \cellcolor{Dark2-B!15}.664(.006) &  
                         \cellcolor{Dark2-B!15}.223(.005) & \cellcolor{Dark2-B!15}.685(.007) &  
                        \bfs\cellcolor{Dark2-A!15}.259(.006) & \cellcolor{Dark2-B!15}.558(.006) \\ 
    \textsc{Prot.PCA}  & \cellcolor{Dark2-B!15}.206(.005) & \cellcolor{Dark2-B!15}.710(.006) &  
                         \cellcolor{Dark2-B!15}.209(.005) & \cellcolor{Dark2-B!15}.691(.007) &  
                         \cellcolor{Dark2-A!15}.256(.005) & \cellcolor{Dark2-B!15}.592(.006) \\ 
    \bottomrule
    \multicolumn{7}{l}{\tiny \textbf{Color coding:} \color{Dark2-C}{baseline}, \color{Dark2-B}{weaker than baseline}, \color{Dark2-A}{stronger than baseline}.}
  \end{tabular}
\end{table*}

\begin{table*}[tb]
  \colorlet{downcolor}{black!55}
  \sisetup{detect-all,
    uncertainty-separator=\pm,
    table-format=1.3,
    uncertainty-mode=separate,
  }
  \caption{Results for unsupervised salient segmentation with TokenCut. Models using additional postprocessing are included for completeness are colored in \textcolor{downcolor}{gray}.} 
  \label{tab:salient_segmentation}
  \scriptsize
  \centering
  \begin{tabular}{l@{ }@{ }c@{ }@{ }SSSSSSSSS}
    \toprule
    & & \multicolumn{3}{c}{\textsc{ECSSD}} & \multicolumn{3}{c}{\textsc{DUTS}} & \multicolumn{3}{c}{\textsc{DUT-OMRON}} \\
    \cmidrule(r){3-5} \cmidrule(r){6-8} \cmidrule(r){9-11}
    Model & Postproc. & {$\max F_\beta$}  & {IoU} & {Acc.} & {$\max F_\beta$}  & {IoU} & {Acc.} & {$\max F_\beta$}  & {IoU} & {Acc.} \\
    \midrule
    \color{downcolor}DINO-B14\tdgg & \color{downcolor}\cmark
    & \color{downcolor}0.874 & \color{downcolor}0.772 & \color{downcolor}\bfs0.934
    & \color{downcolor}0.755 & \color{downcolor}0.624 & \color{downcolor}\bfs0.914
    & \color{downcolor}0.697 & \color{downcolor}\bfs0.618 & \color{downcolor}\bfs 0.897\\
    DINO-B14\tdgg & \xmark
    & 0.803 & 0.712 & 0.918
    & 0.672 & 0.576 & 0.903
    & 0.600 & 0.533 & 0.880 \\
    SPiT-B16 & \xmark
    & \bfs0.903 & \bfs0.773 & \bfs0.934
    & \bfs0.771 & \bfs0.639 & 0.894
    & \bfs0.711 & 0.564     & 0.868\\
    \bottomrule
    \multicolumn{11}{l}{\tiny \tdgg As reported by \citet{tokencut}.}
  \end{tabular}
\end{table*}

To quantify the faithfulness of interpretations under different tokenization strategies, we compute the attention flow of the model in addition to PCA projected features and contrast this with attributions from LIME with independently computed SLIC superpixels, and measure faithfulness using \textit{comprehensiveness} (\textsc{Comp}) and \textit{sufficiency} (\textsc{Suff})~\citep{erasercompsuff}. 
These metrics have been shown to be the two strongest quantitative measures of attributions for transformers~\citep{faithfulness}.
See Appendix~\appref{sec:attmaps} for details.

The results in Table~\ref{tab:interpret_quant} suggests that predictions extracted from the attention flow and PCA using the SPiT model provide \textit{better comprehensiveness scores} than interpretations from LIME, indicating that SPiT models produce attributions that more effectively exclude irrelevant regions of the image. 
A one-sided $t$-test confirms that the improvement in comprehensiveness between \textsc{Att.Flow} and LIME for the SPiT model is statistically significant.\footnote{One-sided $t$-test (\textsc{Att.Flow} $>$ LIME): $(t=6.54, p < 10^{-10}, \mathrm{df}=49664)$.}
Contrarily, our results show that interpretations extracted from the ViT and RViT models are \emph{less faithful to the predictions than interpretations} procured with LIME\@.
Furthermore, we note that the sufficiency score for SPiT models are closer to the baseline LIME interpretations than what we observe for the ViT, indicating that the interpretations from SPiT model captures the most essential features better than a canonical ViT\@.
Figs.~\ref{fig:teaser}, \appref{fig:attention-maps}, \appref{fig:attention-maps-2}, \appref{fig:occlusion}, and \appref{fig:occlusion2} shows that the granularity of superpixel tokens provide interpretations that closely align with the semantic content of the image.

\definecolor{my-yellow}{RGB}{255,255,191}
\definecolor{my-orange}{RGB}{252,141,89}
\definecolor{my-blue}{RGB}{145,191,219}
\setlength{\fsz}{0.16\linewidth}
\begin{figure}[tb]
    \centering
    \begin{tabular}{c@{\,\,}@{}c@{\,\,}@{}c@{\,\,}@{}c}
        \includegraphics[width=\fsz]{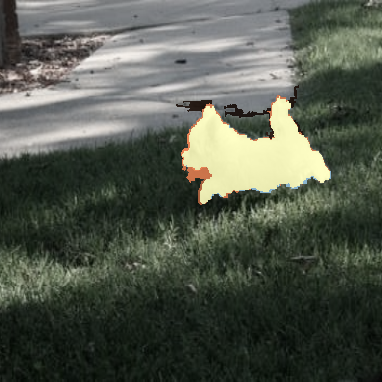}&
        \includegraphics[width=\fsz]{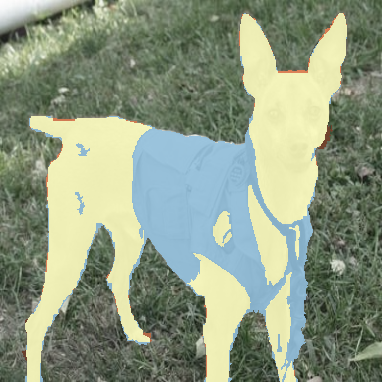}&
        \includegraphics[width=\fsz]{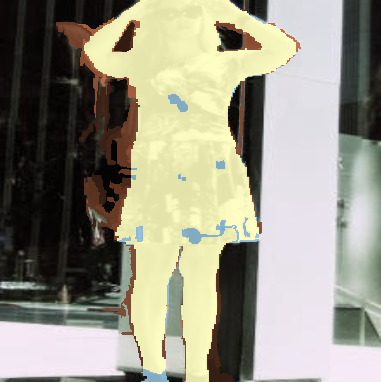}&
        \includegraphics[width=\fsz]{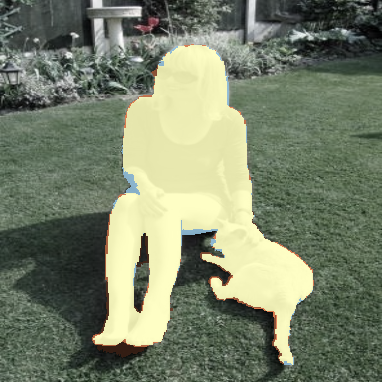} \\
        \includegraphics[width=\fsz]{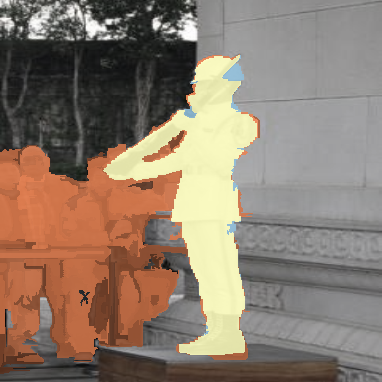}&
        \includegraphics[width=\fsz]{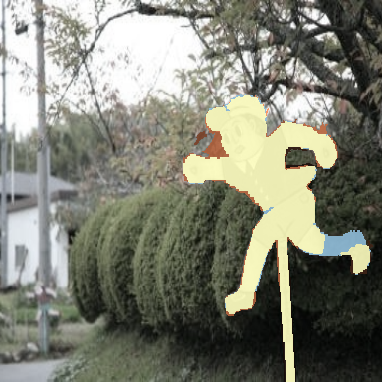}&
        \includegraphics[width=\fsz]{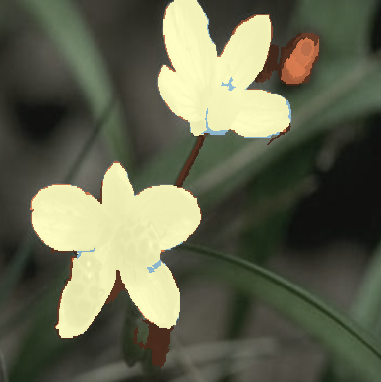}&
        \includegraphics[width=\fsz]{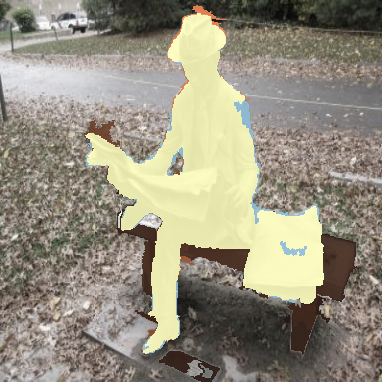} \\
        \multicolumn{4}{r}{\tiny{\colorbox{my-yellow}{\textcolor{black}{True positives}} \colorbox{my-orange}{\textcolor{black}{False positives}}  \colorbox{my-blue}{\textcolor{black}{False negatives}}}}
    \end{tabular}
    \caption{Non-cherry picked samples (\texttt{\{0257..0264\}.jpg}) of unsupervised zero-shot segmentation results on \textsc{ECSSD}\@.}
    \label{fig:salient_segmentation}
    \vspace{-10pt}
\end{figure}

\subsubsection{Unsupervised Segmentation:}
Superpixels have historically been applied in dense prediction tasks such as segmentation and object detection~\citep{spcrfseg,spobjdet} as a lower-dimensional prior for dense prediction tasks.
To evaluate our tokens, we are particularly interested in tasks for which the outputs of the pre-trained model can be leveraged directly, without the addition of a downstream decoder.
\citet{tokencut} propose an unsupervised methodology for extracting salient segmentation maps for any transformer model using normalized graph cut~\citep{normalizedcut}.
We conduct experiments extending this well-established method to showcase preliminary out-of-the-box capabilities on dense prediction tasks, with details of the experimental setup in Appendix~\appref{sec:segmentation}.

Table~\ref{tab:salient_segmentation} shows results for the ECSSD~\citep{ecssd}, DUTS~\citep{duts} and {DUT-OMRON}~\citep{dutomron} datasets, and demonstrates that SPiT compares favorably to DINO~\citep{dino} under the TokenCut framework, \emph{notably without any form of postprocessing}.
The results indicate that our tokenizer has strong semantic alignment with image content, and that our proposed framework is capable of dense predictions without learnable tokenization.
We use the same metrics as the original TokenCut framework; for $\max F_\beta$ we set $\beta = 1/3$ and take the maximum $F$-score over 255 uniformly sampled thresholds.
A series of non-cherry picked results are featured in Fig.~\ref{fig:salient_segmentation}.

\subsection{Ablations}

\subsubsection{Tokenizer Generalization:}
\label{subsec:tokenizer_generalization}
In in Section~\ref{subsec:generalization} we showed that our framework generalizes the canonical ViT\@.
This allows us to contrast different tokenization strategies across models by directly swapping tokenizers, emphasizing the modularity of our framework.
We report the relative change in accuracy ({$\Delta$ Acc.}) of models when swapping tokenizers in Table~\ref{tab:tokenizer_generalization}.

Our results show that ViTs with square tokenization performs poorly when evaluated on irregular patches.
We observe an increase in accuracy for RViT models when evaluated over square patches.
Furthermore, we see that the SPiT models also generalize well to both to square and Voronoi tokens, but is highly dependent on the gradient features.
With gradient features, we note a minor drop in accuracy when evaluating Voronoi tokens with SPiT, and superpixel tokens with RViT.
This supports our conjecture that gradient features help encode texture information for irregular patches.

\begin{table}[tb]
  \centering%
  \begin{minipage}[t]{0.49\linewidth}%
    \centering%
    \caption{Tokenizer Generalization.}%
    \label{tab:tokenizer_generalization}%
    \scriptsize%
    \begin{tabular}{l@{ }@{}S@{ }@{}S@{ }@{}S@{ }@{}S}
      \toprule
      \multicolumn{2}{c}{} & \multicolumn{3}{c}{$\Delta$ Acc. $\uparrow$ (\textsc{IN1k})} \\
      \cmidrule(r){3-5}
      Model &
      {Grad.} &
      {ViT} &
      {RViT} &
      {SPiT} \\
      \midrule
      ViT-B16 & \xmark &\color{gray}.000&    -.551&    -.801\\
      ViT-B16 & \cmark &\color{gray}.000&    -.494&    -.798\\
      RViT-B16& \xmark &\bfs.006&\color{gray}.000&    -.593\\
      RViT-B16& \cmark &    .003&\color{gray}.000&\bfs-.163\\
      SPiT-B16& \xmark &   -.407&    -.464&\color{gray}.000\\
      SPiT-B16& \cmark &   -.200&\bfs-.063&\color{gray}.000\\
      \bottomrule
    \end{tabular}%
  \end{minipage}
  \begin{minipage}[t]{0.49\linewidth}%
    \centering%
    \caption{Superpixel Evaluation.}%
    \label{tab:superpixel_eval}%
    \scriptsize%
    \begin{tabular}{l@{}@{ }c@{}@{ }c@{}@{ }c@{}@{ }c@{}@{ }c@{}}
      \toprule
      \multicolumn{1}{c}{} & \multicolumn{2}{c}{\textsc{BSDS500}} & \multicolumn{2}{c}{\textsc{SBD}} & \multicolumn{1}{c}{Time} \\
      \cmidrule(r){2-3} \cmidrule(r){4-5} \cmidrule(r){6-6}
      {} & $R^2\hspace{-2pt}\uparrow$ & $\mathbb \lvert \pi \rvert\downarrow$ & $R^2\hspace{-2pt}\uparrow$ & $\lvert \pi \rvert\downarrow$ & s/Im. $\downarrow$ \\
      \midrule
      ETPS\tdgg     & 0.924 & 651.0 & 0.955 & 648.1 &  0.3268 \\
      SEEDS\tdgg    & 0.901 & 670.6 & 0.944 & 644.9 &  0.4501 \\
      SLIC\tdgg     & 0.847 & 575.3 & 0.897 & 592.2 &  0.0729 \\
      Watershed\tdgg& 0.803 & 608.1 & 0.871 & 641.1 &  0.0038 \\
      SPiT          & 0.914 & 595.0 & 0.948 & 570.2 &  0.0047 \\
      \bottomrule
      \multicolumn{6}{l}{%
      \tiny \tdgg As reported by \citet{superpixeleval}.
      }\\
    \end{tabular}%
  \end{minipage}%
\end{table}

\setlength{\fsz}{0.165\linewidth}
\begin{figure}[tb]
\centering
\footnotesize
\begin{tblr}{
  colspec={Q[c,m]Q[c,m]Q[c,m]Q[c,m]Q[c,m]},
  rowsep=1.5pt,
  colsep=1.5pt,
  column{1} = {font=\scriptsize},
  column{2} = {leftsep=10pt},
  row{1} = {font=\scriptsize},
}
\SetCell[c=1]{c} \textbf{Source} & 
\SetCell[c=4]{c} \textbf{Feature Correspondences} \\
\adjustbox{valign=m}{\includegraphics[width=\fsz]{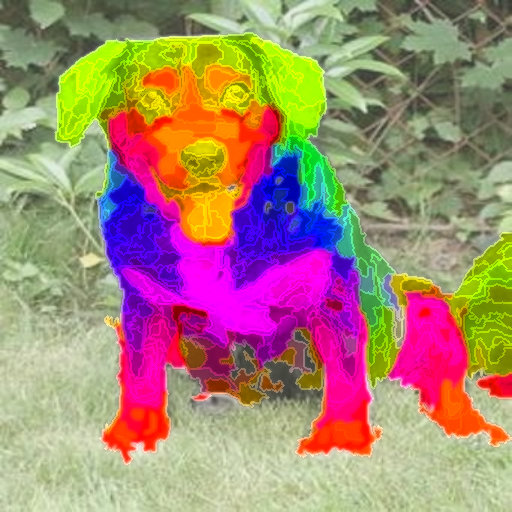}} &
\adjustbox{valign=m}{\includegraphics[width=\fsz]{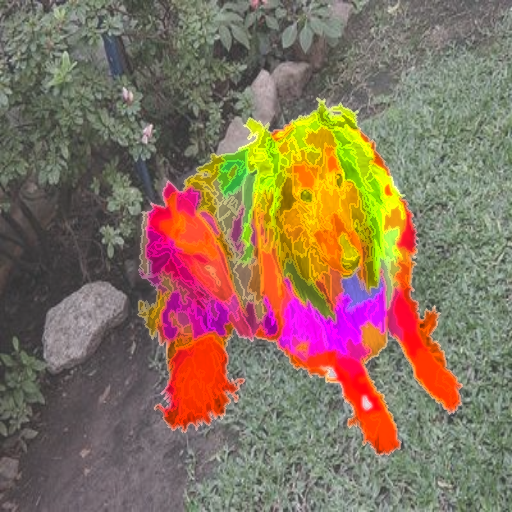}} &
\adjustbox{valign=m}{\includegraphics[width=\fsz]{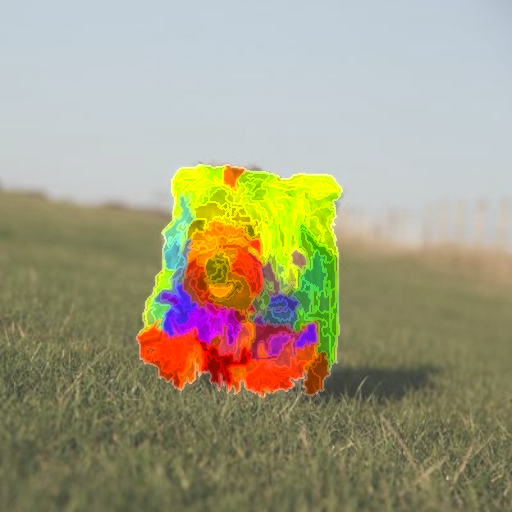}} &
\adjustbox{valign=m}{\includegraphics[width=\fsz]{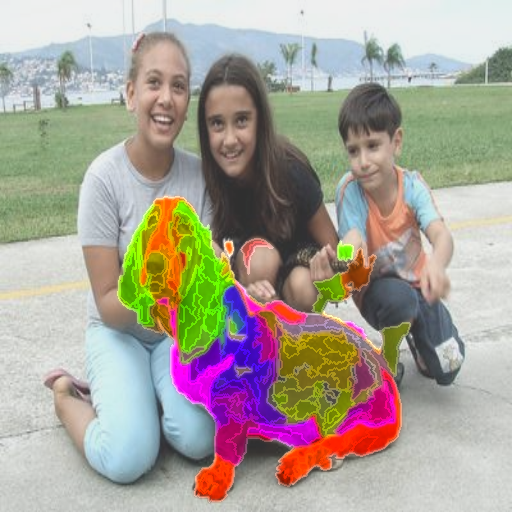}} &
\adjustbox{valign=m}{\includegraphics[width=\fsz]{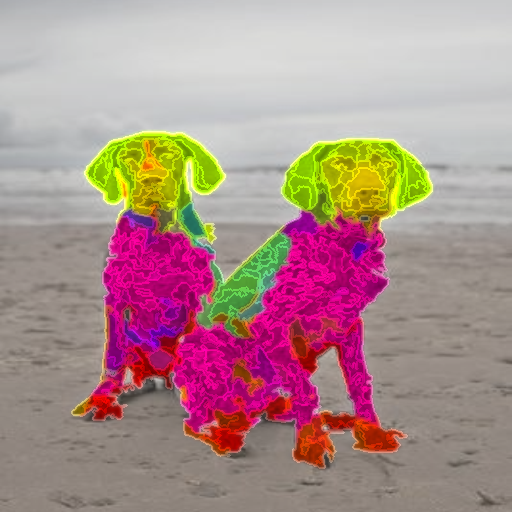}}
\end{tblr}
\caption{
Feature correspondences from a source image (left) to target images (right), mapped via normalized single head cross attention and colored using low rank PCA\@.  
We show more results in Appendix~\appref{sec:ftcorr}. 
}
\label{fig:featcorr_onerow}
\end{figure}

\subsubsection{Quality of Superpixels}
To evaluate the quality of superpixels, we compute the explained variation \citep{splattice,superpixeleval} given by
\begin{align}
    R^2 (\pi \mid \xi)\!=\!\frac{1}{\mathrm{Var}(\xi)}
    \!\sum_{S \in \pi}
    \mathrm{Pr}(S)
    \big(\mathbb E(\xi \cap S) - \mathbb E(\xi)\big)^2,
\end{align}
where $\mathrm{Pr}(S) = \lvert S \rvert/\lvert \xi \rvert$.
The explained variation quantifies how well the superpixels capture the inherent structures in an image by measuring the amount of dispersion which can be attributed to the partitioning $\pi$.
An ideal algorithm would produce a high $R^2$ with a minimal number of superpixels.
We compare our approach with SotA superpixel methods~\citep{superpixeleval} in Table~\ref{tab:superpixel_eval}, demonstrating that our superpixel algorithm performs comparably to top performing methods with substantially lower inference time, which is crucial for on-line tokenization.

\subsubsection{Feature Correspondences}
\citet{dinov2} visualize feature correspondences between images to examine the consistency of token representations across images for models trained with contrastive learning. 
Given the strong attribution scores of superpixel tokenization, we were interested to see how features correspond across images with similar, but not necessarily identical classes. 
We compute cross attention over normalized features between a source and target images, and visualize the correspondences using a low rank PCA with three channels.
Figs.~\ref{fig:featcorr_onerow}, \appref{fig:featcorr_ext}, and~\appref{fig:featcorr_ext2} demonstrates that the features from SPiT provide strong feature correspondence properties without self-supervised pretraining, which is generally considered to provide more robust representations independent of downstream tasks.

\section{Discussion and Related Work}
\label{sec:discussion}

\subsubsection{Related Work}
Interest in adaptive tokenization is burgeoning in the field.
We propose a taxonomy of adaptive tokenization with two main dimensions illustrated in Fig.~\ref{fig:dimensions_transformers}.
The first dimension illustrates the \emph{coupling or integration} of tokenization into the transformer architecture.
Several approaches~\citep{imageaspoints, supertoken, tome} are inherently coupled to the architecture, while others adopt a decoupled approach~\citep{msvit, quadformer} which more closely aligns with our framework.
The taxonomy is extended by a dimension of \emph{token granularity}, measuring the proximity to modelling with pixel-level precision.
Together, these dimensions facilitate an understanding of adaptive tokenization approaches for ViTs.

A significant body of current research is primarily designed to improve scaling and overall compute for attention~\citep{tome,tokenlearner,tokens2token} by leveraging token merging strategies in the transformer layers with square patches, and can as such be considered \emph{low-granularity coupled approaches}.
Distinctively, SuperToken~\citep{supertoken} applies a coupled approach to extract a non-uniform token representation.
The approach is fundamentally patch based, and does not aim for pixel-level granularity.

In contrast, multi-scale tokenization~\citep{msvit,quadformer} apply a \emph{decoupled approach} where the tokenizer is independent of the transformer architecture.
These are commensurable with \emph{any transformer backbone}, and improve computational overhead.
While square tokens operate on a \emph{lower level of granularity}, there is significant potential for synergy between these approaches and our own, particularly given the hierarchical nature of SPiT\@.
On the periphery, \citet{imageaspoints} propose a pixel-level clustering method with a \emph{coupled high granularity approach}.

\begin{figure}[tb]
    \centering
    \resizebox{0.5\linewidth}{!}{
    \setlength{\baselineskip}{5pt}
    \begin{tikzpicture}[
        my label/.style={
            text width=#1,
            execute at begin node=\setlength{\baselineskip}{1pt},
        },
        my label/.default=2cm,
        outer sep=2pt,
    ]
    \begin{axis}[
        xlabel={\textbf{Token Granularity}},
        ylabel={\textbf{Architectural Coupling}},
        xmin=0, xmax=1,
        ymin=0, ymax=1,
        xtick={0.,0.5,...,1},
        ytick={0.,0.5,...,1},
        xticklabels={,,,,,},
        yticklabels={,,,,,},
        x tick label style={
            font=\footnotesize,
        },
        y tick label style={
            font=\footnotesize,
        },
        x label style={yshift= 0em},
        y label style={yshift=-0em},
        ymajorgrids=true,
        xmajorgrids=true,
        grid style=dashed,
        clip=false,
    ]

    \addplot[only marks, mark=text, text mark={\textcolor{Dark2-B}{\scriptsize\faIcon{eye}}}] coordinates {
        (0.85,    0.2)
    };
    \addplot[only marks, mark=text, text mark={\scriptsize\faIcon{eye}}] coordinates {
        (0.1,     0.7)
        (0.5,     0.9)
        (0.1,     0.3)
        (0.4,     0.2)
        (0.95,    0.9)
    };
    \addplot[only marks, mark=text, text mark={\scriptsize\faIcon{book-open}}] coordinates {
        (0.8,     0.1)
    };

    \node[my label, align=left] at (axis cs:0.85, 0.2) [anchor=west] {\textcolor{Dark2-B}{\textbf{SPiT}}};

    \node[my label, align=left] at (axis cs:0.1, 0.7) [anchor=west] {ToMe\\\tiny\citep{tome}};
    \node[my label, align=right] at (axis cs:0.5, 0.9) [anchor=east] {SuperToken\\\tiny\citep{supertoken}};

    \node[my label=2.5cm,] at (axis cs:0.1, 0.3) [anchor=west] {MSViT\\\tiny\citep{msvit}};
    \node[my label,] at (axis cs:0.4, 0.2) [anchor=west] {Quadformer\\\tiny\citep{quadformer}};

    \node[my label=2.5cm, align=right] at (axis cs:0.95, 0.9) [anchor=east] {ContextCluster\\\tiny\citep{imageaspoints}};
    \node[my label=5cm, align=right] at (axis cs:0.8, 0.1) [anchor=east] {GPT-3\\\tiny\citep{gpt3}};

    \node at (axis cs:0.,-.1) [anchor=west] {\large\faIcon{cube}};
    \node at (axis cs:1.,-.1) [anchor=east] {\large\faIcon{cubes}};
    \node at (axis cs:-.1,0.) [anchor=south] {\large\faIcon{unlink}};
    \node at (axis cs:-.1,1.) [anchor=north] {\large\faIcon{link}};

    \end{axis}
    \end{tikzpicture}}
    \caption{\footnotesize Taxonomy of adaptive tokenization in transformers. Tokenization ranges from decoupled (\scriptsize\faIcon{unlink}\footnotesize) to coupled (\scriptsize\faIcon{link}\footnotesize) to the transformer architecture, and from coarse (\scriptsize\faIcon{cube}\footnotesize) to fine (\scriptsize\faIcon{cubes}\footnotesize) token granularity. To contextualize vision models (\scriptsize\faIcon{eye}\footnotesize) with LLMs (\scriptsize\faIcon{book-open}\footnotesize), GPT-3~\citep{gpt3} is included for reference.}
    \label{fig:dimensions_transformers}
    \vspace{-10pt}
\end{figure}

\subsubsection{Limitations}
Our proposed framework is not optimizable with gradient based methods.
Ideally, adaptable tokenization should be learnable in an end-to-end framework.
However, such an approach needs to be carefully designed to not add undue computational overhead, and should ideally not be limited by a predefined number of tokens.
Moreover, we see that irregular tokenization require additional gradient features to perform.
While our framework provides competitive performance, it should be seen as an early step towards more flexible tokenization strategies, with several opportunities for further optimization.
We provide visualization of edge cases for attributions in Fig.~\appref{fig:attention-maps-cont}.

\subsubsection{Further Work} 
Our work is distinguishable as a \textit{decoupled high-granularity apprach} with multiple paths for further work.
We see strong potential in exploring graph neural networks (GNNs) for tokenization, and hierarchical properties could be leveraged in self-supervised frameworks such as DINO~\citep{dino}, or pyramid models~\citep{pvt1,pvt2} in a coupled approach.
The modularity of our framework provides opportunites for research into the dynamic between ViTs and tokenization.
Coupling SPiT with gating~\citep{msvit} or merging~\citep{tome} could further improve scalability, and allow for a learnable framework.
More work can be done in studying the effects of irregularity in feature extraction, as discussed in Section~\ref{subsec:tokenizer_generalization}.

\section{Conclusion}
In this work, we posit tokenization as a modular component that generalize the canonical ViT backbone, and show that irregular tokenization with superpixels is commensurable with transformer architectures.
Our experiments demonstrate that superpixel tokens have a significant impact on extracted attributions for predictions, and are amenable to unsupervised segmentation tasks without a separate decoder model.
Moreover, we show that concatenated gradient features improve performance of base capacity ViTs, and that irregular tokenizers generalize between different tokenization strategies.
Our experiments were performed with standard models and training to limit confounding factors in our results.

\section*{Acknowledgments}

Computations were performed on resources provided by Sigma2 (NRIS, Norway), Project~NN8104K.
We acknowledge Sigma2 for awarding access to the LUMI supercomputer, owned by the EuroHPC Joint Undertaking, hosted by CSC (Finland) and the LUMI consortium through Sigma2, Norway, Project no.~465000262.
This work was funded in part by the Research Council of Norway, via the Visual Intelligence Centre for Research-based Innovation (grant no.~309439), and Consortium Partners.

\clearpage
\nocite{regionsmooth,peronamalik,cutmix,wordnet,imagenetreal,origvit,dinov2,erasercompsuff,tokencut,normalizedcut}
\begingroup
    \let\clearpage\relax
    \vspace{20pt}
    \bibliographystyle{splncs04nat}
    \bibliography{abrv,main}
\endgroup

\ifsingle
    \clearpage
    \appendix

\newcommand{\maincitet}[1]{%
  \ifsingle
    \citet{#1}%
  \else
    \citet{main-#1}%
  \fi
}
\newcommand{\maincitep}[1]{%
  \ifsingle
    \citep{#1}%
  \else
    \citep{main-#1}%
  \fi
}
\newcommand{\mainref}[1]{%
  \ifsingle
    \ref{#1}%
  \else
    \ref{main-#1}%
  \fi
}

\renewcommand\thefigure{\thesection\arabic{figure}}
\renewcommand{\thetable}{\thesection\arabic{table}}
\counterwithin{figure}{section}
\counterwithin{table}{section}

\section{Equivalence of Frameworks}
\label{sec:equivalence}
\begin{definition}[ViT Tokenization]
    Let ${\xi\colon H \times W \to \mathbb{R}^{C}}$ be an image signal with tensor representation ${\vec\xi \in \mathbb R^{H \times W \times C}}$.
    The canonical ViT tokenization operator $\tau^*\colon \mathbb{R}^{H \times W \times C} \to \mathbb{R}^{N \times \rho \times \rho \times C}$ partitions the image into $N = \lceil\frac{H}{\rho}\rceil \cdot \lceil \frac{W}{\rho} \rceil$ non-overlapping $C$-channel square zero-padded patches. For the case where we have $H \bmod \rho = W \bmod \rho = 0$, we get $N = \frac{H}{\rho} \cdot \frac{W}{\rho}$, and no padding is necessary.
\end{definition}

\begin{definition}[ViT Features]
    Let $\rho$ denote the patch dimension of a canonical ViT tokenizer $\tau^*$, and let $M = \rho^2 C$.
    The canonical ViT feature extractor ${\phi^*\colon \mathbb{R}^{N \times \rho \times \rho \times C} \to \mathbb{R}^{N \times M}}$ is given by ${\phi^* = \mathrm{vec}_M}$, where $\mathrm{vec}_M$ denotes the vectorization operator applied to each of the $N$ patches via $\rho \times \rho \times C \mapsto M$.
\end{definition}

\begin{definition}[ViT Embedder]
    Let $\phi^*$ be a canonical ViT feature extractor, and let $Q \in \mathbb{R}^{N \times D}$ denote a positional encoding. The canonical ViT embedder ${\gamma^*\colon \mathbb{R}^{N\times M} \rightarrow \mathbb{R}^{N\times D}}$ is given by
    \[\gamma^*(z) = L_\theta z + Q\]
    where $L_\theta\colon \mathbb{R}^{N\times M} \rightarrow \mathbb{R}^{N\times D}$ is a learnable linear transformation, and $Q$ is either a learnable set of parameters or a function of the positions of the $N$ blocks in the partitioning induced by the canonical tokenizer $\tau^*$.
\end{definition}

\begin{lemma}[Feature Equivalence]
    \label{prop:tokenizer_feature_equiv}
    Let $\tau^*$ denote a canonical ViT tokenizer with a fixed patch size $\rho$, and let $\phi$ denote a gradient excluding interpolating feature extractor with $\beta = \rho$.
    Then the operations $\phi \circ \tau^*$ are equivalent to the canonical ViT operations $\phi^* \circ \tau^*$.
\end{lemma}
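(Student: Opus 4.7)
The plan is to reduce the statement to two elementary observations: that the interpolation and masking performed by $\phi$ both degenerate to the identity when applied to outputs of $\tau^{*}$, and that the positional channel of $\phi$ collapses to an image-independent quantity which factors out of the comparison. Together these identifications will match the image-dependent content of $\phi\circ\tau^{*}$ with that of $\phi^{*}\circ\tau^{*}$ up to a fixed affine reparameterization.

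First I would inspect $\phi$ as applied to a token emitted by $\tau^{*}$. Each such token is a full $\rho\times\rho\times C$ square, so (i) its bounding box coincides exactly with the token itself, which makes the masking step of $\phi$ the identity, and (ii) bilinear resampling from the $\rho\times\rho$ source grid onto the target $\beta\times\beta=\rho\times\rho$ grid reduces to the identity, since the bilinear kernel evaluated at coincident integer lattice points returns the source pixel values. The remaining $[-1,1]$ rescaling is a fixed affine map shared across all tokens. Collecting these steps yields
\begin{equation}
\hat\xi^{\scol}_{n}\;=\;a\,\mathrm{vec}_{M}\bigl(\tau^{*}(\xi)\bigr)_{n}+b\;=\;a\,\phi^{*}\bigl(\tau^{*}(\xi)\bigr)_{n}+b,
\end{equation}
for constants $a,b$ independent of $n$ and of $\xi$.

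Next I would treat the positional channel $\hat\xi^{\spos}$. Since $\tau^{*}$ induces a partition that depends only on $\rho$, the normalized coordinates over each patch, and hence the Gaussian joint histogram $\hat\xi^{\spos}_{n}$, are functions of the partition geometry alone and do not depend on $\xi$. Thus $\hat\xi^{\spos}$ is a fixed matrix that augments $\hat\xi^{\scol}$ in a globally constant way, introducing no image-dependent information beyond what $\phi^{*}$ already carries.

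Combining the two observations, $\phi\circ\tau^{*}(\xi)$ and $\phi^{*}\circ\tau^{*}(\xi)$ agree on the image-dependent content up to a shared affine reparameterization of the color block together with an image-independent augmentation carrying positional data. This is exactly the notion of equivalence later exploited by Prop.~\ref{prop:embedding_equiv}, which absorbs the affine factor into the learnable projection $L_{\theta}$ and identifies $\hat\xi^{\spos}$ with the canonical positional encoding $Q$. I expect the only genuine obstacle to be pinning down this notion of equivalence precisely enough to accommodate the mismatch in raw output dimensions between $\phi$ and $\phi^{*}$; once equivalence is taken up to affine reparameterization plus an image-independent additive term, the two identifications above close the argument without further calculation.
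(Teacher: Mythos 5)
Your proof takes essentially the same route as the paper's: the crux in both is that resampling a $\rho\times\rho$ patch onto a $\beta\times\beta=\rho\times\rho$ target grid reduces to the identity, after which the only remaining operation on the color block is vectorization, i.e.\ $\phi^*=\mathrm{vec}_M$. The paper's own Lemma proof is much terser and simply concludes $\phi=\mathrm{vec}_N=\phi^*$, quietly eliding the masking step, the $[-1,1]$ rescaling, and the positional block $\hat\xi^{\spos}$ that a ``gradient excluding interpolating feature extractor'' also emits; the latter two are dealt with only in the proof of Prop.~\ref{prop:embedding_equiv}, where $\hat\xi^{\spos}$ collapses to an identity matrix under $\tau^*$ and the scalar is absorbed into $L_\theta$. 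You make the masking, the normalization (as an affine factor), and the image-independence of $\hat\xi^{\spos}$ explicit at the Lemma stage and word the equivalence accordingly --- a more careful rendering of the same argument rather than a different one, and one that is cleanly compatible with the downstream Proposition since both the affine factor and the fixed positional augmentation are absorbed there.
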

\begin{proof}
    The proof is highly trivial but illustrative.
    Note that for each of the $N$ square patches generated by $\tau$, the extractor $\phi$ performs an interpolation to rescale the patch to a fixed resolution of $\beta \times \beta$.
    However, for $\beta = \rho$ the patches already match the target dimensions exactly.
    It follows that the interpolation operation reduces to identity.
    The vectorization operator is equivalent for both mappings, hence $\phi = \mathrm{vec}_N = \phi^*$.
\end{proof}

\ifsingle
    \embedequiv*
\else
    \begin{proposition}[Embedding Equivalence]
        
    \end{proposition}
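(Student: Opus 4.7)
The plan is to decompose the composite map $\gamma \circ \phi \circ \tau^*$ into a color channel and a positional channel, and compare each against its counterpart in $\gamma^* \circ \phi^* \circ \tau^*$. Since $\tau^*$ is identical on both sides, the task reduces to showing that the pair $(\phi, \gamma)$, restricted to the uniform square partition produced by $\tau^*$ with $\beta = \rho$, reproduces the pair $(\phi^*, \gamma^*)$ up to scalar proportionality constants that can be folded into the shared linear projection $L_\theta = L^*_\theta$.

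First, for the color portion of $\phi$, I would invoke Lemma~\ref{prop:tokenizer_feature_equiv}: with $\beta = \rho$ the bilinear interpolator inside $\phi$ collapses to the identity on each $\rho \times \rho$ patch, so $\hat\xi^{\scol}_n$ equals the canonical vectorized patch $\phi^*(\tau^*(\xi))_n$ modulo the $[-1,1]$ rescaling, which contributes a fixed scalar proportionality constant. This matches the image of $L^*_\theta$ acting on $\phi^*(\tau^*(\xi))_n$ exactly up to that constant.

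The substantive step is the positional channel. Under $\tau^*$ on an image of dimension $H = W = \rho^2$, the $N = \rho^2$ patches tile $[-1,1]^2$ into a $\rho \times \rho$ lattice of cells of side $2/\rho$. Binning $[-1,1]^2$ into $\beta \times \beta = \rho \times \rho$ bins therefore places the pixel coordinates of each patch $S_n$ inside a single bin $(y(n), x(n))$, so in the small-bandwidth regime the Gaussian histogram $\hat\xi^{\spos}_n$ concentrates as $\mathbf e_{(y(n),x(n))}$ (up to translates of a common bump shared across patches). Partitioning $L_\theta = [L^{\scol}_\theta \mid L^{\spos}_\theta]$ and writing $\gamma(\hat\xi_n) = L^{\scol}_\theta \hat\xi^{\scol}_n + L^{\spos}_\theta \hat\xi^{\spos}_n$, the second summand then selects the $n$-th column of $L^{\spos}_\theta$, which plays the role of a learnable per-patch positional encoding $Q_n$ in $\gamma^*$.

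The main obstacle is that for any $\sigma > 0$ the histograms $\hat\xi^{\spos}_n$ are not exact indicators but shifted copies of a common kernel bump, so the reduction to $Q_n$ is not literally a selector matrix. I would handle this by observing that translation by the lattice vector $(y(n), x(n))$ is the \emph{only} data-dependent structure across $n$, so the linear map $\hat\xi^{\spos}_n \mapsto L^{\spos}_\theta \hat\xi^{\spos}_n$ factors as a fixed convolution composed with the selector, and since $L^{\spos}_\theta$ is learnable this fixed convolution can be absorbed into its parametrization. The span of $\{\hat\xi^{\spos}_n\}_{n=1}^N$ thus equals that of the standard basis on $N = \beta^2$ patches, so the family of embeddings realizable by $\gamma \circ \phi \circ \tau^*$ coincides with the family realizable by $\gamma^* \circ \phi^* \circ \tau^*$ up to the color proportionality constant, establishing the claim.
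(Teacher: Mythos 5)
Your proof follows the same structure as the paper's: invoke the Feature Equivalence Lemma to collapse the color channel, reduce the positional histograms to the standard basis for $H=W=\beta^2=\rho^2$, and partition the linear layer $L_\theta = [L^{\scol}_\theta \mid L^{\spos}_\theta]$ so that the positional block plays the role of the canonical $Q$. The one place where you go further is the nonzero-bandwidth case: the paper simply treats $\hat\xi^\spos$ as exactly (a scalar multiple of) the identity and relegates the Gaussian kernel to a remark noting the tails are negligible, whereas you argue the positional feature matrix is a fixed nonsingular shift-invariant operator that can be folded into the learnable $L^{\spos}_\theta$, so the \emph{realizable family} of embeddings is preserved even without $\sigma\to 0$. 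That is a sound and somewhat tidier argument, though note it establishes equality of the families of representable embeddings rather than the pointwise equality (for a fixed $L_\theta = L^*_\theta$) that the proposition's phrasing suggests; under the paper's idealized empirical-histogram reading the two readings coincide, so the discrepancy is cosmetic rather than substantive.
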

\fi
\begin{proof}
    We first note that we can assume $\hat\xi^{\spos}$ is a matrix with single entry components, since under $\beta = \rho$ and $N = \beta^2$, each vectorized histogram feature is a scaled unit vector $c_n \vec{e}_n$ with ${n = 1,\dots,N}$.
    Moreover, since the partitioning inferred by $\tau^*$ exhaustively covers the spatial dimensions ${H \times W}$, the histograms essentially span the standard basis, such that $\hat\xi^{\spos}$ is diagonal.
    Furthermore, since each patch is of the same size we have equal contribution towards each entry, such that $c_n = c_m$ for all $m \neq n$.
    Therefore, without loss of generality, we can ignore the scalars and simply consider $\hat\xi^{\spos} = I$ as an identity matrix.
    From Lemma~\ref{prop:tokenizer_feature_equiv} we have that $z = ({\phi^*} \circ {\tau^*})(\xi) = ({\phi} \circ {\tau^*})(\xi)$.
    Then, since
    \begin{align}
        \gamma^*(z) = L_\theta z + Q &= [L_\theta, Q]\genfrac{[}{]}{0pt}{}{z}{I} = \gamma(z)
    \end{align}
     we have that $\gamma = \gamma^*$ up to proportionality for some constant $c = c_n$.
\end{proof}

\begin{remark}
    While we only demonstrate the equality up to proportionality, this can generally be ignored since we can effectively choose our linear projection under $\gamma$ to be $L_\theta / c$.
    We note that while the equality holds for empirical histograms, equality does not strictly hold for $\hat\xi^{\spos}$ computed using KDE with a Gaussian kernel, however we point out that the contribution from the tails of a kernel $K_\sigma$ with a small bandwidth is effectively negligible.
\end{remark}

\section{Preprocessing and Superpixel Features}
\label{sec:preproc-sp}

Compared to standard preprocessing, we use a modified normalization scheme for the features for improving the superpixel extraction.
We apply a combined contrast adjustment and normalization function using a reparametrized version of the Kumaraswamy CDF. which is computationally efficient and allows more fine-grained control of the distribution of intensities than empirical normalization, which improves the superpixel partitioning.

The normalization uses a set of means \(\mu\) shape parameters \( \lambda \) for normalizing the image and adjusting the contrast.
The normalization is given by
\begin{equation}
    \left(1 - \left(1 - x^\lambda \right)^b \right),
\end{equation}
where \( b \) is defined by
\begin{equation}
    b = -\frac{\ln(2)}{\ln\left(1 - \mu^\lambda \right)},
\end{equation}
and we set means $\mu_r = 0.485, \mu_g = 0.456, \mu_b = 0.406$ and $\lambda_r = 0.539, \lambda_g = 0.507, \lambda_b = 0.404$, respectively.

The features used for the superpixel extraction are further processed using anisotropic diffusion, which smoothes homogeneous regions while avoiding blurring of edges.
This technique was advocated for superpixel segmentation by \maincitet{regionsmooth}.
We use the algorithm proposed by \maincitet{peronamalik} over 4 iterations, with $\kappa=0.1$ and $\gamma=0.5$.
Note that these features are only applied for constructing the superpixels in the tokenizer.
We emphasize that we do not apply anisotropic diffusion for the features in the predictive model.

\begin{table*}[tb]
  \sisetup{detect-all,
    uncertainty-separator=\pm,
    table-format=5.3(1.3),
    uncertainty-mode=separate,
  }
  \caption{Expected no. superpixels with SPiT over \textsc{IN1k} (train, CI 95\%).}
  \label{tab:empirical_spsizes}
  \scriptsize
  \centering
  \begin{tabular}{l@{ }@{ }SSSS}
    \toprule
    Im.Size &  {$\mathbb E(\lvert \pi^{(1)} \rvert)$}  & {$\mathbb E(\lvert \pi^{(2)} \rvert)$} & {$\mathbb E(\lvert \pi^{(3)} \rvert)$}  & {$\mathbb E(\lvert \pi^{(4)} \rvert)$} \\
    \midrule
    224 & 11940.278(2.848)  & 3155.512(0.808) & 794.650(0.209) & 197.411(0.052)  \\
    256 & 15496.020(3.786)  & 4097.510(1.074) & 1031.727(0.277) & 256.051(0.071)  \\
    384 & 34084.297(9.188)  & 9047.289(2.586) & 2287.822(0.669) & 567.69(0.172)  \\
    \bottomrule
  \end{tabular}
\end{table*}

\begin{figure}[tb]
\centering
\begin{tikzpicture}[]
\begin{axis}[
    font=\footnotesize,
    width=.6075\linewidth,
    height=.405\linewidth,
    xlabel={Image Size},
    ylabel={$\mathbb E(\lvert \pi^{(t)} \rvert)$},
    xmin=200, xmax=400,
    ymin=100, ymax=40000,
    xtick={224,256,384},
    ytick={0,100,1000,10000},
    ymode=log,
    legend style={
        at={(1.05,1)},
        anchor=north west,
        font=\scriptsize,
        inner sep=2pt,
    },
    legend columns=1,
    ymajorgrids=true,
    grid style=dashed,
]

\addplot[
    color=Dark2-A,
    mark=square,
    ]
    coordinates {
    (224,11940.278)(256,15496.020)(384,34084.297)
    };
    \addlegendentry{$\mathbb E(\lvert \pi^{(1)} \rvert)$}

\addplot[
    color=Dark2-A,
    mark=none,
    dashed,
    domain=200:400,
    samples=100,
    ]
    {(x/2)^2};
    \addlegendentry{$\lvert \pi_\mathrm{ViT2}\rvert$}

\addplot[
    color=Dark2-B,
    mark=o,
    ]
    coordinates {
    (224,3155.512)(256,4097.510)(384,9047.289)
    };
    \addlegendentry{$\mathbb E(\lvert \pi^{(2)} \rvert)$}

\addplot[
    color=Dark2-B,
    mark=none,
    dashed,
    domain=200:400,
    samples=100,
    ]
    {(x/4)^2};
    \addlegendentry{$\lvert \pi_\mathrm{ViT4}\rvert$}

\addplot[
    color=Dark2-C,
    mark=triangle,
    ]
    coordinates {
    (224,794.650)(256,1031.727)(384,2287.822)
    };
    \addlegendentry{$\mathbb E(\lvert \pi^{(3)} \rvert)$}

\addplot[
    color=Dark2-C,
    mark=none,
    dashed,
    domain=200:400,
    samples=100,
    ]
    {(x/8)^2};
    \addlegendentry{$\lvert \pi_\mathrm{ViT8}\rvert$}

\addplot[
    color=Dark2-D,
    mark=star,
    ]
    coordinates {
    (224,197.411)(256,256.051)(384,567.69)
    };
    \addlegendentry{$\mathbb E(\lvert \pi^{(4)} \rvert)$}

\addplot[
    color=Dark2-D,
    mark=none,
    dashed,
    domain=200:400,
    samples=100,
    ]
    {(x/16)^2};
    \addlegendentry{$\lvert \pi_\mathrm{ViT16}\rvert$}

\end{axis}
\pgfresetboundingbox
\path (current axis.below south west) rectangle (current axis.above north east);
\end{tikzpicture}
\caption{Expected no. superpixels with SPiT compared with no. ViT patches.}
\label{fig:empirical_spsizes}
\end{figure}
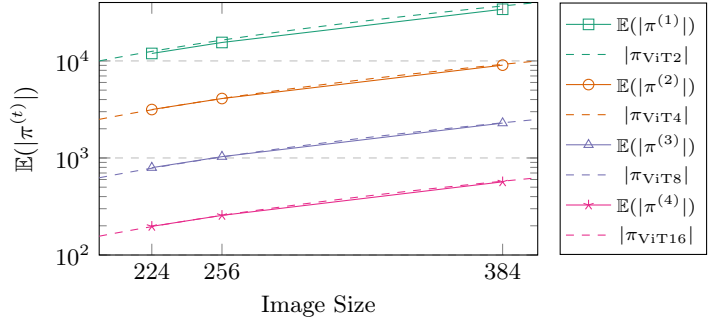

\subsubsection{Number of Superpixels:} In Section~\mainref{subsec:partitioning}, we mention that SPiT gives comparable numbers of partitions to a ViT with different patch sizes.
Table~\ref{tab:empirical_spsizes} shows empirical results for superpixel sizes using the SPiT tokenizer over the training images of \textsc{ImageNet1k}, and Fig.~\ref{fig:empirical_spsizes} compares the results to number of patches with canonical ViT tokenization, demonstrating the validity of our claims.

Importantly, these results also reveal much about effective inference times.
In Table~\mainref{tab:superpixel_eval}, we show that the overhead for constructing the superpixels is very low.
However, the number of tokens depends on the image.
Images with large homogeneous regions will be processed faster, while images with many independent regions will necessary incur a cost.
Nevertheless, the results in Table~\ref{tab:empirical_spsizes} show that we will, on average, have comparable inference times to a canonical ViT due to the beneficial properties of our proposed superpixel tokenization.

\subsubsection{Final Thresholding}:
Adaptable tokenization frameworks does not necessarily entail an overall drop in inference throughput. 
Contrarily, it could potentially be leveraged to substantially improve inference speed by designing learnable methods to lower the number of tokens without decreasing performance, \eg ToMe by \maincitet{tome}.

We apply an additional final merging step where we compute the euclidean distance between adjacent superpixels and merge all superpixels below a given threshold for our SPiT-B16 model with gradient features. 
Noting that a threshold of 0.0 retains the original model design, 
the results in Table~\ref{tab:final_th} indicate that models with superpixel tokenization can be optimized to improve inference throughput. 
We also note that taking the maximum performing tokens over all thresholds achieves an accuracy of 0.817, significantly improving the predictive performance.

\begin{table*}[tb]
  \sisetup{detect-all,
  }
  \caption{Accuracy under final thresholding.}
  \label{tab:final_th}
  \scriptsize
  \centering
  \begin{tabular}{SSSS}
    \toprule
    \textbf{Threshold}&\textbf{No.Tok.}&\textbf{Im./s.}&\textbf{Avg.Acc.}\\
    \midrule
    0.00 &556.5 & 718.7 & 0.804 \\
    0.05 &513.2 & 749.2 & 0.804 \\
    0.10 &441.6 & 844.4 & 0.802 \\
    0.15 &365.0 & 950.5 & 0.797 \\
    0.20 &293.7 & 1038.9 & 0.786 \\
    \bottomrule
  \end{tabular}
\end{table*}

\section{Training Details}
\label{sec:training-details}
As mentioned in Section~\mainref{subsec:contributions}, we use standardized ViT architectures and generally follow the recommendations provided by \maincitet{howtrainvit}.
We provide training logs, pre-trained models, and code for training models from scratch in our GitHub project repository (in the camera ready manuscript).

\subsubsection{Classification:}
Training is performed over 300 epochs using the \textsc{AdamW} optimizer with a cosine annealing learning rate scheduler with 5 epochs of cosine annealed warmup from learning rate $\eta_\mathrm{start} = 1\times 10^{-5}$.
The schedule maxima and minima are given by $\eta_\mathrm{max} = 3\times 10^{-3}$, and $\eta_\mathrm{min} = 1\times 10^{-6}$.
We use a weight decay of $\lambda_\mathrm{dec}=2\times 10^{-2}$ and set the smoothing term $\epsilon = 1\times 10^{-7}$.
In addition, we used stochastic depth dropout with a base probability of $p=0.2$ and layer scaling.
Models were pre-trained with spatial resolution $256 \times 256$.

For augmentations, we randomly select between using the \textsc{RandAug} framework at medium strength or using \textsc{Aug3} framework by \maincitet{deit3} including \textsc{CutMix}~\maincitep{cutmix} with parameter $\alpha = 1.0$.
We use \textsc{RandomResizeCrop} using the standard scale $(0.08, 1.0)$ with randomly sampled interpolation modes.
Since the number of partitions from the superpixel tokenizer are adapted on an image-to-image basis, we effectively constrain the maximum number of tokens during training using token dropout to balance number of tokens.

We found that a naive on-line computation of Voronoi tessellations was unnecessarily computationally expensive, hence we precompute sets of random Voronoi tessellations with 196, 256, and 576 partitions, corresponding to images of $224\times224$, $256\times256$, and $384\times384$ resolutions given patch size $\rho=16$.

All training was performed on AMD MI250X GPUs.
One important distinction is that we do not use quantization with \verb|bfloat16| for training our models, instead opting for the higher 32-bit precision of \verb|float32| since this improves consistency between vendor frameworks.
Inference was carried out on a mixture of NVIDIA A100, RTX 2080Ti, Quadro P6000, and AMD MI250X to validate consistency across vendor frameworks.

\subsubsection{Fine Tuning:}
All base models were fine-tuned over 30 epochs with increased degrees of regularization.
We increase the level of \textsc{RandAug} to ``strong'' using 2 operations with magnitude 20.
Additionally, we increase the stochastic depth dropout to $p=0.4$.
Fine tuning was performed with spatial resolution $384 \times 384$, and we reduce the maximum learning rate to $\eta_{\max}=1\times 10^{-4}$.
For the alternative classification datasets \textsc{Cifar100} and \textsc{Caltech256}, fine tuning was performed by replacing the classification head and fine tuning for 10 epochs using \textsc{AdamW} with learning rate $\eta=1\times 10^{-4}$ and the same weight decay.
No augmentation was used in this process, and images were re-scaled to $256 \times 256$ for training and evaluation.

\begin{table*}[tb]
  \sisetup{detect-all,
    uncertainty-separator=\pm,
    table-format=1.4,
    uncertainty-mode=separate,
  }
  \caption{Accuracy (Top 1) for Small (S) capacity models on classification. Note that the Small capacity models have been trained without final finetuning.} 
  \label{tab:results-small}
  \scriptsize
  \centering
  \begin{tabular}{l@{ }@{}c@{ }@{}S@{ }@{}S@{ }@{}S@{ }@{}S@{ }@{}S@{ }@{}S@{ }@{}S@{ }@{}S}
    \toprule
    \multicolumn{2}{c}{Model} &
    \multicolumn{2}{c}{\textsc{INReaL}} &
    \multicolumn{2}{c}{\textsc{IN1k}} &
    \multicolumn{2}{c}{\textsc{Caltech256}} &
    \multicolumn{2}{c}{\textsc{Cifar100}} \\
    \cmidrule(r){1-2} 
    \cmidrule(r){3-4} 
    \cmidrule(r){5-6} 
    \cmidrule(r){7-8} 
    \cmidrule(r){9-10} 
    Name    & Grad.&  {Lin.} & {kNN} & {Lin.} & {kNN}  & {Lin.} & {kNN}  & {Lin.} & {kNN}\\
    \midrule
    ViT-S16 &\xmark&    .778&    .808&    .765&    .692&    .818&    .827 &    .827&    .833\\
    ViT-S16 &\cmark&    .782&    .811&    .754&    .682&    .824&    .832 &    .830&    .836\\
    RViT-S16&\xmark&\bfs.829&\bfs.814&\bfs.767&    .740&    .852&    .858 &    .856&    .858\\
    RViT-S16&\cmark&    .818&    .812&    .759&\bfs.741&\bfs.856&\bfs.861 &\bfs.856&\bfs.859\\
    SPiT-S16&\xmark&    .746&    .796&    .689&    .628&    .767&    .771 &    .761&    .769\\
    SPiT-S16&\cmark&    .819&    .812&    .750&    .736&    .849&    .851 &    .832&    .839\\
    \bottomrule
    \multicolumn{10}{l}{\tiny \tdgg Uncertainty measures for RViT tokenizer are detailed in Appendix Table~\ref{tab:voronoi_uncertainty}.}
  \end{tabular}
\end{table*}

\section{Interpretability and Attention Maps}
\label{sec:attmaps}

For LIME explanations, we train a linear surrogate model $L_\Phi$ for predicting the output probabilities for the prediction of each model $\Phi$.
To encourage independence between tokenizers and LIME explanations, as well as promote direct comparability, we use SLIC with a target of $\lvert \pi \rvert \approx 64$ superpixels.
We use Monte Carlo sampling of binary features for indicating the presence or omission of each superpixel with stochastic $p \in \mathrm{Uniform}(0.1, 0.3)$, and keep these consistent across model evaluations.
We observed that certain images in the \textsc{IN1k} at times produced less than $5$ superpixels using SLIC, hence these images were dropped from the evaluation.

The attention flow~\maincitep{attrolloutflow} of a transformer differs from the standard attention roll-out by accounting for the contributions of the residual connections in computations.
The attention flow of an $\ell$-layer transformer is given by
\begin{align}
    A_\mathrm{Flow} = \prod_{i = 1}^\ell \big( (1-\lambda) I + \lambda A_i \big).
\end{align}
where we set $\lambda = 0.9$ to account for stochastic depth and layer scaling factors while accentuating the contribution of the attention operators.
We use max-aggregation over the heads to extract a unified representation.
Following \maincitet{origvit} and \maincitet{dino}, we extract the attention for the class token as an interpretation of the model's prediction.

For the PCA projection, we take inspiration from the visualizations technique used in the work of \maincitet{dinov2}.
In this work, the features of multiple images with comparable attributes are concatenated, and projected onto a set of the top principal components of the image.
We compute a set of 5 prototype centroids $\nu \in \mathbb{R}^{1000 \times d \times 5}$ for each class token of each model over ImageNet using KMeans, while enforcing relative subclass orthogonality by introducing a regularization term
\begin{align}
    J(\nu) = \frac{\lambda_{\nu}}{1000} \sum_{c=1}^{1000} \lVert I - \nu_c^\intercal \nu_c \lVert_2^2,
\end{align}
selecting $\lambda_\nu = 0.1$.
Given a prediction $c$, we concatenate the prototypes to the token embeddings to form a matrix $M = [\Phi(\xi; \theta)^\intercal, \nu_c^\intercal]^\intercal$.
Letting $U\Sigma V^\intercal = M - \mu(M)$ be a low-rank SVD of the centered features, we then project the original features to the principal components by $\Phi(\xi; \theta) V$, and use max-aggregation to extract the attribution as an interpretation of the model's prediction.
We experimented with different ranks, but found that simply using the first principal component aligned well with attention maps and LIME coefficients.
This somewhat mirrors the procedure by \maincitet{dinov2}, where a thresholded projection on the first principal component is applied as a mask.
In the interest of reproducibility, we provide links for downloading normalized attention maps for all attributions in our GitHub repository.

\setlength{\fsz}{0.2\linewidth}
\begin{figure}[tb]
\centering
\footnotesize
\begin{tblr}{
  colspec={Q[c,m]Q[c,m]Q[c,m]Q[c,m]Q[c,m]},
  rowsep=2pt,
  colsep=2pt,
  row{1}={font=\scriptsize},
  column{1}={font=\scriptsize},
  }
& { Token.\ Image} & \textsc{Att.\ Flow} & \textsc{Proto.\ PCA} &  LIME (SLIC) \\
 ViT &
\adjustbox{valign=m}{\includegraphics[width=\fsz]{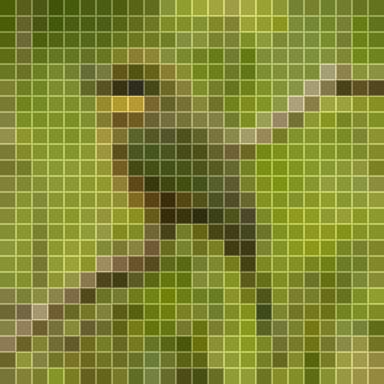}} &
\adjustbox{valign=m}{\includegraphics[width=\fsz]{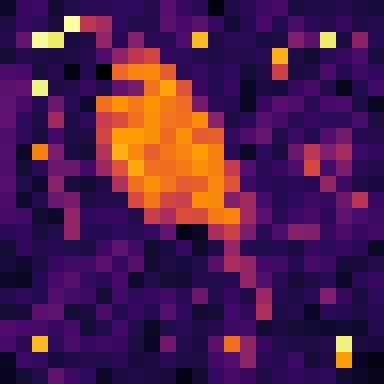}} &
\adjustbox{valign=m}{\includegraphics[width=\fsz]{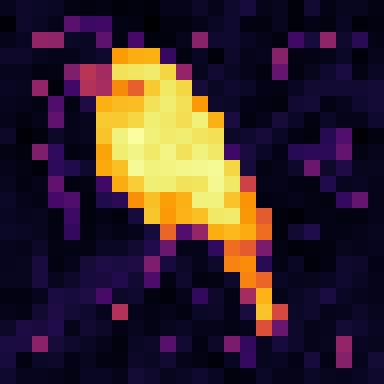}} &
\adjustbox{valign=m}{\includegraphics[width=\fsz]{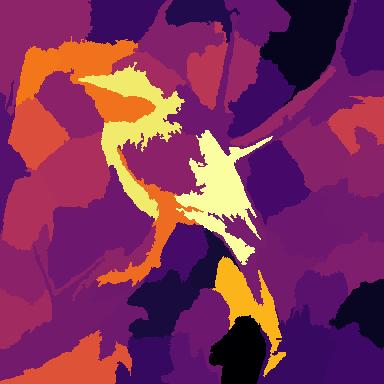}} \\
 RViT &
\adjustbox{valign=m}{\includegraphics[width=\fsz]{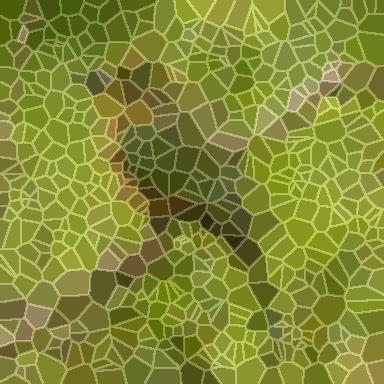}} &
\adjustbox{valign=m}{\includegraphics[width=\fsz]{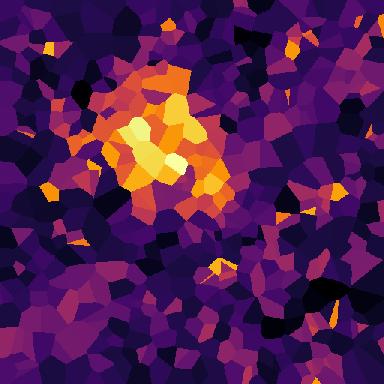}} &
\adjustbox{valign=m}{\includegraphics[width=\fsz]{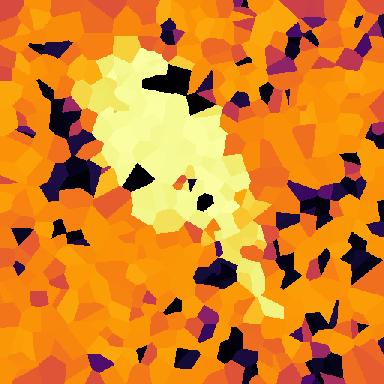}} &
\adjustbox{valign=m}{\includegraphics[width=\fsz]{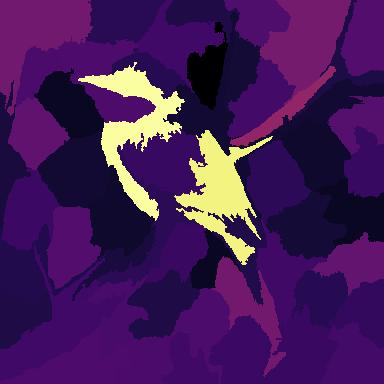}} \\
 SPiT &
\adjustbox{valign=m}{\includegraphics[width=\fsz]{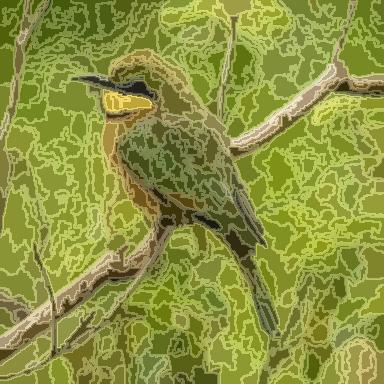}} &
\adjustbox{valign=m}{\includegraphics[width=\fsz]{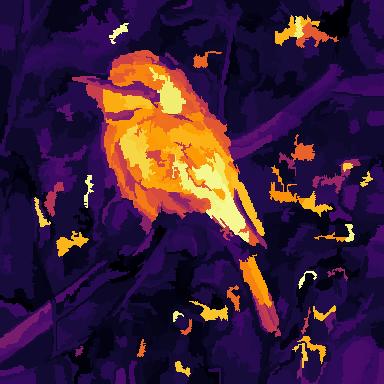}} &
\adjustbox{valign=m}{\includegraphics[width=\fsz]{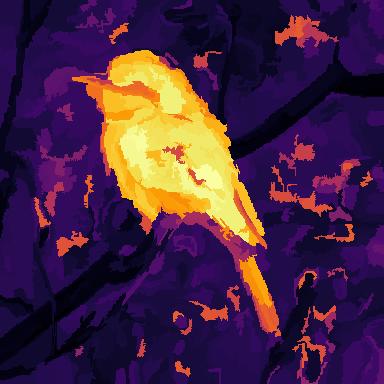}} &
\adjustbox{valign=m}{\includegraphics[width=\fsz]{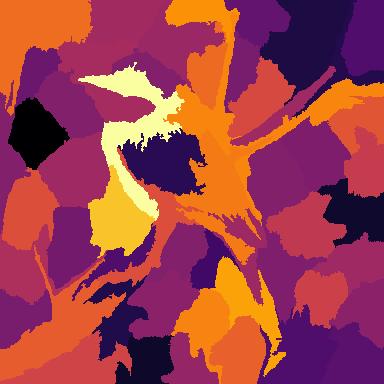}} \\
 ViT &
\adjustbox{valign=m}{\includegraphics[width=\fsz]{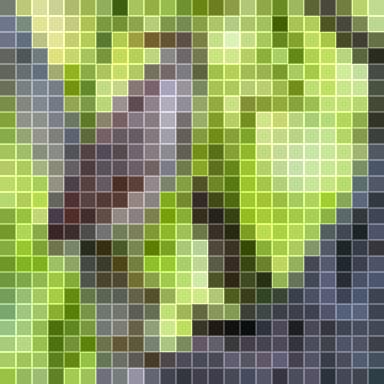}} &
\adjustbox{valign=m}{\includegraphics[width=\fsz]{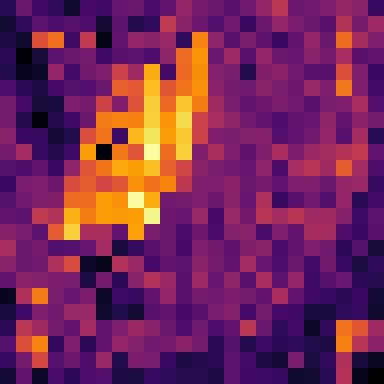}} &
\adjustbox{valign=m}{\includegraphics[width=\fsz]{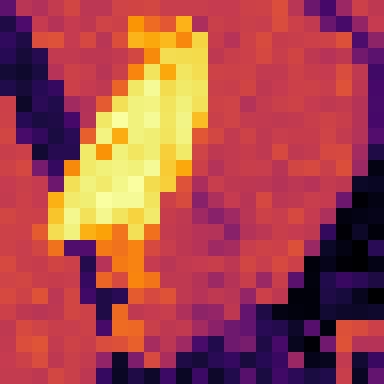}} &
\adjustbox{valign=m}{\includegraphics[width=\fsz]{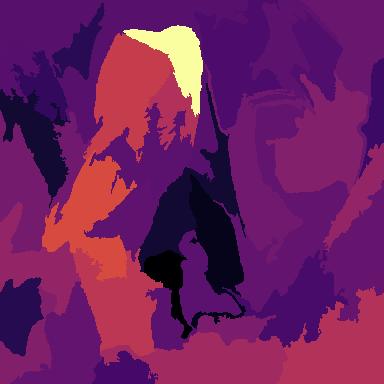}} \\
 RViT &
\adjustbox{valign=m}{\includegraphics[width=\fsz]{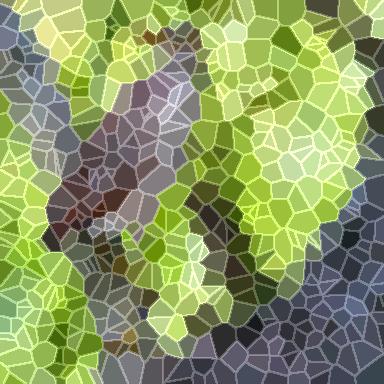}} &
\adjustbox{valign=m}{\includegraphics[width=\fsz]{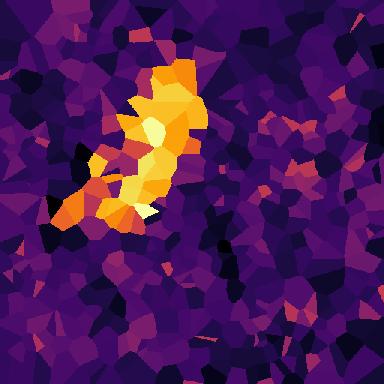}} &
\adjustbox{valign=m}{\includegraphics[width=\fsz]{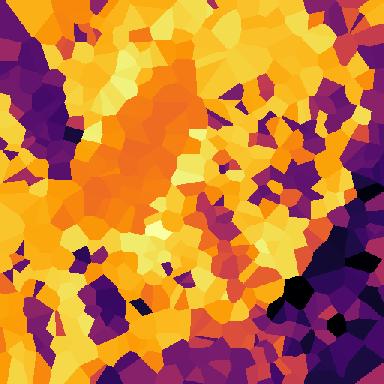}} &
\adjustbox{valign=m}{\includegraphics[width=\fsz]{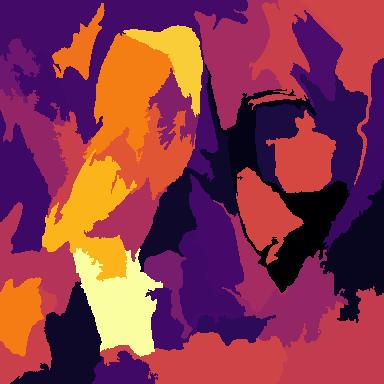}} \\
 SPiT &
\adjustbox{valign=m}{\includegraphics[width=\fsz]{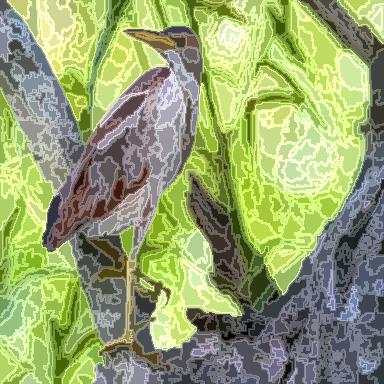}} &
\adjustbox{valign=m}{\includegraphics[width=\fsz]{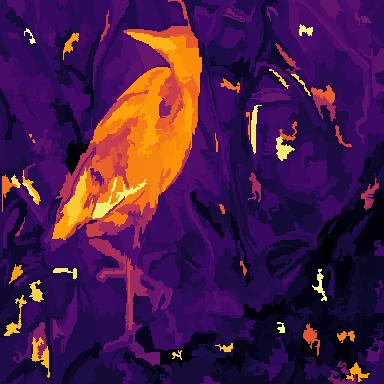}} &
\adjustbox{valign=m}{\includegraphics[width=\fsz]{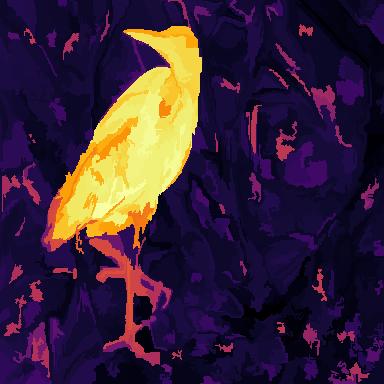}} &
\adjustbox{valign=m}{\includegraphics[width=\fsz]{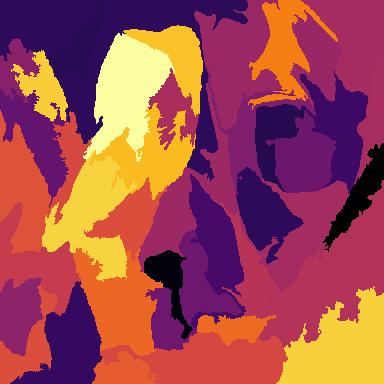}} \\
\end{tblr}
\caption{
Visualization of feature attributions for prediction ``\textit{bee eater}'' and ``\textit{bittern}'' with different tokenization strategies: square partitions (ViT), random Voronoi tesselation (RViT) and superpixels (SPiT).
}
\label{fig:attention-maps}
\end{figure}

\setlength{\fsz}{0.2\linewidth}
\begin{figure}[tb]
\centering
\footnotesize
\begin{tblr}{
  colspec={Q[c,m]Q[c,m]Q[c,m]Q[c,m]Q[c,m]},
  rowsep=2pt,
  colsep=2pt,
  row{1}={font=\scriptsize},
  column{1}={font=\scriptsize},
  }
& { Token.\ Image} & \textsc{Att.\ Flow} & \textsc{Proto.\ PCA} &  LIME (SLIC) \\
 ViT &
\adjustbox{valign=m}{\includegraphics[width=\fsz]{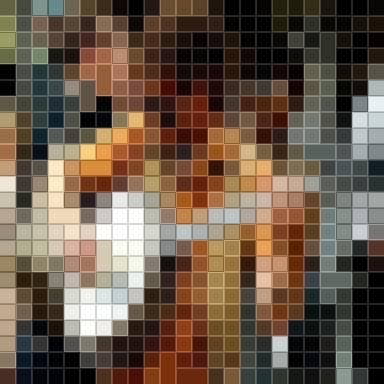}} &
\adjustbox{valign=m}{\includegraphics[width=\fsz]{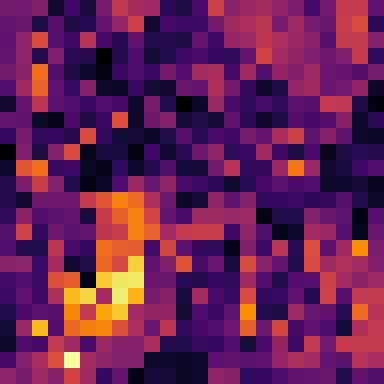}} &
\adjustbox{valign=m}{\includegraphics[width=\fsz]{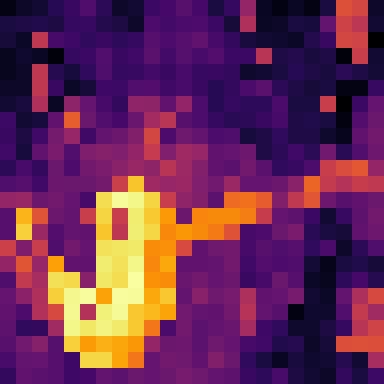}} &
\adjustbox{valign=m}{\includegraphics[width=\fsz]{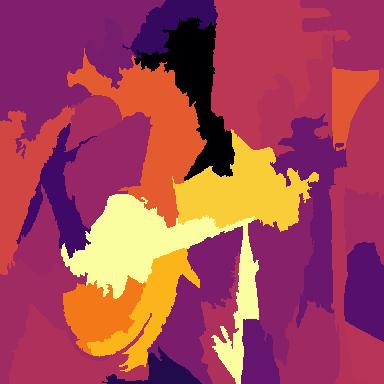}} \\
 RViT &
\adjustbox{valign=m}{\includegraphics[width=\fsz]{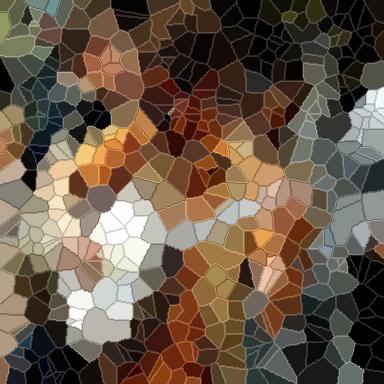}} &
\adjustbox{valign=m}{\includegraphics[width=\fsz]{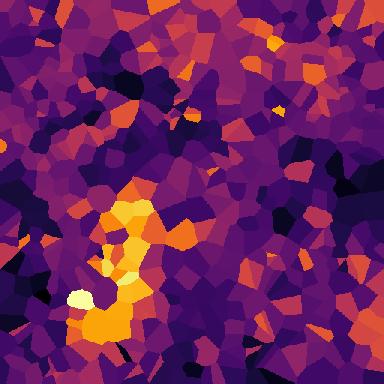}} &
\adjustbox{valign=m}{\includegraphics[width=\fsz]{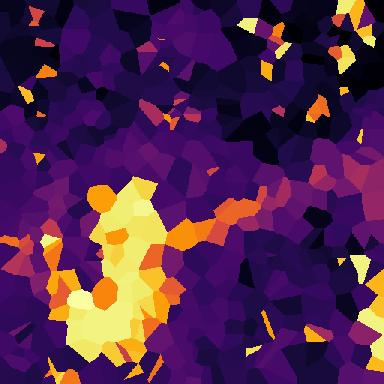}} &
\adjustbox{valign=m}{\includegraphics[width=\fsz]{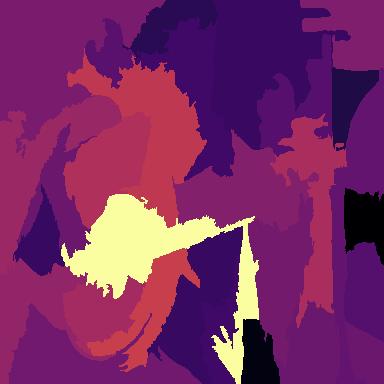}} \\
 SPiT &
\adjustbox{valign=m}{\includegraphics[width=\fsz]{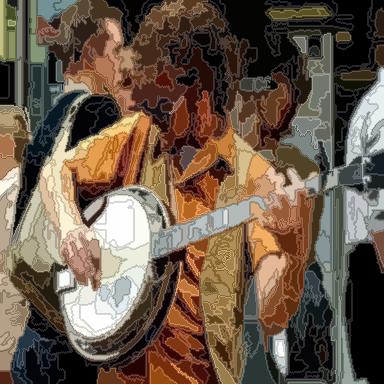}} &
\adjustbox{valign=m}{\includegraphics[width=\fsz]{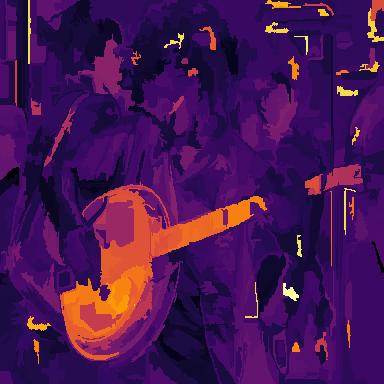}} &
\adjustbox{valign=m}{\includegraphics[width=\fsz]{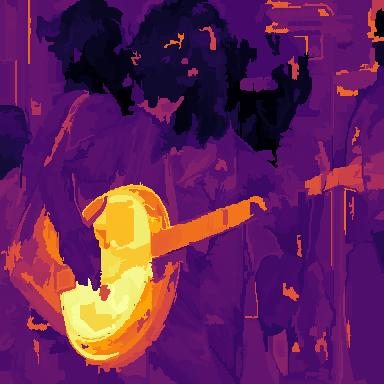}} &
\adjustbox{valign=m}{\includegraphics[width=\fsz]{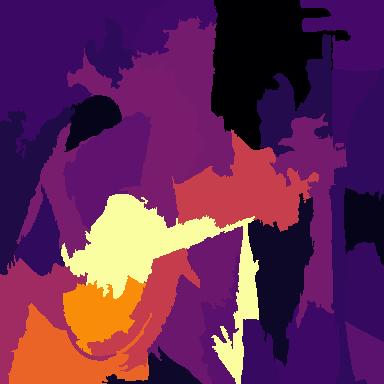}} \\
 ViT &
\adjustbox{valign=m}{\includegraphics[width=\fsz]{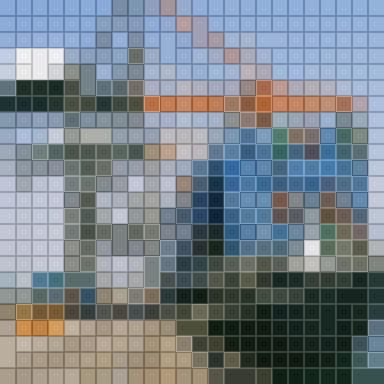}} &
\adjustbox{valign=m}{\includegraphics[width=\fsz]{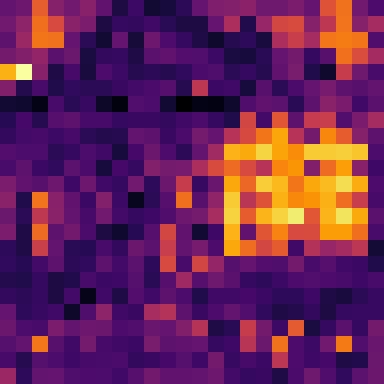}} &
\adjustbox{valign=m}{\includegraphics[width=\fsz]{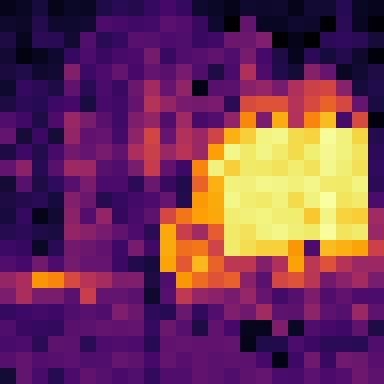}} &
\adjustbox{valign=m}{\includegraphics[width=\fsz]{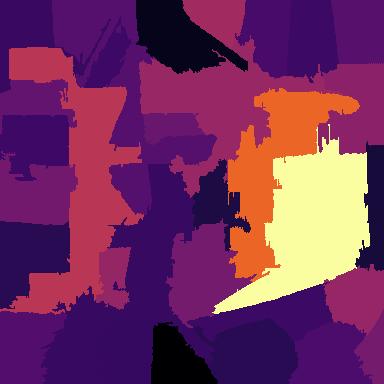}} \\
 RViT &
\adjustbox{valign=m}{\includegraphics[width=\fsz]{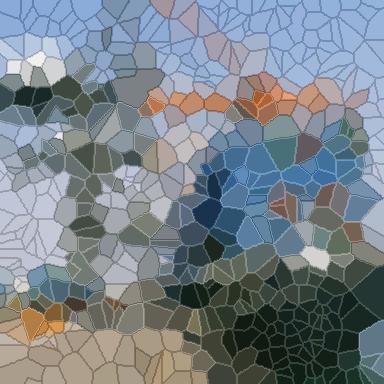}} &
\adjustbox{valign=m}{\includegraphics[width=\fsz]{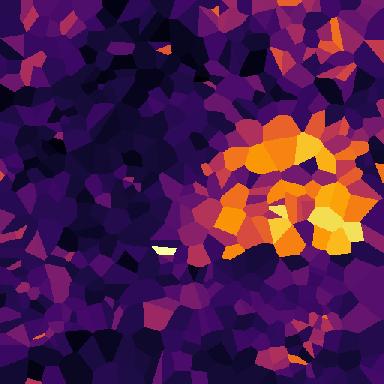}} &
\adjustbox{valign=m}{\includegraphics[width=\fsz]{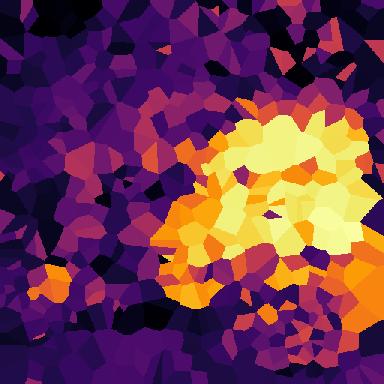}} &
\adjustbox{valign=m}{\includegraphics[width=\fsz]{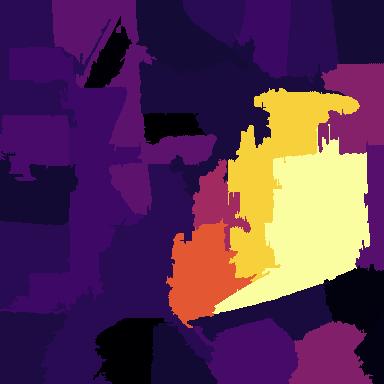}} \\
 SPiT &
\adjustbox{valign=m}{\includegraphics[width=\fsz]{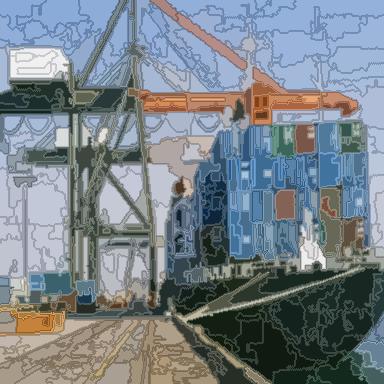}} &
\adjustbox{valign=m}{\includegraphics[width=\fsz]{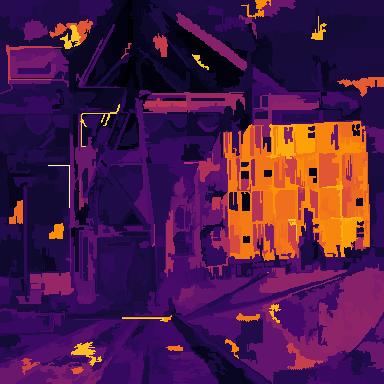}} &
\adjustbox{valign=m}{\includegraphics[width=\fsz]{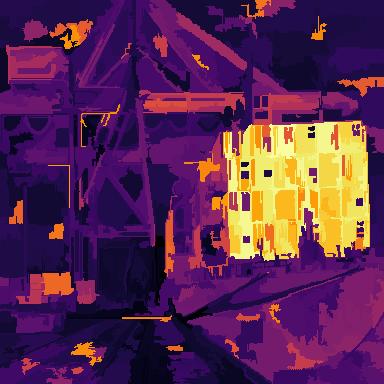}} &
\adjustbox{valign=m}{\includegraphics[width=\fsz]{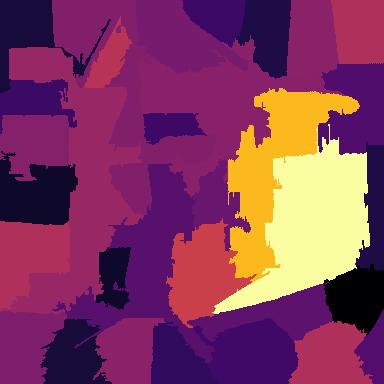}} \\
\end{tblr}
\caption{
Visualization of feature attributions for prediction ``\textit{banjo}'' and ``\textit{container ship}'' with different tokenization strategies: square partitions (ViT), random Voronoi tesselation (RViT) and superpixels (SPiT).
}
\label{fig:attention-maps-2}
\end{figure}

\setlength{\fsz}{0.2\linewidth}
\begin{figure}[tb]
\centering
\footnotesize
\begin{tblr}{
  colspec={
    Q[c,m]Q[c,m]Q[c,m]Q[c,m]},
  rowsep=1.5pt,
  colsep=1.5pt,
  hlines = {1.5pt, white},
  vlines = {1.5pt, white},
  row{1}={font=\scriptsize},
  column{1}={font=\scriptsize},
  cell{2,5}{4}={bg=WildStrawberry},
  cell{3-5}{3}={bg=WildStrawberry},  
}
{ Token.\ Image} & \textsc{Att.\ Flow} & \textsc{Proto.\ PCA} &  LIME (SLIC) \\
\adjustbox{valign=m}{\includegraphics[width=\fsz]{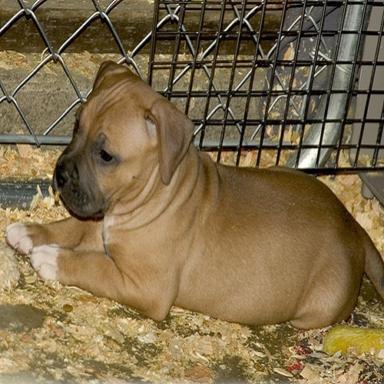}} &
\adjustbox{valign=m}{\includegraphics[width=\fsz]{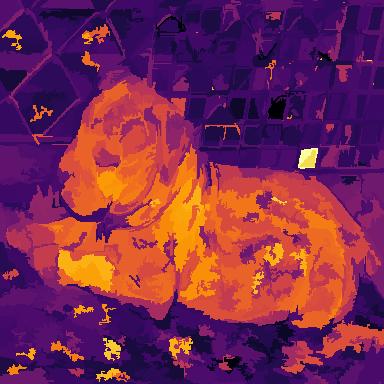}} &
\adjustbox{valign=m}{\includegraphics[width=\fsz]{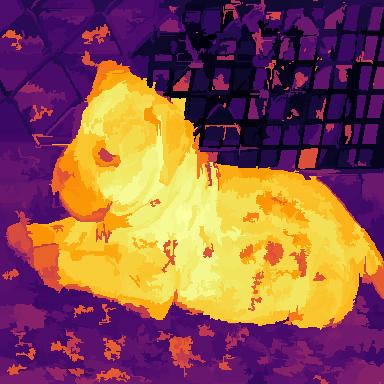}} &
\adjustbox{valign=m}{\includegraphics[width=\fsz]{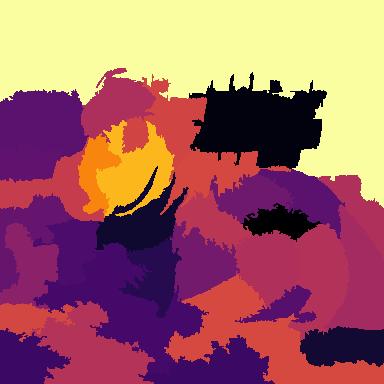}} \\
\adjustbox{valign=m}{\includegraphics[width=\fsz]{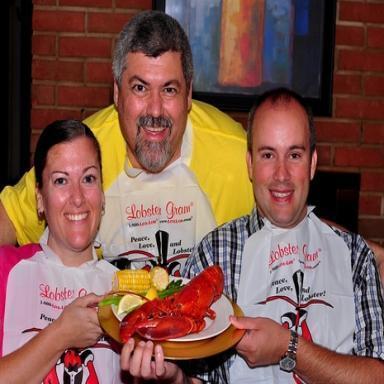}} &
\adjustbox{valign=m}{\includegraphics[width=\fsz]{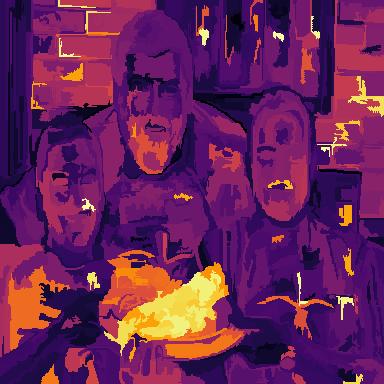}} &
\adjustbox{valign=m}{\includegraphics[width=\fsz]{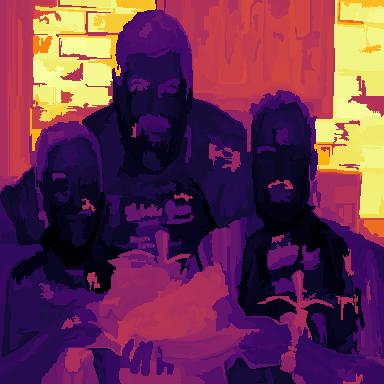}} &
\adjustbox{valign=m}{\includegraphics[width=\fsz]{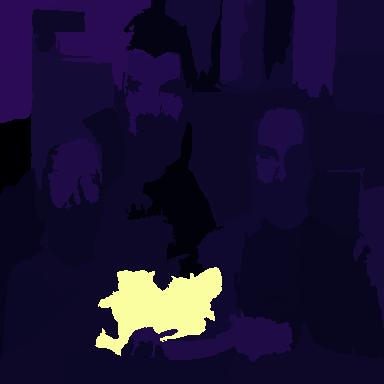}} \\
\adjustbox{valign=m}{\includegraphics[width=\fsz]{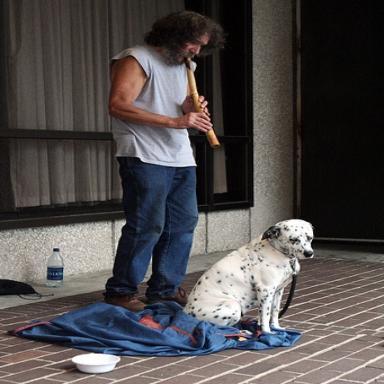}} &
\adjustbox{valign=m}{\includegraphics[width=\fsz]{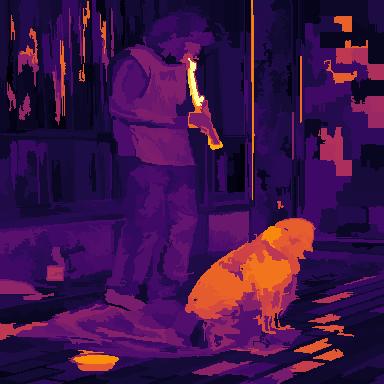}} &
\adjustbox{valign=m}{\includegraphics[width=\fsz]{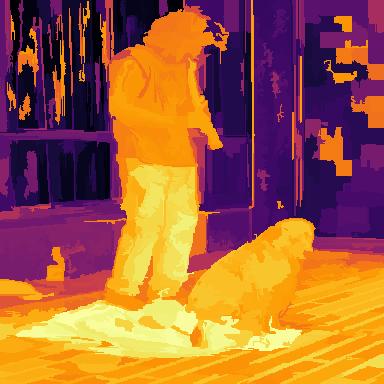}} &
\adjustbox{valign=m}{\includegraphics[width=\fsz]{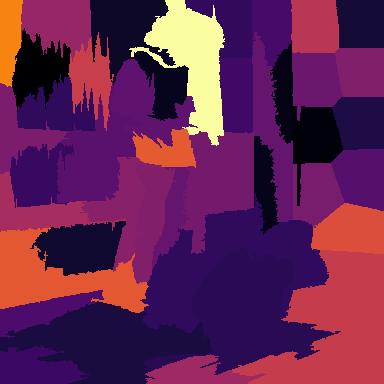}} \\
\adjustbox{valign=m}{\includegraphics[width=\fsz]{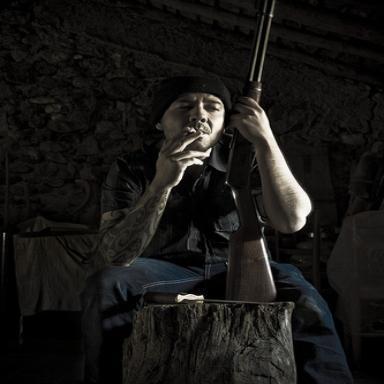}} &
\adjustbox{valign=m}{\includegraphics[width=\fsz]{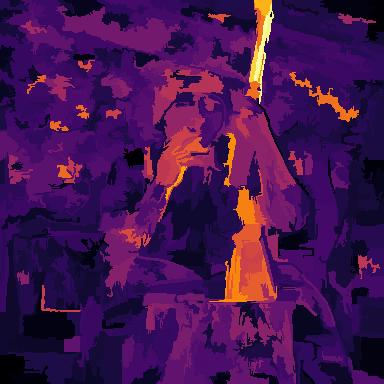}} &
\adjustbox{valign=m}{\includegraphics[width=\fsz]{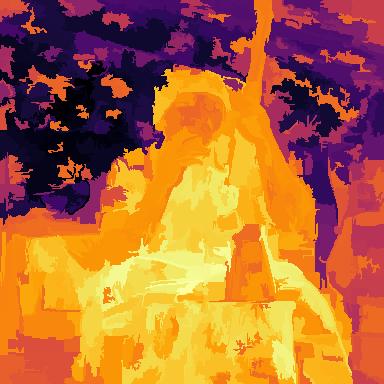}} &
\adjustbox{valign=m}{\includegraphics[width=\fsz]{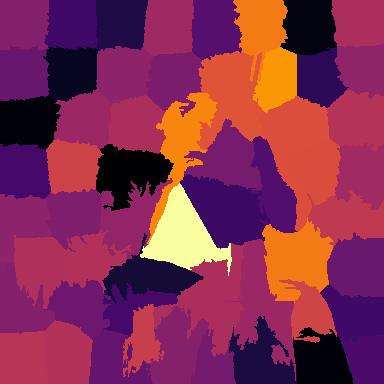}} \\
\end{tblr}
\caption{
Examples of edge cases for attribution maps with SPiT. Row 1 demonstrates a case where LIME fails to provide coherent attributions for the prediction ``\emph{Staffordshire terrier}'', and row 2--3 shows cases where the PCA prototype attributions fail for predictions ``\emph{lobster}'' and ``\emph{flute}'', respectively.
Row 4 shows a case where attributions for both LIME and PCA prototypes are inadequate for the predicted label ``\emph{rifle}''.
}
\label{fig:attention-maps-cont}
\end{figure}

\setlength{\fsz}{0.19\linewidth}
\begin{figure}[tb]
\centering
\footnotesize
\begin{tblr}{
  colspec={Q[c,m]Q[c,m]Q[c,m]Q[c,m]Q[c,m]},
  rowsep=1.5pt,
  colsep=1.5pt,
  column{1} = {font=\scriptsize},
  row{1} = {font=\scriptsize},
}
& {\bfs $q = 0.05$} & \bfs $q=0.10$ & \bfs $q=0.20$ & \bfs $q=0.50$ \\
ViT  / \textsc{Comp} &
\adjustbox{valign=m}{\includegraphics[width=\fsz]{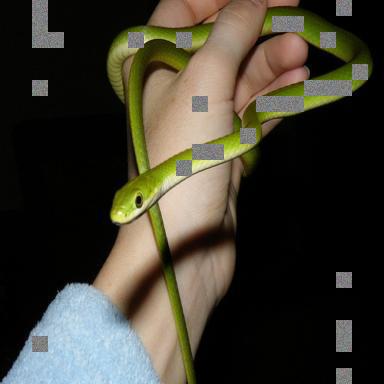}} &
\adjustbox{valign=m}{\includegraphics[width=\fsz]{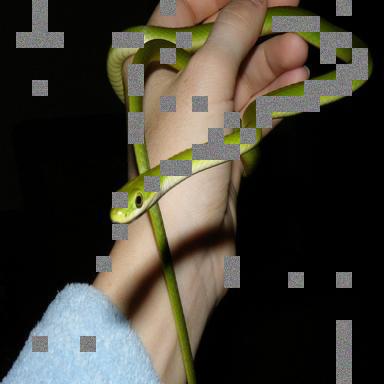}} &
\adjustbox{valign=m}{\includegraphics[width=\fsz]{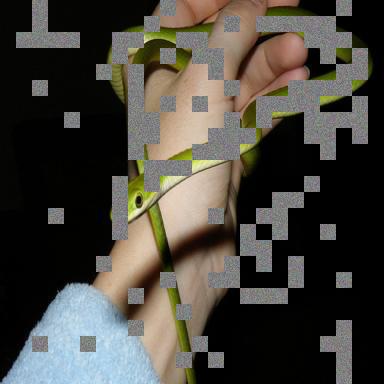}} &
\adjustbox{valign=m}{\includegraphics[width=\fsz]{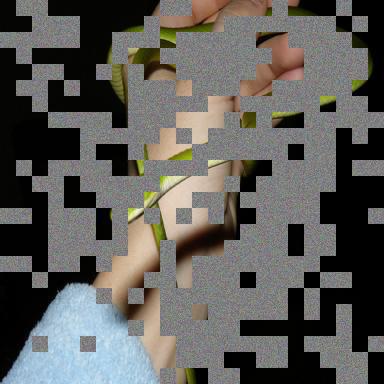}} \\
ViT  / \textsc{Suff} &
\adjustbox{valign=m}{\includegraphics[width=\fsz]{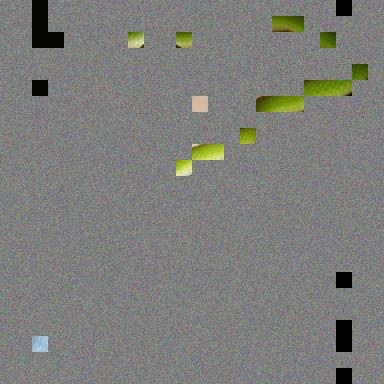}} &
\adjustbox{valign=m}{\includegraphics[width=\fsz]{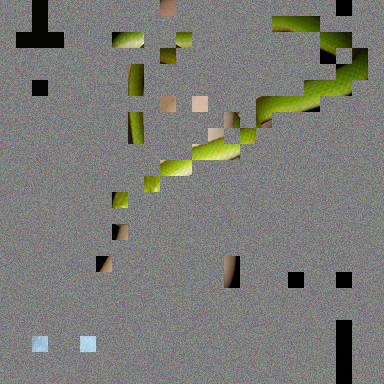}} &
\adjustbox{valign=m}{\includegraphics[width=\fsz]{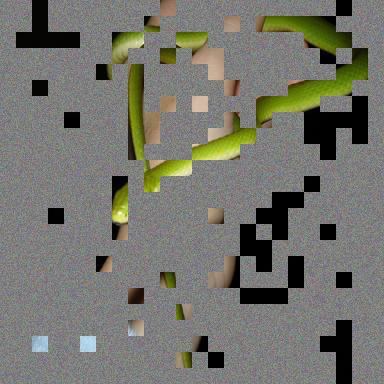}} &
\adjustbox{valign=m}{\includegraphics[width=\fsz]{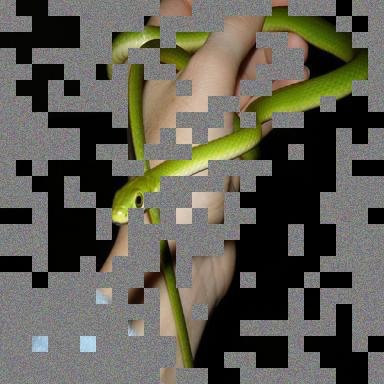}} \\
RViT  / \textsc{Comp} &
\adjustbox{valign=m}{\includegraphics[width=\fsz]{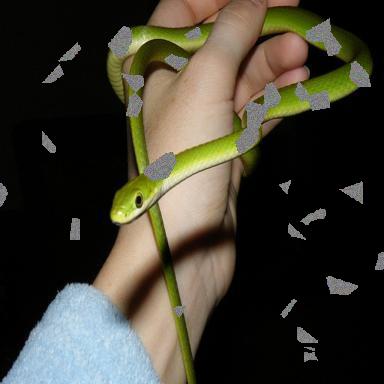}} &
\adjustbox{valign=m}{\includegraphics[width=\fsz]{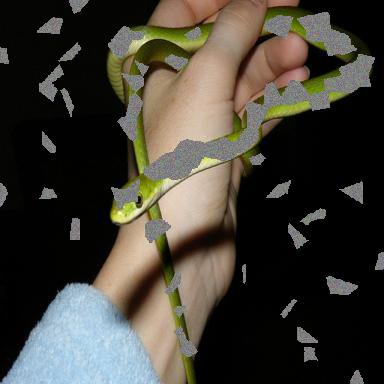}} &
\adjustbox{valign=m}{\includegraphics[width=\fsz]{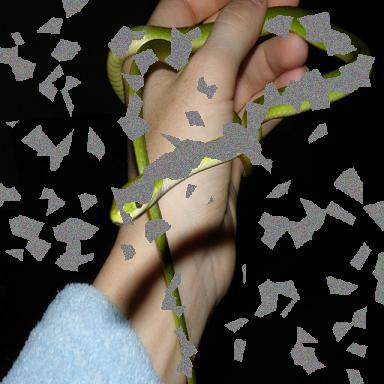}} &
\adjustbox{valign=m}{\includegraphics[width=\fsz]{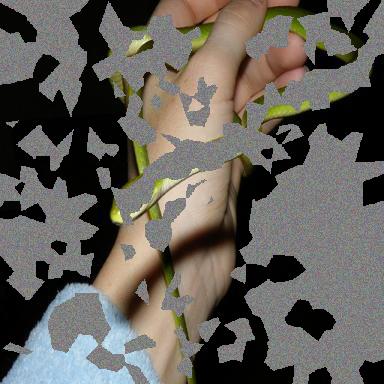}} \\
RViT  / \textsc{Suff} &
\adjustbox{valign=m}{\includegraphics[width=\fsz]{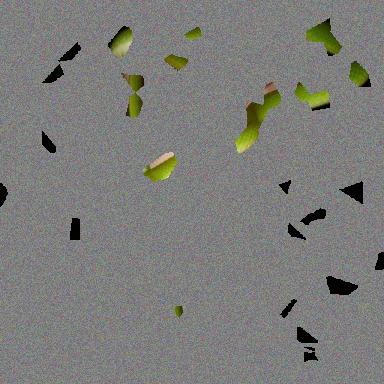}} &
\adjustbox{valign=m}{\includegraphics[width=\fsz]{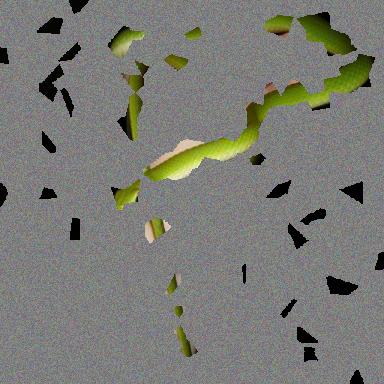}} &
\adjustbox{valign=m}{\includegraphics[width=\fsz]{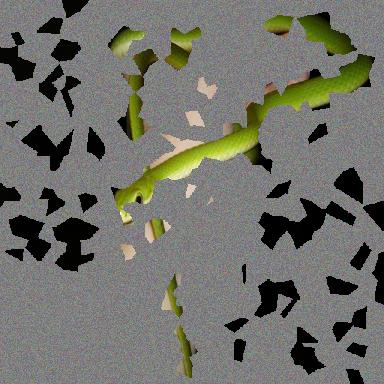}} &
\adjustbox{valign=m}{\includegraphics[width=\fsz]{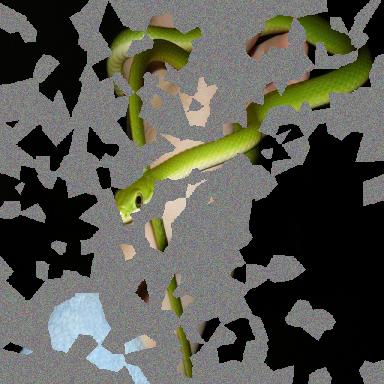}} \\
SPiT  / \textsc{Comp} &
\adjustbox{valign=m}{\includegraphics[width=\fsz]{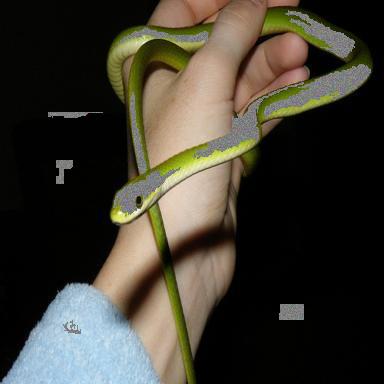}} &
\adjustbox{valign=m}{\includegraphics[width=\fsz]{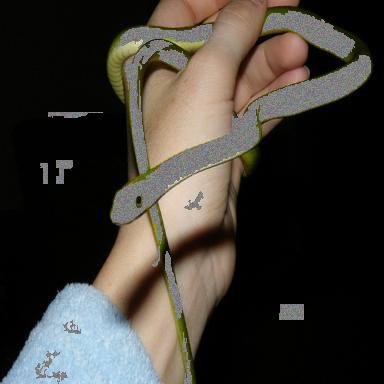}} &
\adjustbox{valign=m}{\includegraphics[width=\fsz]{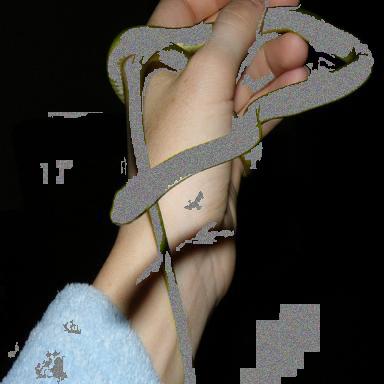}} &
\adjustbox{valign=m}{\includegraphics[width=\fsz]{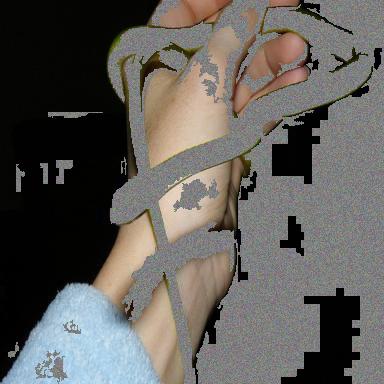}} \\
SPiT / \textsc{Suff} &
\adjustbox{valign=m}{\includegraphics[width=\fsz]{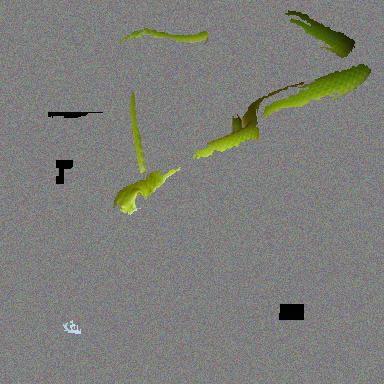}} &
\adjustbox{valign=m}{\includegraphics[width=\fsz]{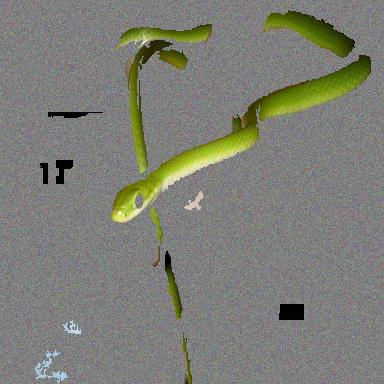}} &
\adjustbox{valign=m}{\includegraphics[width=\fsz]{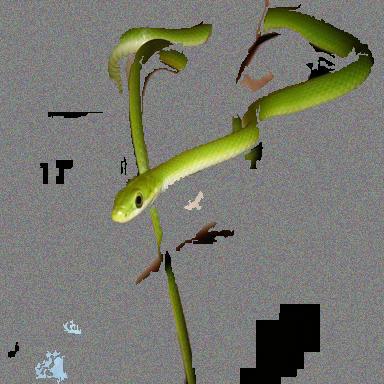}} &
\adjustbox{valign=m}{\includegraphics[width=\fsz]{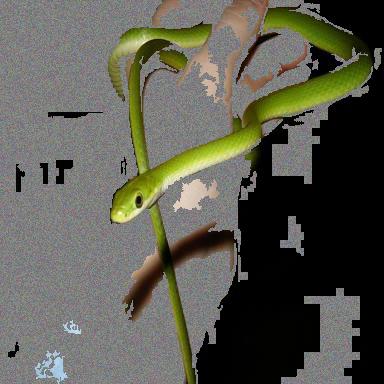}} \\
\end{tblr}
\caption{
Visualization of attention flow occlusions at different quantiles $q$ for prediction ``\textit{grass snake}''.
Note how the scaling of attention maps under superpixel tokenization improves occlusion for the predicted class.
}
\label{fig:occlusion}
\end{figure}

\setlength{\fsz}{0.19\linewidth}
\begin{figure}[tb]
\centering
\footnotesize
\begin{tblr}{
  colspec={Q[c,m]Q[c,m]Q[c,m]Q[c,m]Q[c,m]},
  rowsep=1.5pt,
  colsep=1.5pt,
  column{1} = {font=\scriptsize},
  row{1} = {font=\scriptsize},
}
& {\bfs $q = 0.05$} & \bfs $q=0.10$ & \bfs $q=0.20$ & \bfs $q=0.50$ \\
ViT  / \textsc{Comp} &
\adjustbox{valign=m}{\includegraphics[width=\fsz]{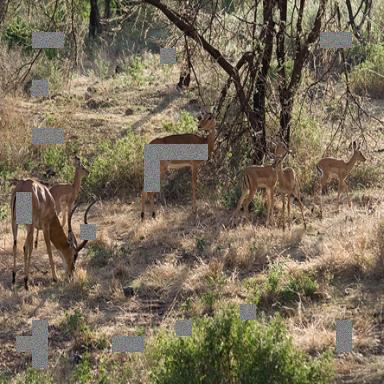}} &
\adjustbox{valign=m}{\includegraphics[width=\fsz]{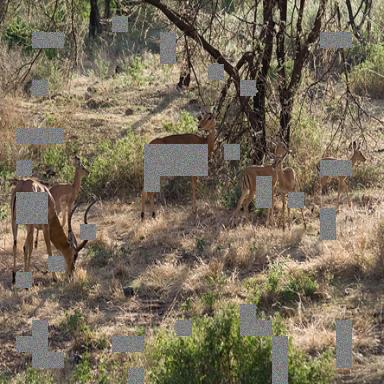}} &
\adjustbox{valign=m}{\includegraphics[width=\fsz]{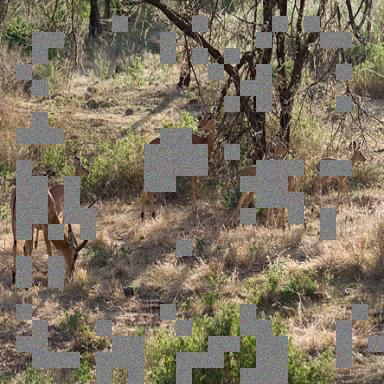}} &
\adjustbox{valign=m}{\includegraphics[width=\fsz]{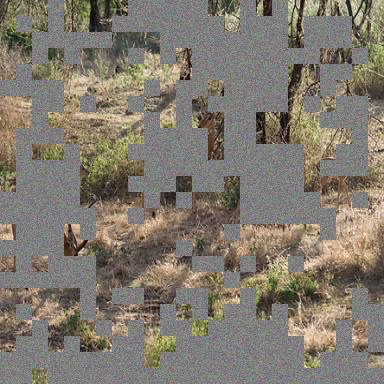}} \\
ViT  / \textsc{Suff} &
\adjustbox{valign=m}{\includegraphics[width=\fsz]{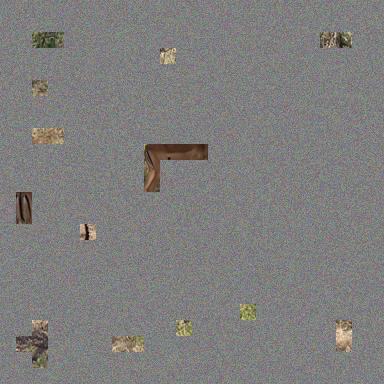}} &
\adjustbox{valign=m}{\includegraphics[width=\fsz]{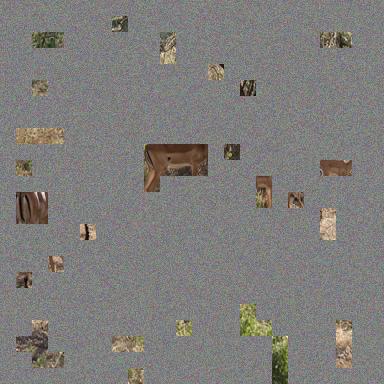}} &
\adjustbox{valign=m}{\includegraphics[width=\fsz]{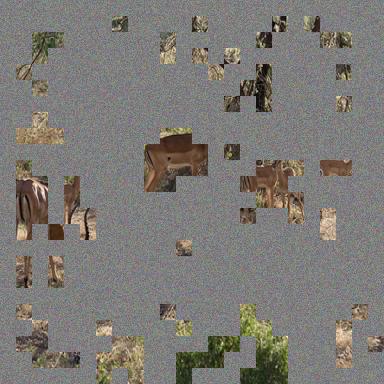}} &
\adjustbox{valign=m}{\includegraphics[width=\fsz]{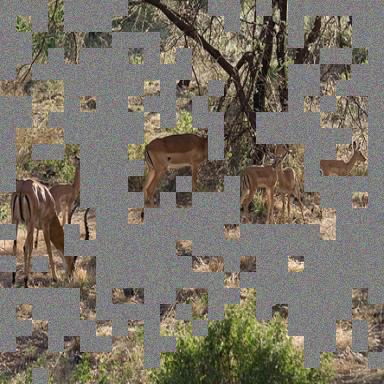}} \\
RViT  / \textsc{Comp} &
\adjustbox{valign=m}{\includegraphics[width=\fsz]{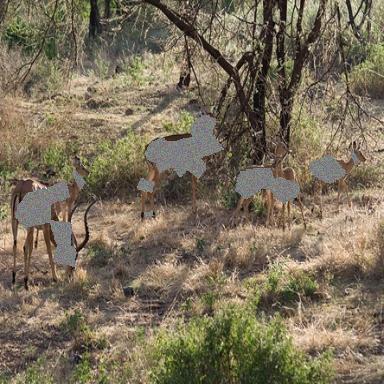}} &
\adjustbox{valign=m}{\includegraphics[width=\fsz]{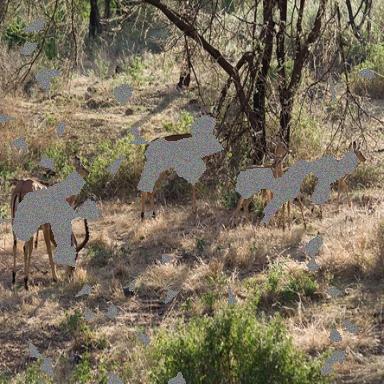}} &
\adjustbox{valign=m}{\includegraphics[width=\fsz]{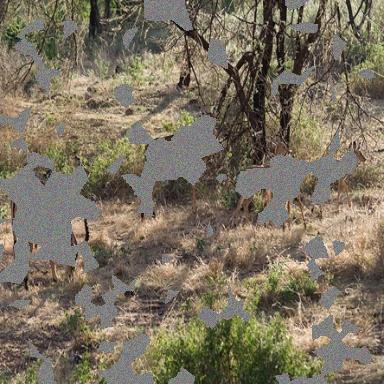}} &
\adjustbox{valign=m}{\includegraphics[width=\fsz]{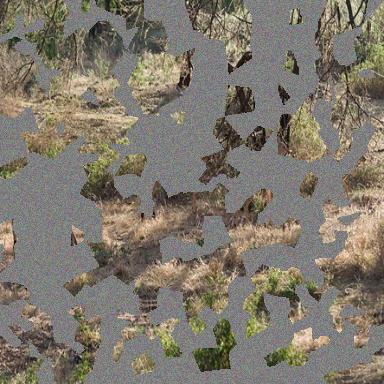}} \\
RViT  / \textsc{Suff} &
\adjustbox{valign=m}{\includegraphics[width=\fsz]{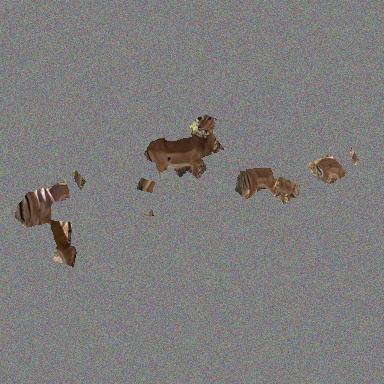}} &
\adjustbox{valign=m}{\includegraphics[width=\fsz]{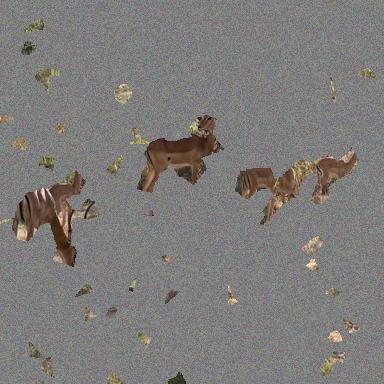}} &
\adjustbox{valign=m}{\includegraphics[width=\fsz]{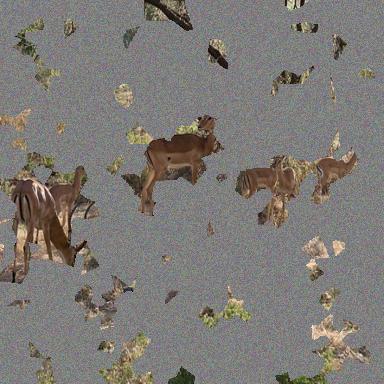}} &
\adjustbox{valign=m}{\includegraphics[width=\fsz]{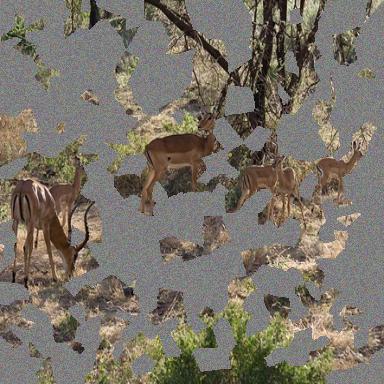}} \\
SPiT  / \textsc{Comp} &
\adjustbox{valign=m}{\includegraphics[width=\fsz]{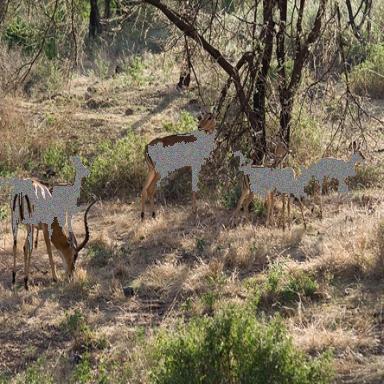}} &
\adjustbox{valign=m}{\includegraphics[width=\fsz]{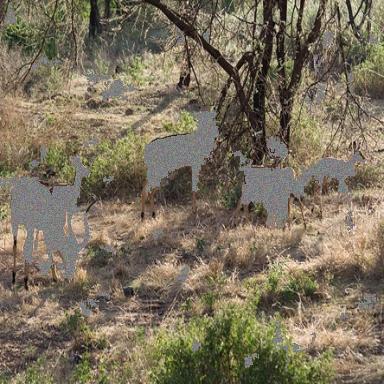}} &
\adjustbox{valign=m}{\includegraphics[width=\fsz]{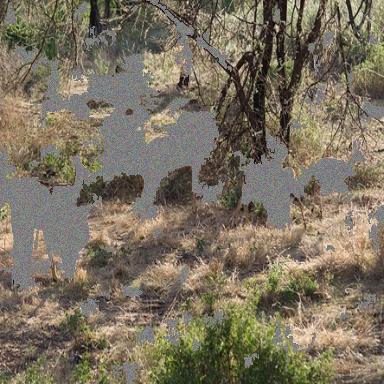}} &
\adjustbox{valign=m}{\includegraphics[width=\fsz]{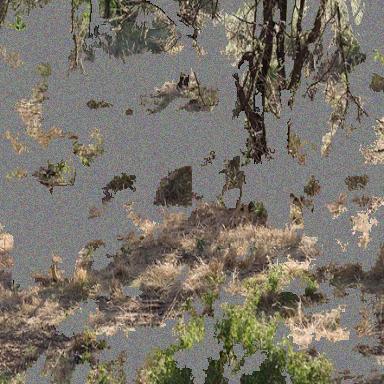}} \\
SPiT / \textsc{Suff} &
\adjustbox{valign=m}{\includegraphics[width=\fsz]{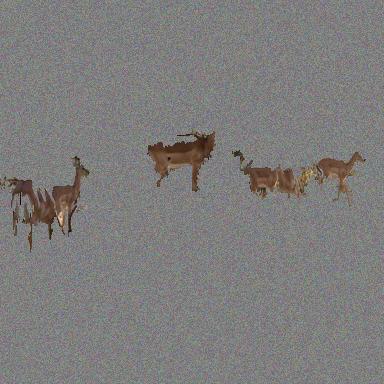}} &
\adjustbox{valign=m}{\includegraphics[width=\fsz]{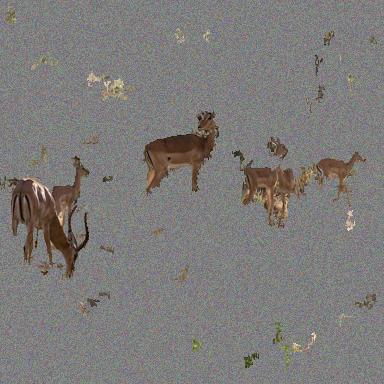}} &
\adjustbox{valign=m}{\includegraphics[width=\fsz]{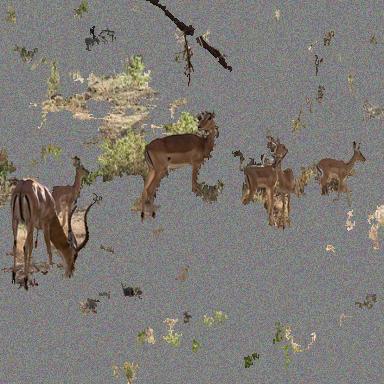}} &
\adjustbox{valign=m}{\includegraphics[width=\fsz]{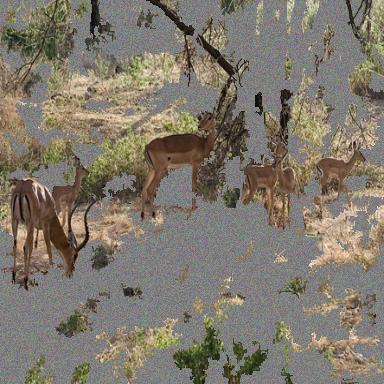}} \end{tblr}
\caption{
Visualization of attention flow occlusions at different quantiles $q$ for prediction ``\textit{impala}''.
Note how the scaling of attention maps under superpixel tokenization improves occlusion for the predicted class.
}
\label{fig:occlusion2}
\end{figure}

To quantify the faithfulness of the attributions for each model, we used comprehensiveness and sufficiency as proposed by \maincitet{erasercompsuff}.
Given a sequence of quantiles $Q \in [0, 1]$ from an attribution, these metrics are given by
\begin{align}
    \textsc{Comp}_{Q \mid x, \Phi} = \frac{1}{\lvert Q \rvert} \sum_{q \in Q} \big(\Phi(x;\theta) - \Phi(x \setminus x_{>q};\theta)\big), \\
    \textsc{Suff}_{Q \mid x, \Phi} = \frac{1}{\lvert Q \rvert} \sum_{q \in Q} \big(\Phi(x;\theta) - \Phi(x \setminus x_{\leq q}; \theta)\big).
\end{align}
The benefit of these metrics is that they are symmetrical, and invariant to the scaling of the attributions due to using quantiles to produce the masks.
Following the procedure outlined by \maincitet{erasercompsuff} we set the quantiles to $Q = (0.01, 0.05, 0.2, 0.5)$.
Figs.~\ref{fig:attention-maps} and \ref{fig:attention-maps-2} show additional attributions, while Figs.~\ref{fig:occlusion} and~\ref{fig:occlusion2} illustrate the occlusions with the selected quantiles.
While we show that SPiT produces strong attributions, the proposed method is by no means free of failure cases.
We find it informative to also include visualizations of the limiting edge cases for attributions in Fig.~\ref{fig:attention-maps-cont}.

\section{Unsupervised Salient Segmentation Details}
\label{sec:segmentation}
\begin{table*}[tb]
  \colorlet{downcolor}{black!55}
  \newcommand{\dc}{\color{downcolor}}
  \sisetup{detect-all,
    uncertainty-separator=\pm,
    table-format=1.3,
    uncertainty-mode=separate,
  }
  \caption{Extended unsupervised salient segmentation results. Models including extensive decoders and postprocessing are colored in \textcolor{downcolor}{gray}.} 
  \label{tab:salient_segmentation_ext}
  \scriptsize
  \centering
  \begin{tabular}{l@{ }@{ }l@{ }@{ }l@{ }@{ }SSSSSSSSS}
    \toprule
    & & & \multicolumn{3}{c}{\textsc{ECSSD}} & \multicolumn{3}{c}{\textsc{DUTS}} & \multicolumn{3}{c}{\textsc{DUT-OMRON}} \\
    \cmidrule(r){4-6} 
    \cmidrule(r){7-9} 
    \cmidrule(r){10-12}
    Model & Method & Postproc. & {$\max F_\beta$}  & {IoU} & {Acc.} & {$\max F_\beta$}  & {IoU} & {Acc.} & {$\max F_\beta$}  & {IoU} & {Acc.} \\
    \midrule
    \dc DINO-B14\tdgg & \dc TokenCut & \dc BL
    & \dc 0.874 & \dc 0.772 & \dc 0.934
    & \dc 0.755 & \dc 0.624 & \dc 0.914
    & \dc 0.697 & \dc 0.618 & \dc 0.897\\
    \dc DINO-S8  & \dc SelfMask & \dc MF        
    & \dc 0.894 & \dc 0.779 & \dc 0.943 
    & \dc 0.789 & \dc 0.648 & \dc 0.938 
    & \dc 0.733 & \dc 0.609 & \dc 0.923 \\
    \dc DINO-S8  & \dc SelfMask & \dc MF+BL     
    & \dc 0.911 & \dc 0.803 & \dc 0.951 
    & \dc 0.819 & \dc 0.694 & \dc 0.949 
    & \dc 0.774 & \dc 0.677 & \dc 0.939 \\
    \dc DINO-S8 & \dc MOVE & \dc Seg+MF    
    & \dc 0.921 & \dc 0.835 & \dc 0.956 
    & \dc 0.829 & \dc 0.728 & \dc 0.954 
    & \dc 0.756 & \dc 0.666 & \dc 0.933 \\
    \dc DINO-S8 & \dc MOVE & \dc Seg+MF+BL 
    & \dc 0.917 & \dc 0.800 & \dc 0.952 
    & \dc 0.827 & \dc 0.687 & \dc 0.952 
    & \dc 0.766 & \dc 0.665 & \dc 0.937 \\
    DINO-B14\tdgg & TokenCut &\xmark
    & 0.803 & 0.712 & 0.918
    & 0.672 & 0.576 & 0.903
    & 0.600 & 0.533 & 0.880 \\
    SPiT-B16 & TokenCut & \xmark
    & 0.903 & 0.773 & 0.934
    & 0.771 & 0.639 & 0.894
    & 0.711 & 0.564     & 0.868\\
    \bottomrule
    \multicolumn{11}{l}{\tiny \tdgg As reported by \maincitet{tokencut}.}
  \end{tabular}
\end{table*}

The TokenCut~\maincitep{tokencut} framework proposes to use a normalized cut~\maincitep{normalizedcut} over the key features without class tokens in the last self-attention layer of the network.
A soft adjacency $A_{\mathrm{TC}}$ is computed using cosine similarities, which are thresholded using a small threshold $\tau_{\mathrm{TC}} = 1/3$ to estimate adjacency over the complete graph over token features.
The normalized cut is performed by extracting the Fiedler vector; the second smallest eigenvector of the graph Laplacian, and gives a bipartition of the graph into foreground and background elements. The original paper~\maincitep{tokencut} uses DINO~\maincitep{dino} as a pre-trained base model.

We found that extracting the key tokens from the last self-attention operator in the network is less effective than simply using the final features for the SPiT framework.
In TokenCut, the saliency map is refined using postprocessing with a bilateral solver, however, in the SPiT framework this step is clearly redundant.
Instead, we simply standardize the Fiedler vector using its mean and standard deviation, and map the result on the segmentations from the SPiT tokenizer.
For certain images, the foreground and background elements could be swapped under the standard unsupervised normalized cut method.
From our experiments on interpretability, we found that simply taking the class token for the full image, and comparing it using cosine similarity to class tokens (produced given the saliency mask) will accurately provide a robust estimate of which element is the foreground and the background.

\section{Feature Correspondences}
\label{sec:ftcorr}

The work by \maincitet{dino} and \maincitet{dinov2} established certain emergent properties in self-supervised models, where the tokenized features of ViT trained with self-supervised methods provide inherent interpretability and inter-image feature correspondence. 
Given our results on feature attributions from Section~\mainref{sec:dense_evaluation}, we perform experiments to visualize feature correspondences to see if similar emergent properties can be observed from supervised training with superpixel tokenization.

\subsubsection{Method:}
A sequence of support images $(\xi_n)_{n=1}^N$ are selected, with labels such that $y_n \neq y_m$ for all $1 \leq m,n \leq N$, as a set of features to search from. 
Furthermore, these images are selected such that the WordNet~\maincitep{wordnet} hypernym of the labels are the same, \eg, `dog' in Fig.~\ref{fig:featcorr_ext}.
We compute the normalized superpixel token features ${\big(z_n : z_n = \Phi(\xi_n) / \lVert \Phi(\xi_n) \rVert\big)_{n=1}^N}$ where $\Phi$ is our SPiT-B16 model with gradient including feature extraction. 
We omit the class tokens, and compute correspondences via cross-attention for each pair $A^{\times}_{mn} = \sigma(z_m z_n^\intercal)$ where $\sigma$ is the softmax with a temperature of $0.01$. 
For visualizing the high-dimensional features, we compute a three component PCA to produce pseudocolors $c_n = \mathrm{PC}_3(z_n) \in \mathbb{R}^{3 \times k}$ for $z_n \in \mathbb{R}^{k\times d}$, where $\mathrm{PC}_3(\cdot)$ extracts the first $3$ principal components.
These are normalized to $[0,1]$, and mapped to RGB channels directly in the respective order.\footnote{We use \texttt{torch.pca\_lowrank}, which has nondeterministic behaviour. 
Slight deviations in pseudocolors could therefore occur when reproducing the visualizations.}
The idea is to visualize the correspondences using PCA pseudocolors from the source image mapped to a target image.
The principal components are thresholded using normalized cut, \cf Section~\ref{sec:segmentation}, and the feature correspondences are computed via 
\begin{align}
    c_{m \rightarrow n} = (A^{\times}_{mn})^\intercal c_m ,
\end{align}
such that $c_{m \rightarrow n}$ are the projected feature correspondences from the cross attention $A^{\times}_{mn}$ over the full feature space using pseudocolors from the source image $m$ into the target image $n$.
In other words, for each superpixel in the target image, we mix the pseudocolors of the corresponding superpixels in the source image and visualize them as the transferred pseudocolors.
Given that these correspondences are directed due to the softmax operator, we compute the correspondences for every support image to illustrate the effect of using different source image mappings.

In Fig.~\ref{fig:featcorr_ext}, we see that these feature correspondences pick up on the nuances of the different breeds of dogs, and are able to map similar parts between images, even with multiple instances of dogs in the same image.
In Fig.~\ref{fig:featcorr_ext2}, we extend the experiment to a broader class of mammals with similar, albeit slightly less clear correspondence.
In particular, the second row of Fig.~\ref{fig:featcorr_ext2} illustrates a case where the feature correspondences exhibit less structure.

Notably, our model has not been trained with contrastive self-supervised approaches, and the features are derived from a model trained only supervisedly on IN1k.
Moreover, the class tokens are removed before computing the cross attention and PCA, which confirms that the tokenized features themselves are informative for discriminative tasks.

\setlength{\fsz}{0.165\linewidth}
\begin{figure}[tb]
\centering
\footnotesize
\begin{tblr}{
  colspec={Q[c,m]Q[c,m]Q[c,m]Q[c,m]Q[c,m]},
  abovesep=0pt,
  belowsep=0pt,
  stretch=0,
  rowsep=1.5pt,
  colsep=1.5pt,
  column{1} = {font=\scriptsize},
  row{1} = {font=\scriptsize},
}
\SetCell[c=5]{c} \textbf{Original Images} \\
\includegraphics[width=\fsz, height=\fsz]{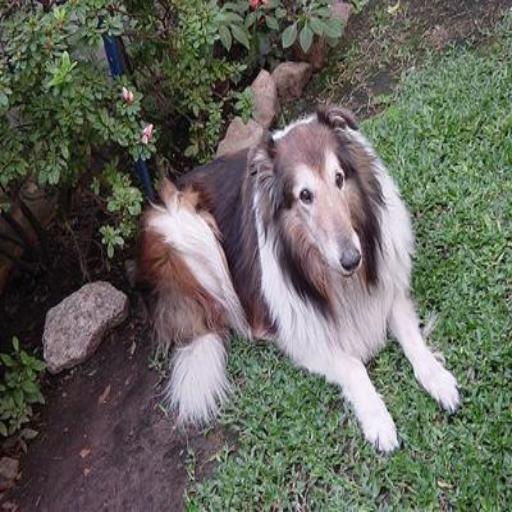} &
\includegraphics[width=\fsz, height=\fsz]{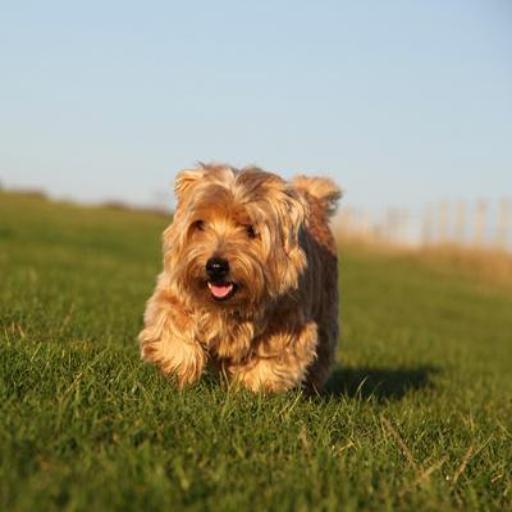} &
\includegraphics[width=\fsz, height=\fsz]{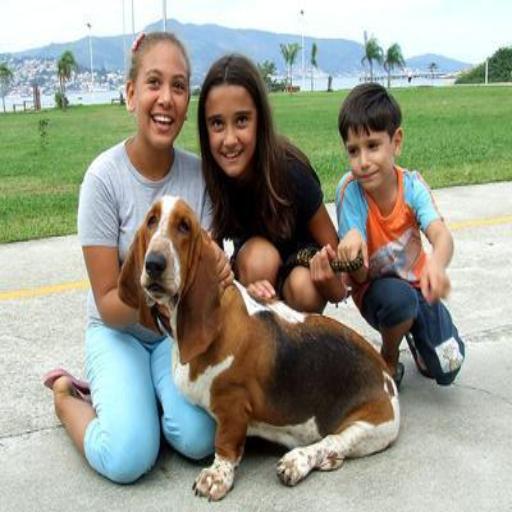} &
\includegraphics[width=\fsz, height=\fsz]{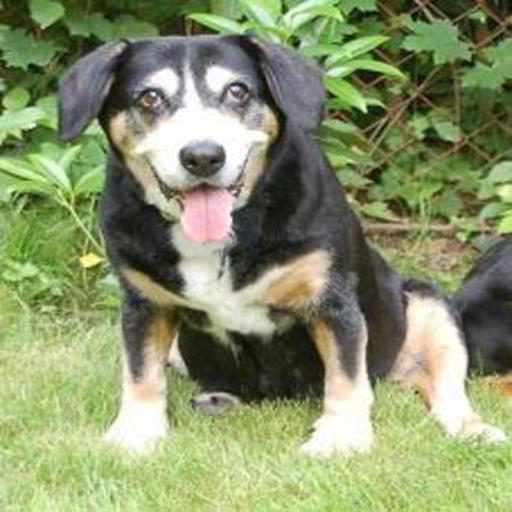} &
\includegraphics[width=\fsz, height=\fsz]{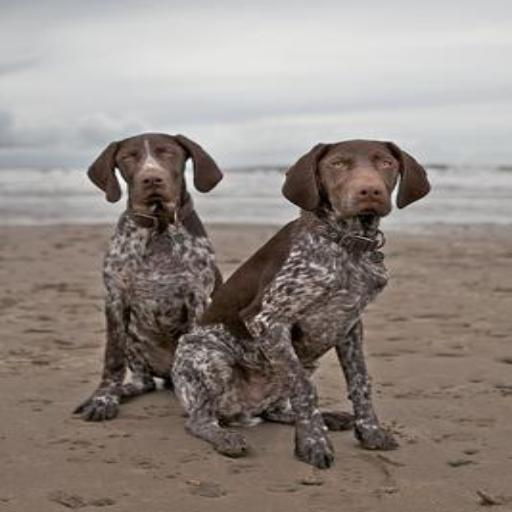}
\end{tblr}
\begin{tblr}{
  colspec={Q[c,m]Q[c,m]Q[c,m]Q[c,m]Q[c,m]},
  abovesep=0pt,
  belowsep=0pt,
  stretch=0,
  rowsep=1.5pt,
  colsep=1.5pt,
  column{1} = {font=\scriptsize},
  column{2} = {leftsep=10pt},
  row{1} = {font=\scriptsize},
}
\SetCell[c=1]{c} \textbf{Source} & 
\SetCell[c=4]{c} \textbf{Feature Correspondences} \\
\includegraphics[width=\fsz, height=\fsz]{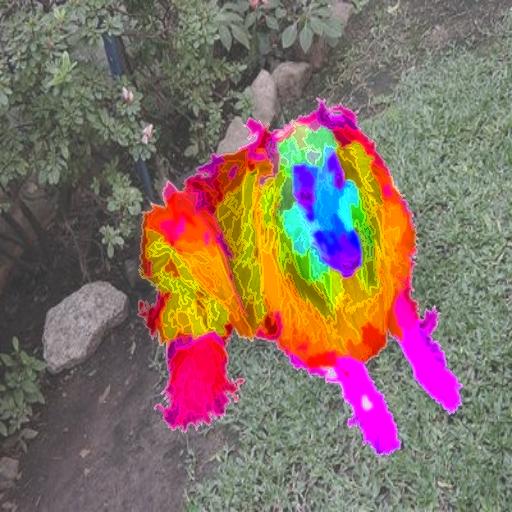} &
\includegraphics[width=\fsz, height=\fsz]{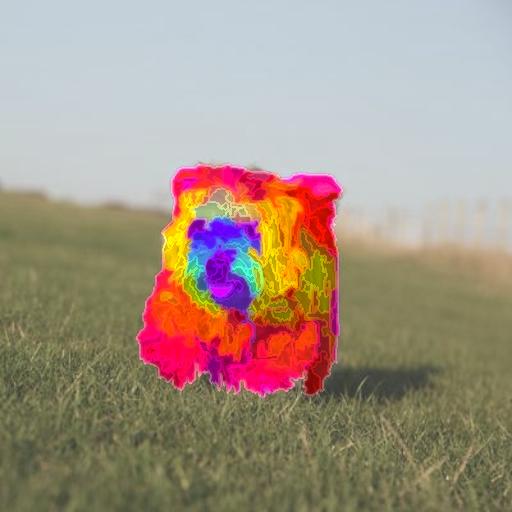} &
\includegraphics[width=\fsz, height=\fsz]{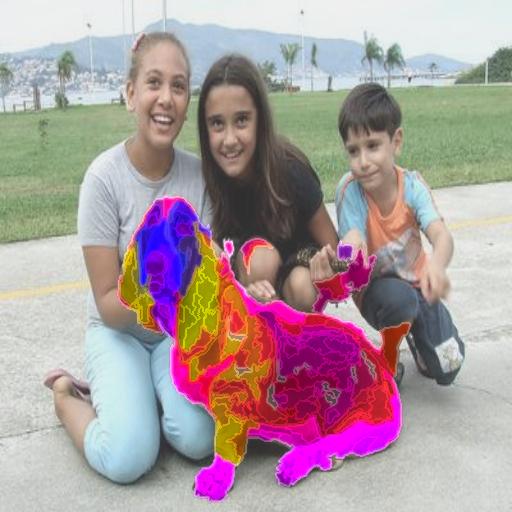} &
\includegraphics[width=\fsz, height=\fsz]{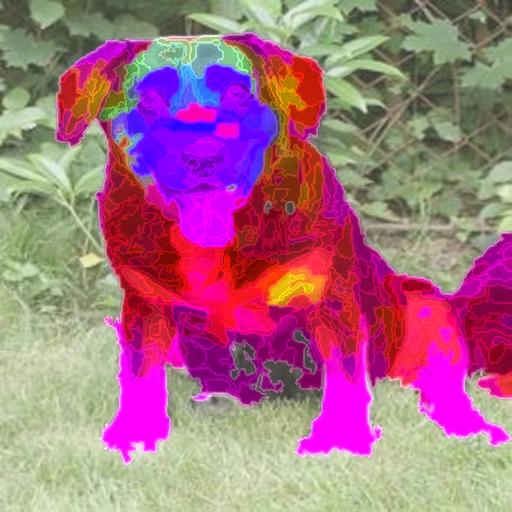} &
\includegraphics[width=\fsz, height=\fsz]{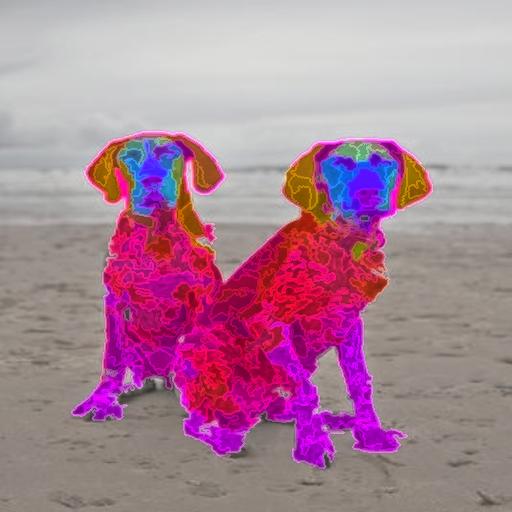} \\
\includegraphics[width=\fsz, height=\fsz]{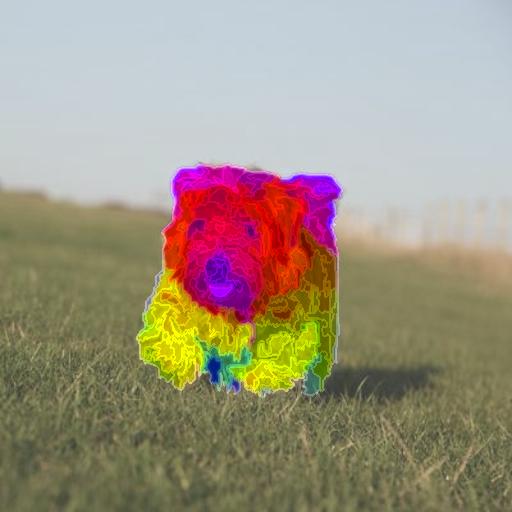} &
\includegraphics[width=\fsz, height=\fsz]{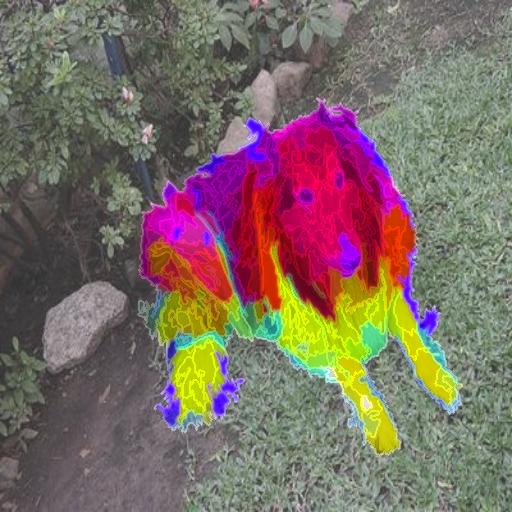} &
\includegraphics[width=\fsz, height=\fsz]{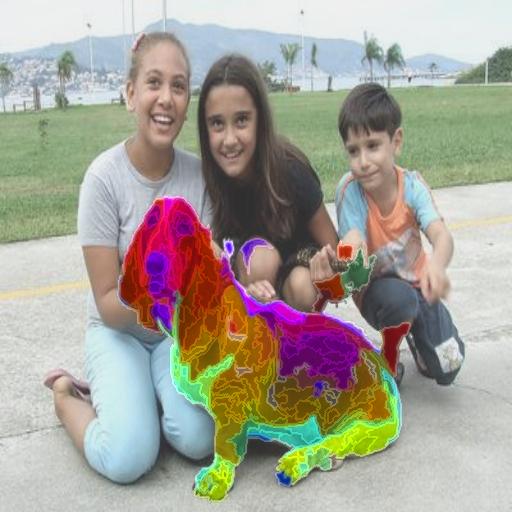} &
\includegraphics[width=\fsz, height=\fsz]{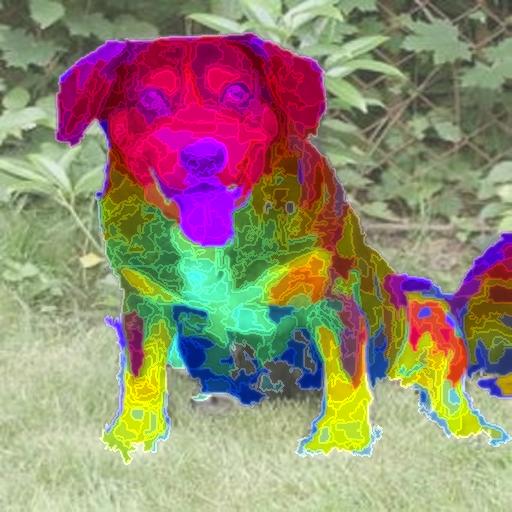} &
\includegraphics[width=\fsz, height=\fsz]{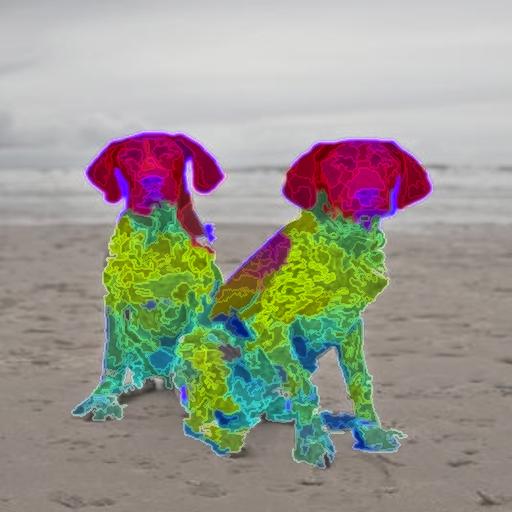} \\
\includegraphics[width=\fsz, height=\fsz]{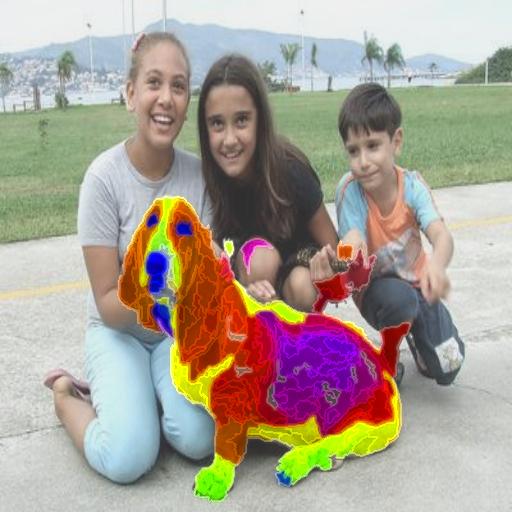} &
\includegraphics[width=\fsz, height=\fsz]{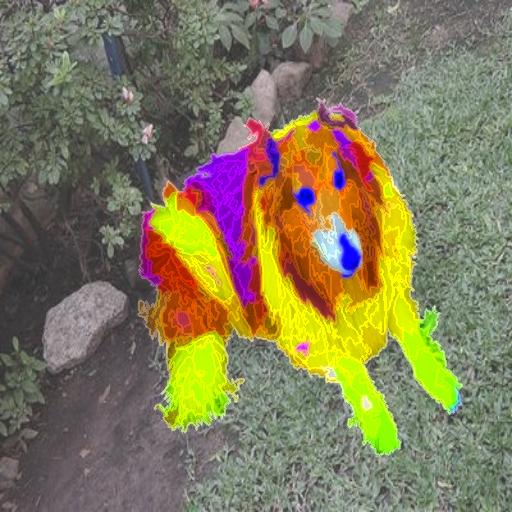} &
\includegraphics[width=\fsz, height=\fsz]{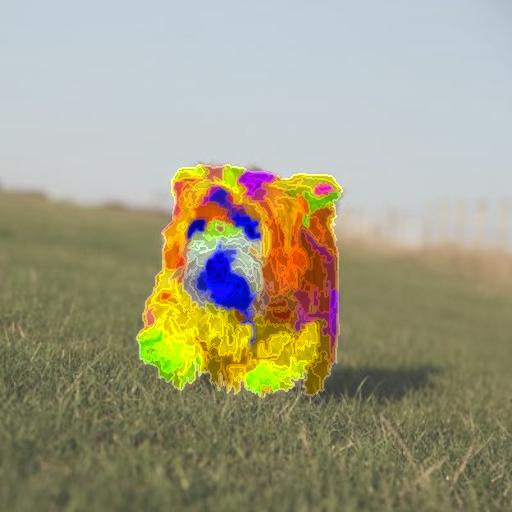} &
\includegraphics[width=\fsz, height=\fsz]{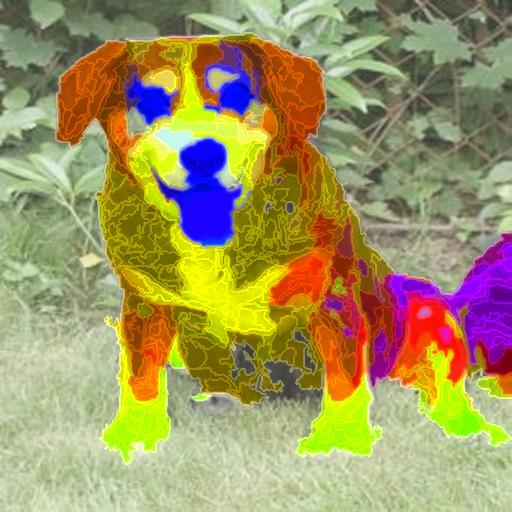} &
\includegraphics[width=\fsz, height=\fsz]{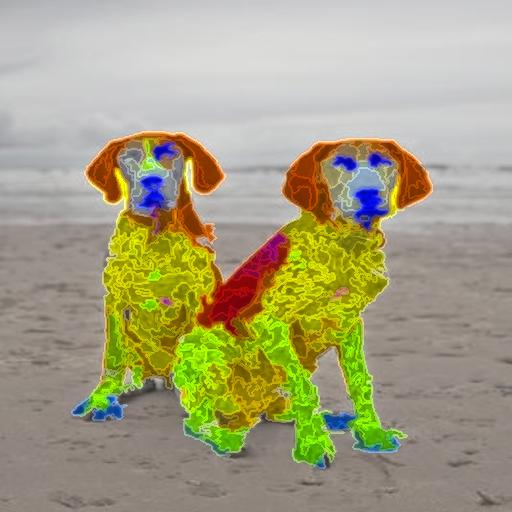} \\
\includegraphics[width=\fsz, height=\fsz]{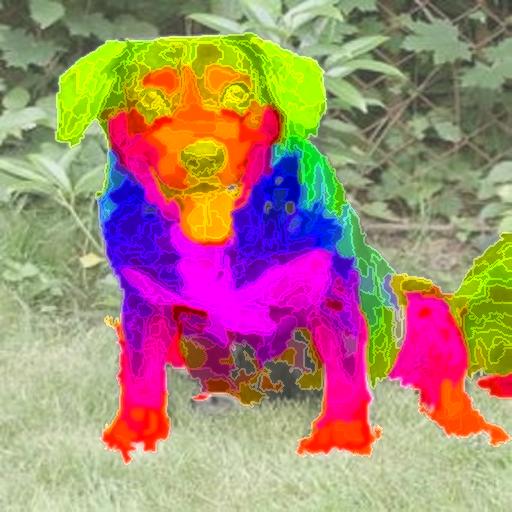} &
\includegraphics[width=\fsz, height=\fsz]{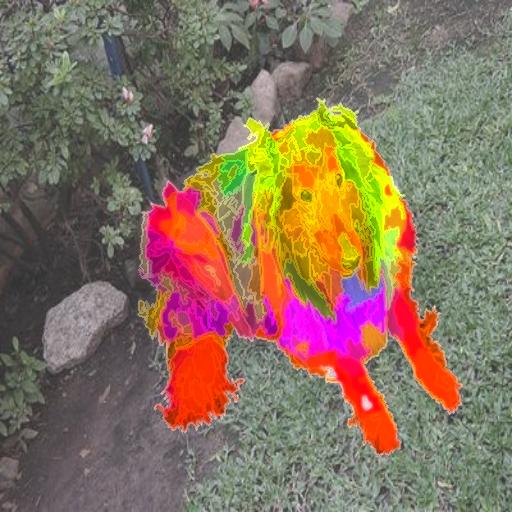} &
\includegraphics[width=\fsz, height=\fsz]{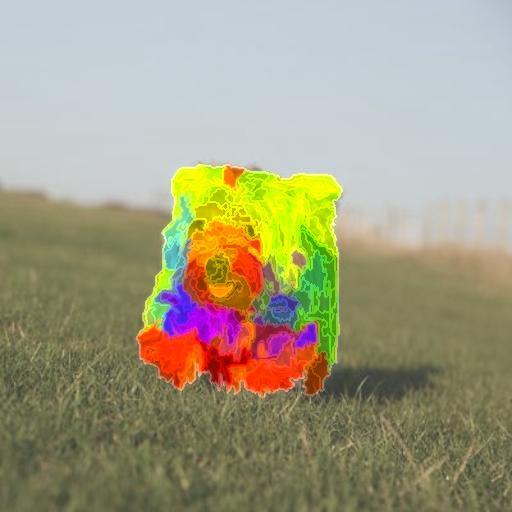} &
\includegraphics[width=\fsz, height=\fsz]{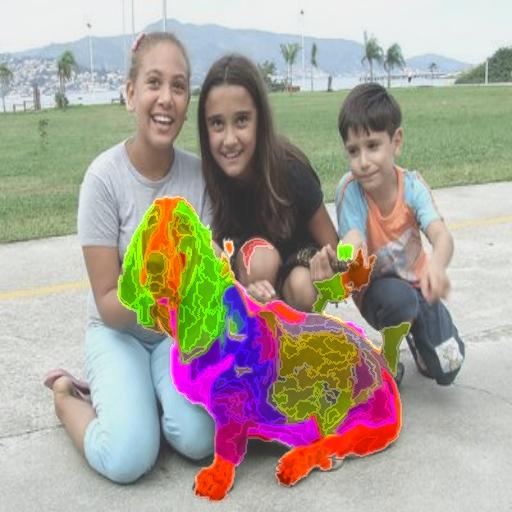} &
\includegraphics[width=\fsz, height=\fsz]{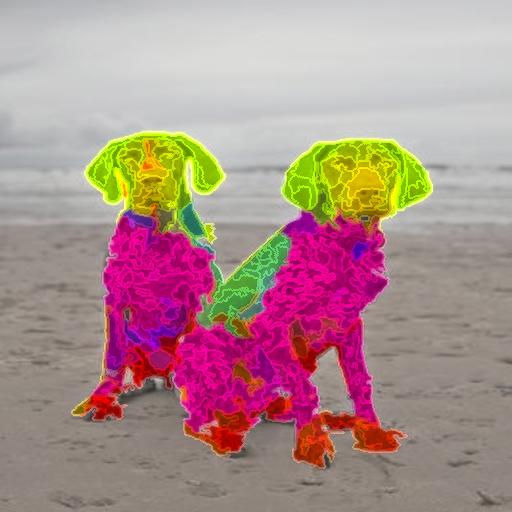} \\
\includegraphics[width=\fsz, height=\fsz]{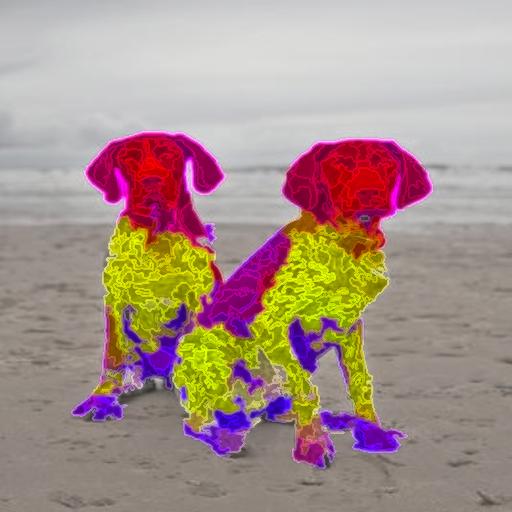} &
\includegraphics[width=\fsz, height=\fsz]{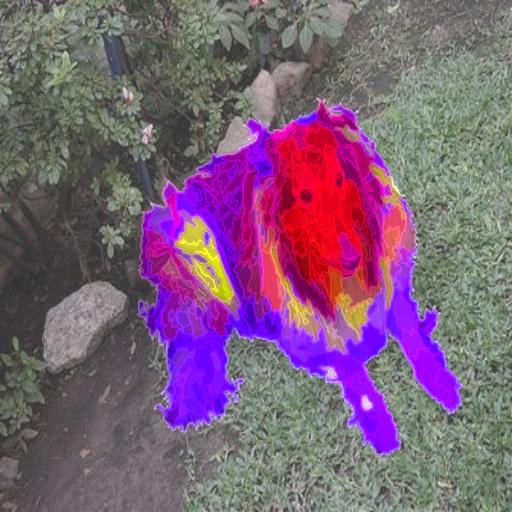} &
\includegraphics[width=\fsz, height=\fsz]{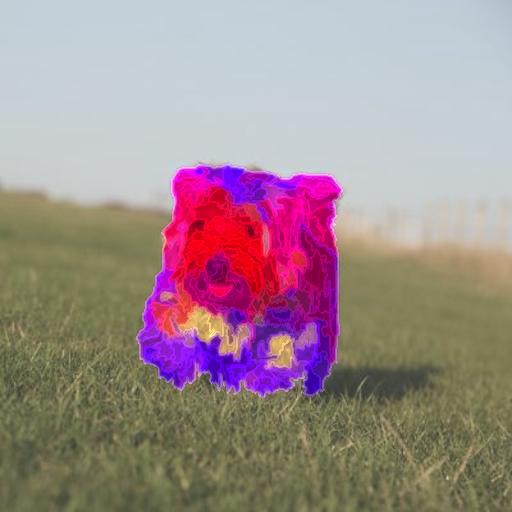} &
\includegraphics[width=\fsz, height=\fsz]{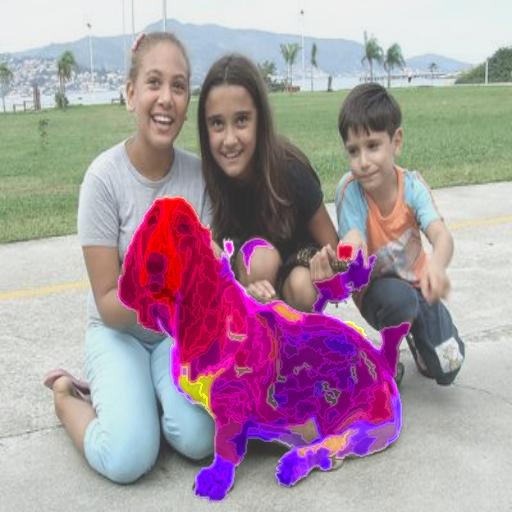} &
\includegraphics[width=\fsz, height=\fsz]{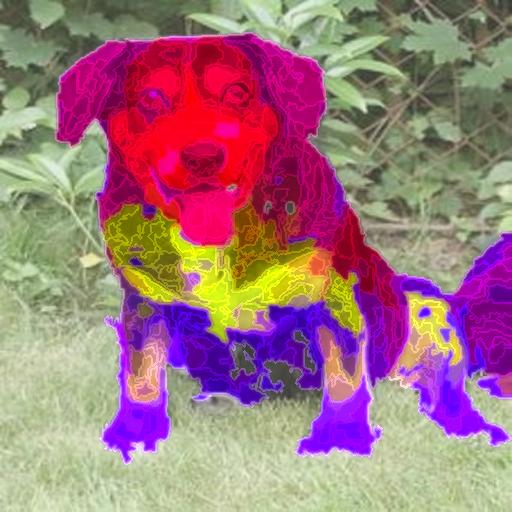}
\end{tblr}
\caption{
Visualization of feature correspondences from source features from superpixel tokens (left) to target images (right). Features are mapped via single head normalized cross attention between tokenized images, using pseudocolors from low rank PCA with three components. Images contain different classes (breeds) under the common hypernym ``\textit{domestic dog, canis familiaris}''.
}
\label{fig:featcorr_ext}
\end{figure}

\setlength{\fsz}{0.165\linewidth}
\begin{figure}[tb]
\centering
\footnotesize
\begin{tblr}{
  colspec={Q[c,m]Q[c,m]Q[c,m]Q[c,m]Q[c,m]},
  abovesep=0pt,
  belowsep=0pt,
  stretch=0,
  rowsep=1.5pt,
  colsep=1.5pt,
  column{1} = {font=\scriptsize},
  row{1} = {font=\scriptsize},
}
\SetCell[c=5]{c} \textbf{Original Images} \\
\includegraphics[width=\fsz, height=\fsz]{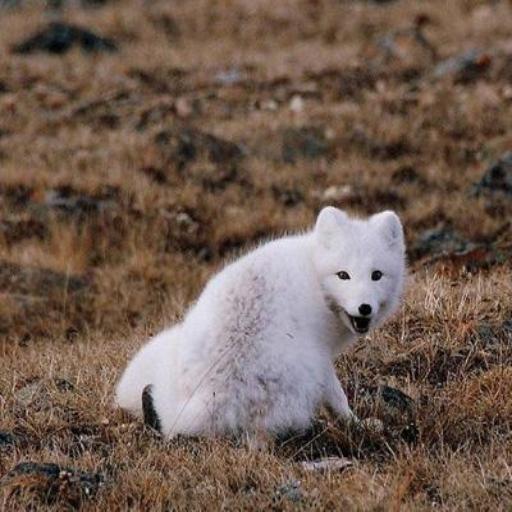} &
\includegraphics[width=\fsz, height=\fsz]{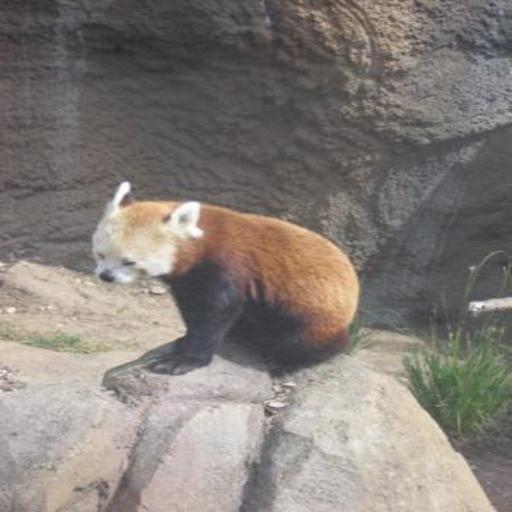} &
\includegraphics[width=\fsz, height=\fsz]{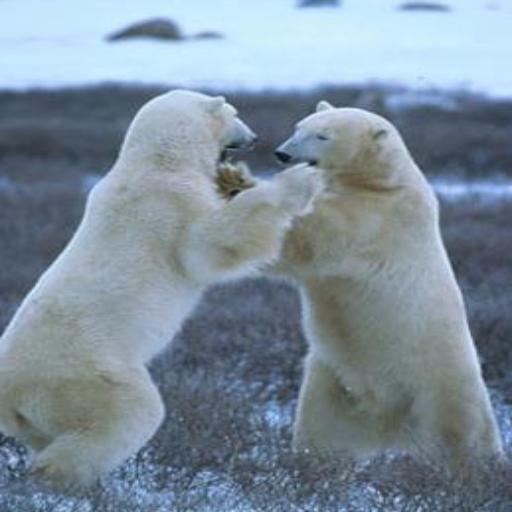} &
\includegraphics[width=\fsz, height=\fsz]{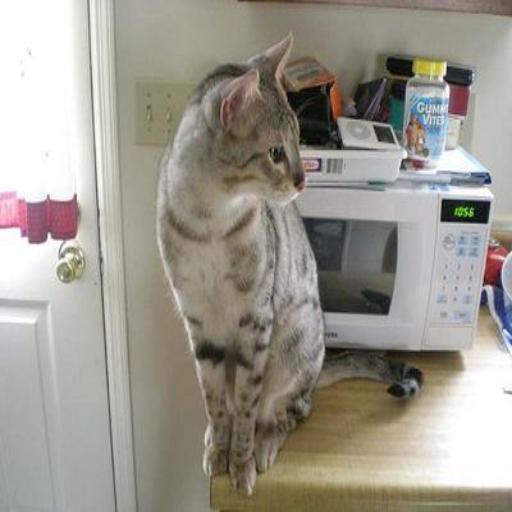} &
\includegraphics[width=\fsz, height=\fsz]{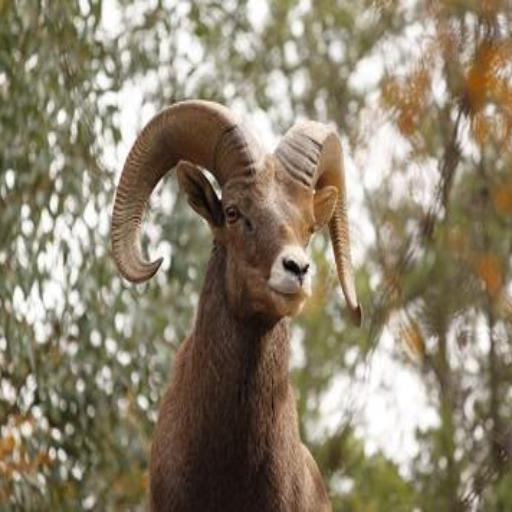}
\end{tblr}
\begin{tblr}{
  colspec={Q[c,m]Q[c,m]Q[c,m]Q[c,m]Q[c,m]},
  abovesep=0pt,
  belowsep=0pt,
  stretch=0,
  rowsep=1.5pt,
  colsep=1.5pt,
  column{1} = {font=\scriptsize},
  column{2} = {leftsep=10pt},
  row{1} = {font=\scriptsize},
}
\SetCell[c=1]{c} \textbf{Source} & 
\SetCell[c=4]{c} \textbf{Feature Correspondences} \\
\includegraphics[width=\fsz, height=\fsz]{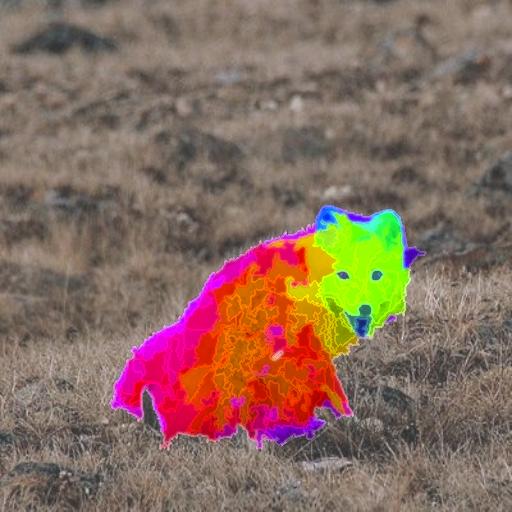} &
\includegraphics[width=\fsz, height=\fsz]{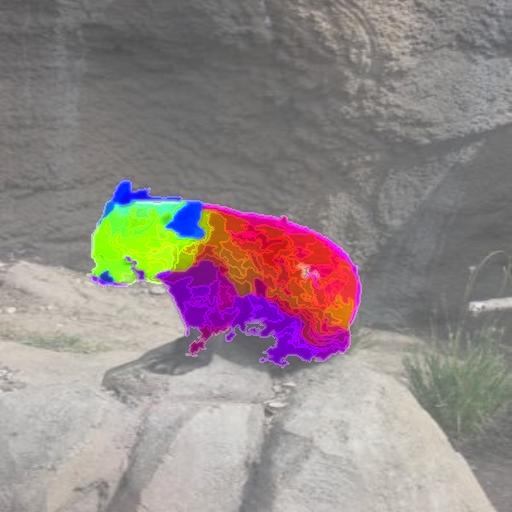} &
\includegraphics[width=\fsz, height=\fsz]{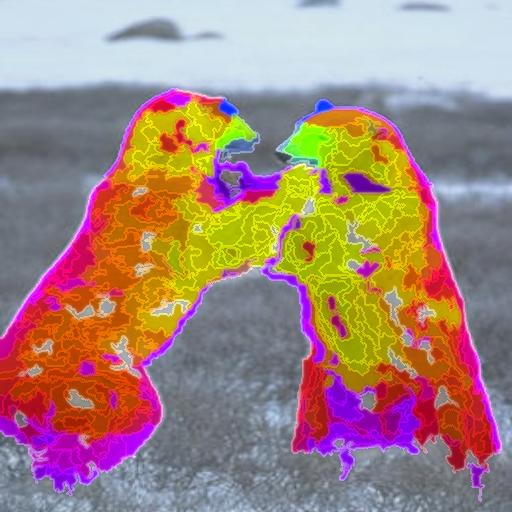} &
\includegraphics[width=\fsz, height=\fsz]{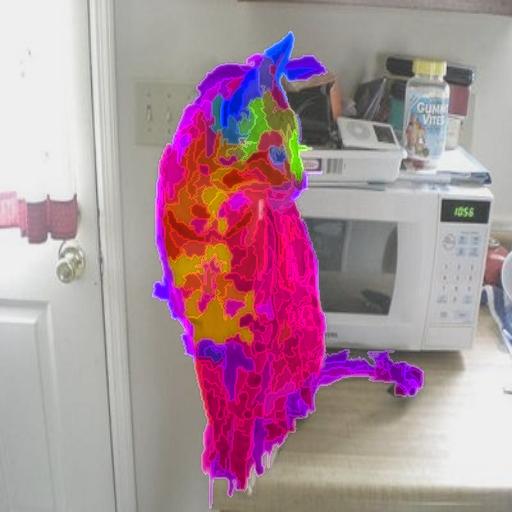} &
\includegraphics[width=\fsz, height=\fsz]{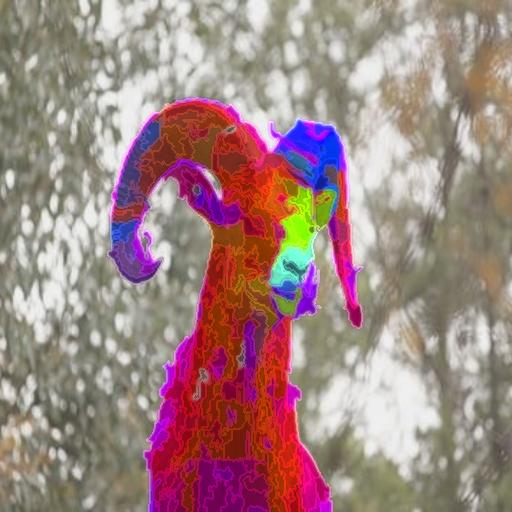} \\
\includegraphics[width=\fsz, height=\fsz]{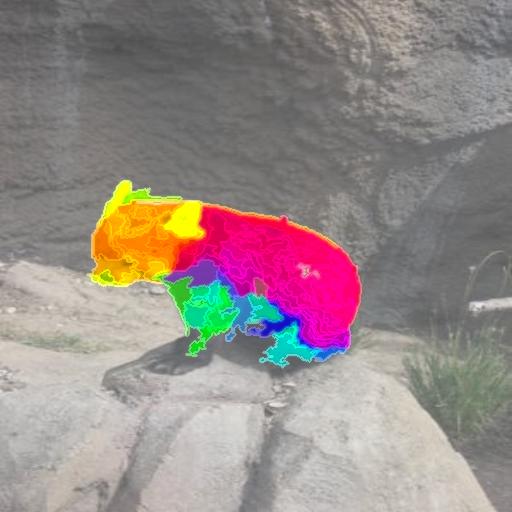} &
\includegraphics[width=\fsz, height=\fsz]{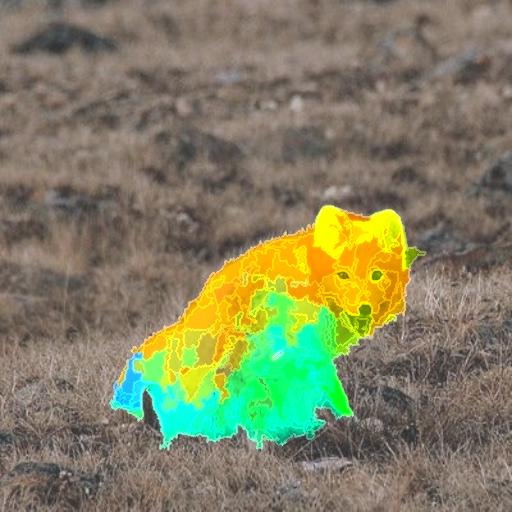} &
\includegraphics[width=\fsz, height=\fsz]{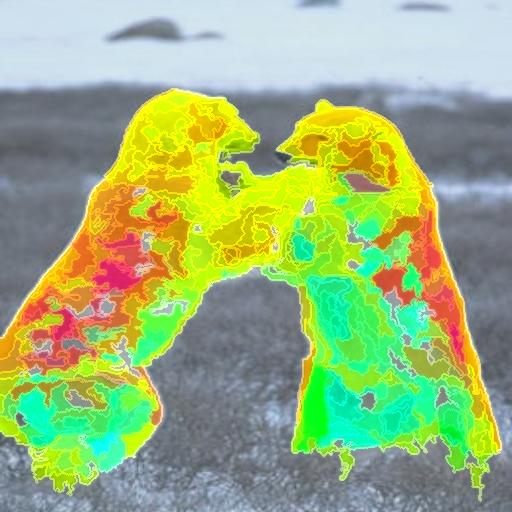} &
\includegraphics[width=\fsz, height=\fsz]{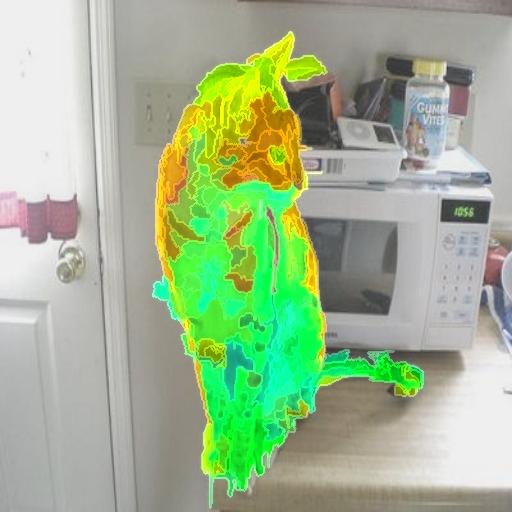} &
\includegraphics[width=\fsz, height=\fsz]{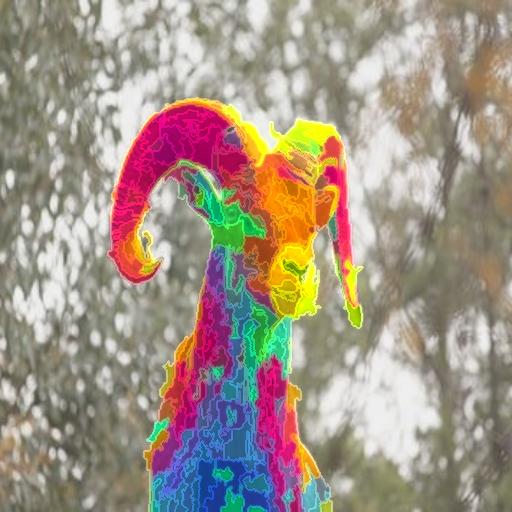} \\
\includegraphics[width=\fsz, height=\fsz]{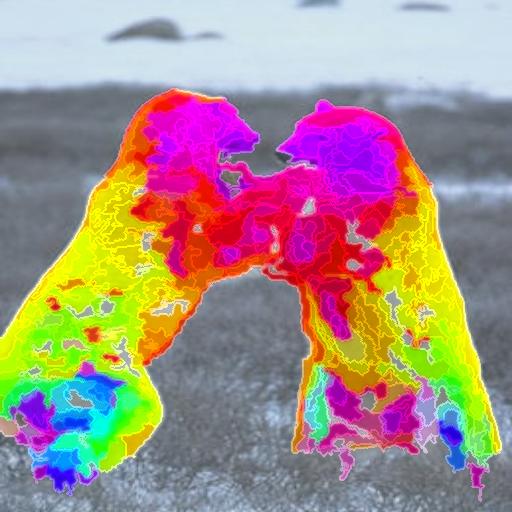} &
\includegraphics[width=\fsz, height=\fsz]{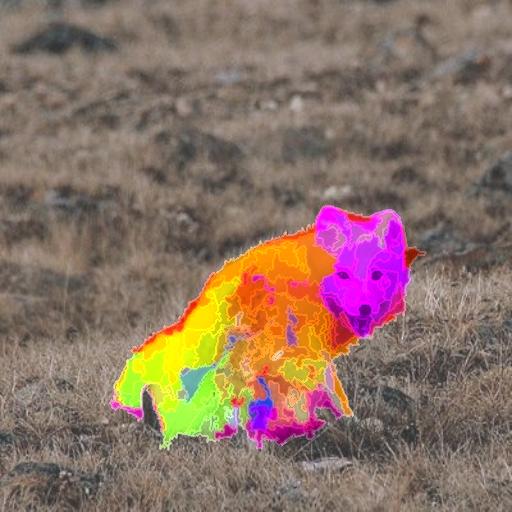} &
\includegraphics[width=\fsz, height=\fsz]{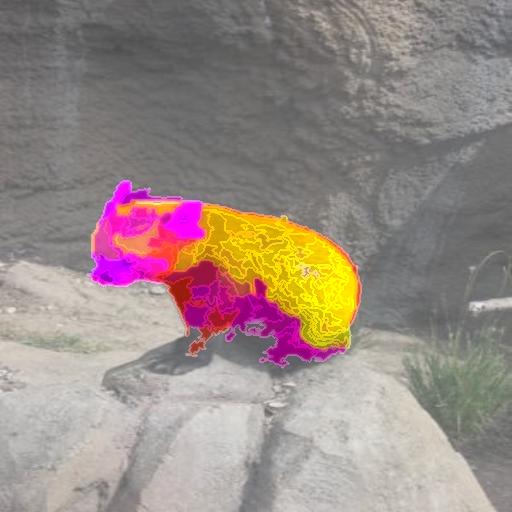} &
\includegraphics[width=\fsz, height=\fsz]{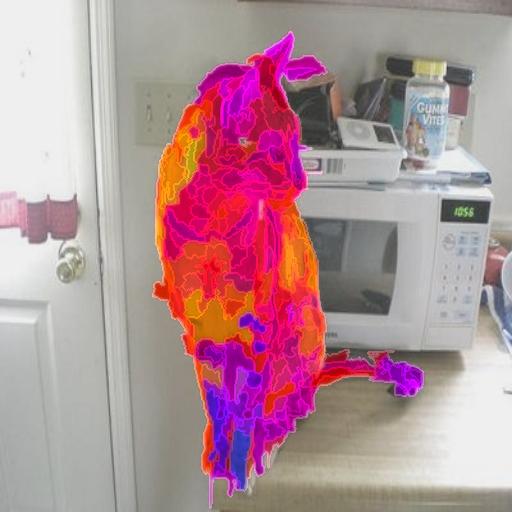} &
\includegraphics[width=\fsz, height=\fsz]{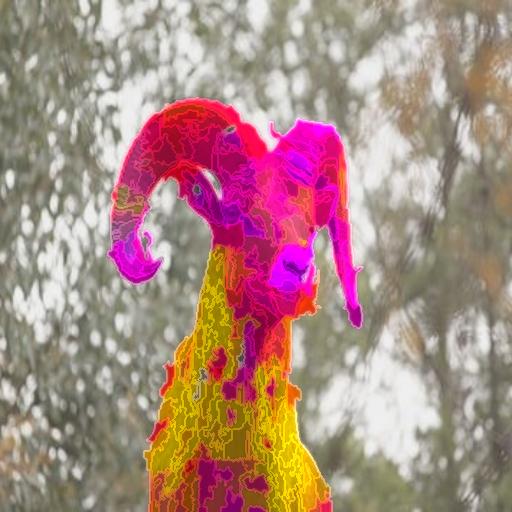} \\
\includegraphics[width=\fsz, height=\fsz]{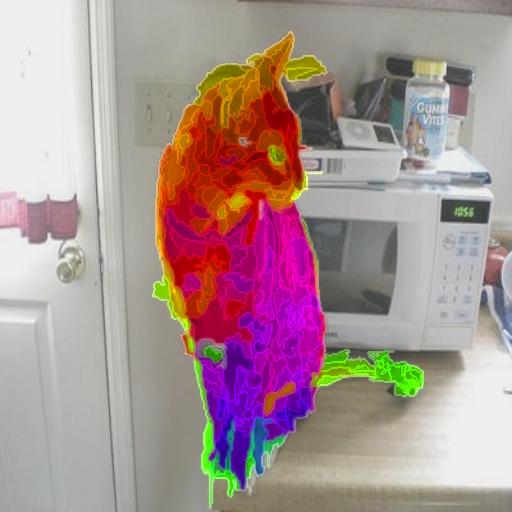} &
\includegraphics[width=\fsz, height=\fsz]{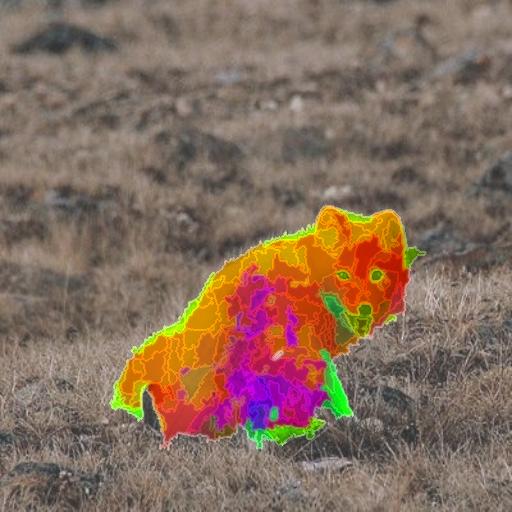} &
\includegraphics[width=\fsz, height=\fsz]{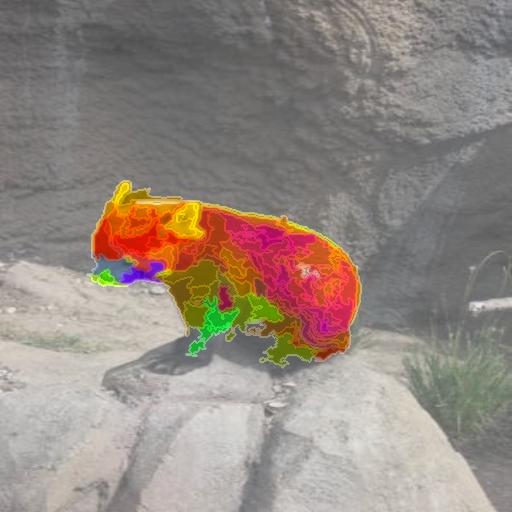} &
\includegraphics[width=\fsz, height=\fsz]{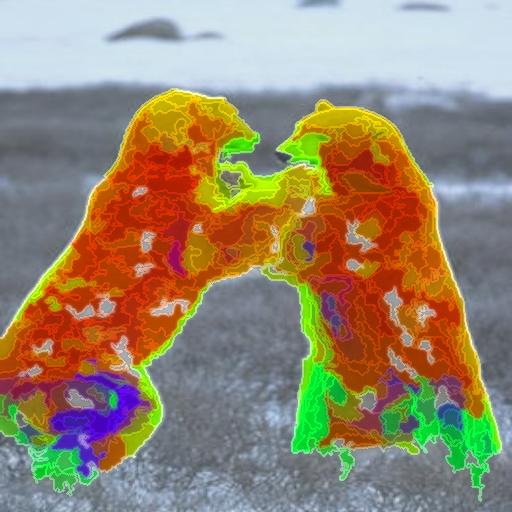} &
\includegraphics[width=\fsz, height=\fsz]{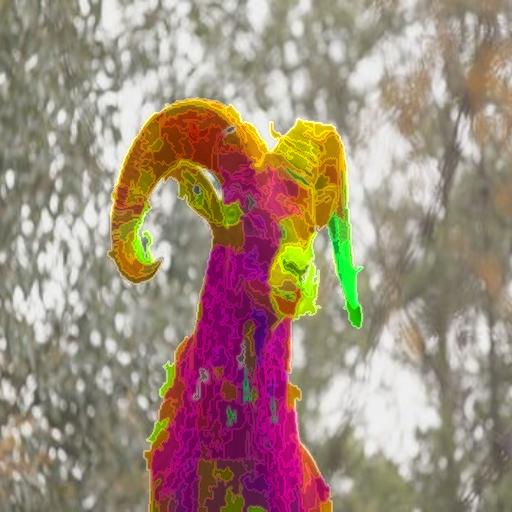} \\
\includegraphics[width=\fsz, height=\fsz]{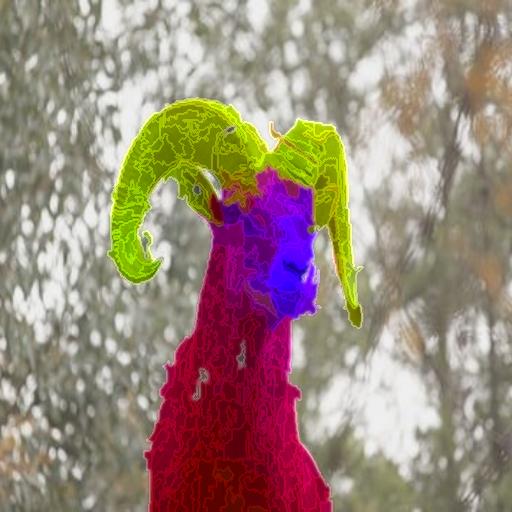} &
\includegraphics[width=\fsz, height=\fsz]{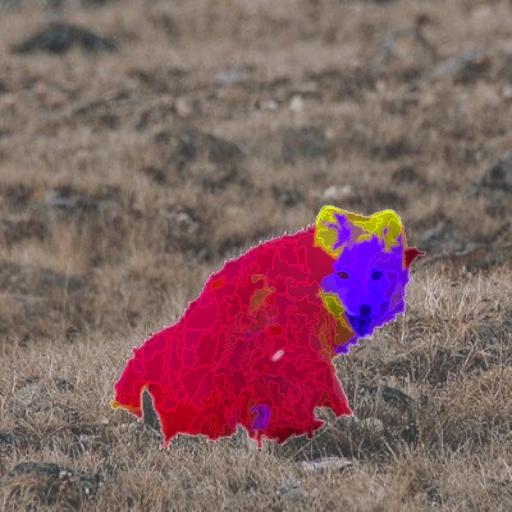} &
\includegraphics[width=\fsz, height=\fsz]{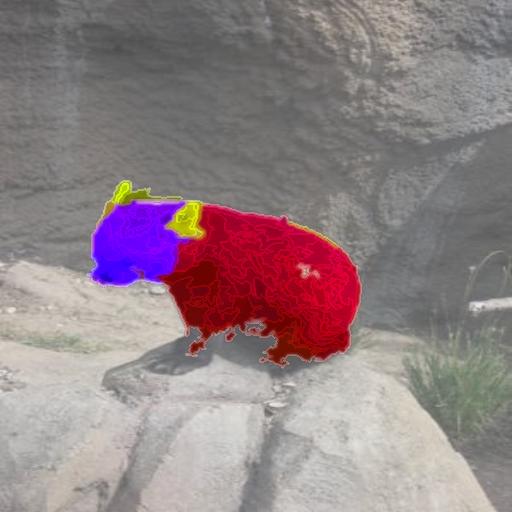} &
\includegraphics[width=\fsz, height=\fsz]{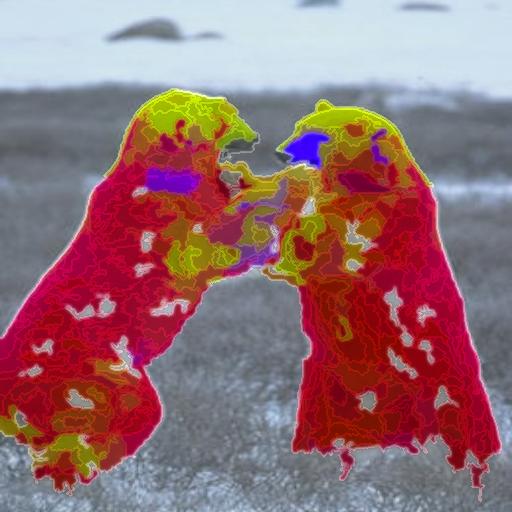} &
\includegraphics[width=\fsz, height=\fsz]{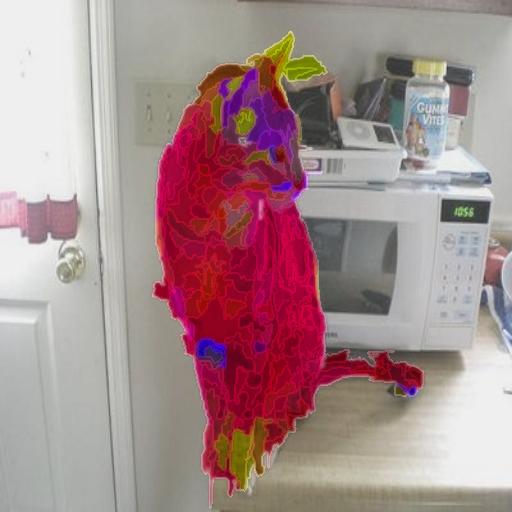}
\end{tblr}
\caption{
Visualization of feature correspondences from source features from superpixel tokens (left) to target images (right). Images contain different classes (species) under the common hypernym ``\textit{mammal}''. The second row (red panda) illustrates a case where the visualized feature mappings exhibit less structure than in the other examples.
}
\label{fig:featcorr_ext2}
\end{figure}

\section{Extended Discussion on Classification}
\label{sec:ext-discussion}

\begin{table*}[tb]
  \sisetup{detect-all,
    uncertainty-separator=\pm,
    table-format=1.4(4),
    uncertainty-mode=separate,
  }
  \caption{Results w.\ CI (95\%) for models with RViT tokenizers (5 runs).} 
  \label{tab:voronoi_uncertainty}
  \scriptsize
  \centering
  \begin{tabular}{l@{ }@{ }l@{ }@{ }l@{ }@{ }cSSSS}
    \toprule
    \multicolumn{4}{c}{ViT Model} & \multicolumn{1}{c}{\textsc{IN1k} } & \multicolumn{1}{c}{\textsc{INReaL} } & \multicolumn{1}{c}{\textsc{Cifar100} } & \multicolumn{1}{c}{\textsc{Caltech256} } \\
    \cmidrule(r){1-4} \cmidrule(r){5-5} \cmidrule(r){6-6} \cmidrule(r){7-7} \cmidrule(r){8-8}
    Name   &  Tok.  & Feat.  & Grad.  & {Lin.}  & {Lin.}  & {Lin.} & {Lin.} \\
    \midrule
    RViT-S16 & RV    & Intp.  &  \xmark  & .7669(0.0002)  & 0.8285(0.0003) & 0.8557(0.0028) & 0.8521(0.0007)  \\
    RViT-S16 & RV    & Intp.  &  \cmark  & .7593(0.0003)  & 0.8183(0.0002) & 0.8563(0.0032) & 0.8558(0.0006) \\
    \midrule
    RViT-B16 & RV    & Intp.  &  \xmark  & .7878(0.0002)  & 0.8436(0.0002) & 0.8941(0.0043) & 0.8731(0.0007)\\
    RViT-B16 & RV    & Intp.  &  \cmark  & .7892(0.0002)  & 0.8414(0.0001) & 0.8875(0.0030) & 0.8644(0.0006)\\
    \bottomrule
  \end{tabular}
\end{table*}

Certain interesting observations can be made from our results in Table~\mainref{tab:results-main}.
Firstly, random Voronoi tessellations perform better than data-driven superpixels for gradient excluding features, and
despite its inherent stochasticity, tokenization with random Voronoi tessellations proves to be a relatively effective strategy, and demonstrate surprisingly consistent results over prediction tasks as reported in Table~\ref{tab:voronoi_uncertainty}.
To account for the stochasticity in validation, we compute accuracy scores over five runs and report 95\% confidence intervals in Table~\ref{tab:voronoi_uncertainty}.
We find that the segmentations based on the Voronoi tessellations produces remarkably consistent results over the validation set.

Additionally we note that gradient including tokenizers perform comparatively worse for small (S) models.
This is particularly noteworthy, since the gradient features are essentially an added set of features to the model.
We speculate that this could be an artifact of over-fitting on information-dense features, at the expense of the utility of the canonical pixel features.

\else
    \AddToHook{enddocument/afteraux}{%
        \immediate\write18{
        cp output.aux main.aux
        }%
    }
\fi
\end{document}